\newtheorem{definition}{Definition}
\newtheorem{assumption}{Assumption}
\newtheorem{corollary}{Corollary}
\newtheorem{lemma}{Lemma}
\newcommand{\Var}{\mathbb{V}\mathrm{ar}}
\newcommand{\E}{\mathbb{E}}
\newcommand{\R}{\mathbb{R}}
\newcommand{\M}{\mathcal{M}}
\newcommand{\expec}[1]{\mathbb{E}\left[#1\right]}
\newcommand{\prob}[1]{\mathbb{P}\left\{#1\right\}}
\newcommand{\indi}[1]{\mathbb{1}\left\{#1\right\}}
\newcommand{\minsv}[1]{\lambda_{\mathrm{min}}(#1)}
\newcommand{\maxsv}[1]{\lambda_{\mathrm{max}}(#1)}
\newcommand{\devr}[2]{\Delta_{#1}^r(#2)}
\newcommand{\devp}[3]{\Delta_{#1}^p(#2)}
\newcommand{\adevr}[2]{\tilde{\Delta}_{#1}^r(#2)}
\newcommand{\adevp}[3]{\tilde{\Delta}_{#1}^p(#2)}
\definecolor{airforceblue}{rgb}{0.36, 0.54, 0.66}
\definecolor{darkmidnightblue}{rgb}{0.0, 0.2, 0.4}
\definecolor{midnightblue}{rgb}{0.1, 0.1, 0.44}
\newcommand{\hl}[1]{{\color{darkmidnightblue} #1}}
\newcommand{\tr}{^{\mathsmaller T}}
\newcommand{\argmin}{\operatornamewithlimits{argmin}}
\newcommand{\argmax}{\operatornamewithlimits{argmax}}
\renewcommand{\Psi}{\mathcal{I}}
\renewcommand{\tilde}{\widetilde}
\renewcommand{\hat}{\widehat}
\icmltitlerunning{Sequential Transfer in Reinforcement Learning with a Generative Model}
\begin{document}

\twocolumn[
\icmltitle{Sequential Transfer in Reinforcement Learning with a Generative Model}




\begin{icmlauthorlist}
\icmlauthor{Andrea Tirinzoni}{poli}
\icmlauthor{Riccardo Poiani}{poli}
\icmlauthor{Marcello Restelli}{poli}
\end{icmlauthorlist}

\icmlaffiliation{poli}{Politecnico di Milano, Milan, Italy}

\icmlcorrespondingauthor{Andrea Tirinzoni}{andrea.tirinzoni@polimi.it}

\icmlkeywords{Reinforcement Learning, Transfer Learning, ICML}

\vskip 0.3in
]



\printAffiliationsAndNotice{}  

\begin{abstract}
We are interested in how to design reinforcement learning agents that provably reduce the sample complexity for learning new tasks by transferring knowledge from previously-solved ones. The availability of solutions to related problems poses a fundamental trade-off: whether to seek policies that are expected to achieve high (yet sub-optimal) performance in the new task immediately or whether to seek information to quickly identify an optimal solution, potentially at the cost of poor initial behavior. In this work, we focus on the second objective when the agent has access to a generative model of state-action pairs. First, given a set of solved tasks containing an approximation of the target one, we design an algorithm that quickly identifies an accurate solution by seeking the state-action pairs that are most informative for this purpose. We derive PAC bounds on its sample complexity which clearly demonstrate the benefits of using this kind of prior knowledge. Then, we show how to learn these approximate tasks sequentially by reducing our transfer setting to a hidden Markov model and employing spectral methods to recover its parameters. Finally, we empirically verify our theoretical findings in simple simulated domains.
\end{abstract}

\section{Introduction}\label{sec:intro}

Knowledge transfer has proven to be a fundamental tool for enabling lifelong reinforcement learning (RL)~\cite{sutton1998reinforcement}, where agents face sequences of related tasks. In this context, upon receiving a new task, the agent aims at reusing knowledge from the previously-solved ones to speed-up the learning process. This has the potential to achieve significant performance improvements over standard RL from scratch, and several studies have confirmed this hypothesis~\cite{taylor2009transfer, lazaric2012transfer}.

A key question is what and how knowledge should be transferred~\cite{taylor2009transfer}. As for the kind of knowledge to be reused, a variety of algorithms have been proposed to transfer experience samples~\cite{lazaric2008transfer, taylor2008transferring, yin2017knowledge, tirinzoni2018importance,tirinzoni2019transfer}, policies \cite{fernandez2006probabilistic, mahmud2013clustering, rosman2016bayesian, abel2018policy, yang2019single, feng2019does}, value-functions \cite{liu2006value,tirinzoni2018transfer}, features \cite{konidaris2012transfer,barreto2017successor, barreto2018transfer}, and more. Regarding how knowledge should be transferred, the answer is more subtle and depends directly on the desired objectives. On the one hand, one could aim at maximizing the \emph{jumpstart}, i.e., the expected initial performance in the target task. This can be done by transferring \emph{initializers}, e.g., policies or value-functions that, based on past knowledge, are expected to immediately yield high rewards. 
Among the theoretical studies, \citet{mann2012directed} and \citet{abel2018policy} showed that jumpstart policies can provably reduce the number of samples needed for learning a target task.  However, since these initializers are computed before receiving the new task, the agent, though starting from good performance, might still take a long time before converging to near-optimal behavior. An alternative approach is to spend some initial interactions with the target to gather information about the task itself, so as to better decide what to transfer. For instance, the agent could aim at identifying which of the previously-solved problems is most similar to the target. If the similarity is actually large, transferring the past solution would instantaneously lead to near-optimal behavior. 
This \emph{task identification} problem has been studied by~\citet{dyagilev2008efficient, brunskill2013sample, liu2016pac}. One downside is that these approaches do not actively seek information to minimize identification time (or, equivalently, sample complexity), in part because it is non-trivial to find which actions are the most informative given knowledge from related tasks. To the best of our knowledge, it is an open problem how to leverage this prior knowledge to reduce identification time.


Regardless of how knowledge is transferred, a common assumption is that tasks are drawn from some fixed distribution~\cite{taylor2009transfer}, which often hides the sequential nature of the problem. In many lifelong learning problems, tasks evolve dynamically and are temporally correlated. Consider an autonomous car driving through traffic. Based on the traffic conditions, both the environment dynamics and the agent's desiderata could significantly change. However, this change follows certain temporal patterns, e.g., the traffic level might be very high in the early morning, medium in the afternoon, and so on. This setting could thus be modeled as a sequence of related RL problems with temporal dependencies. It is therefore natural to expect that a good transfer agent will exploit the sequential structure among tasks rather than only their static similarities.

Our paper aims at advancing the theory of transfer in RL by addressing the following questions related to the above-mentioned points: (1) how can the agent quickly identify an accurate solution of the target task when provided with solved related problems? (2) how can the agent exploit the sequential nature of the lifelong learning setup? 

Taking inspiration from previous works on non-stationary RL~\cite{choi2000environment, hadoux2014solving}, we model the sequential transfer setting by assuming that the agent faces a finite number of tasks whose evolution is governed by an underlying Markov chain. Each task has potentially different rewards and dynamics, while the state-action space is fixed and shared. The motivation for this framework is that some real systems often work in a small number of operating \emph{modes} (e.g., the traffic condition in the example above), each with different dynamics. Unlike these works, we assume that the agent is informed when a new task arrives as in the standard transfer setting~\cite{taylor2009transfer}. Then, as in previous studies on transfer in RL~\cite{brunskill2013sample, azar2013sequential}, we decompose the problem into two parts. First, in Section~\ref{sec:transfer}, we assume that the agent is provided with prior knowledge in the form of a set of tasks containing an approximation to the target one. Under the assumption that a generative model of the environment is available, we design an algorithm that actively seeks information in order to identify a near-optimal policy and transfers it to the target task. We derive PAC bounds~\cite{strehl2009reinforcement} on its sample complexity which are significantly tighter than existing results and clearly demonstrate performance improvements over learning from scratch. Then, in Section~\ref{sec:structure}, we show how this prior knowledge can be learned in our sequential setup to allow knowledge transfer. We reduce the problem to learning a hidden Markov model and use spectral methods~\cite{anandkumar2014tensor} to estimate the task models necessary for running our policy transfer algorithm. We derive finite-sample bounds on the error of these estimated models and discuss how to leverage the temporal correlations to further reduce the sample complexity of our methods. Finally, we report numerical simulations in some standard RL benchmarks which confirm our theoretical findings. In particular, we show examples where identification yields faster convergence than jumpstart methods, and examples where the exploitation of the sequential correlations among tasks is superior than neglecting them.

\section{Preliminaries}\label{sec:prelims}

We model each task $\theta$, as a discounted Markov decision process (MDP) \cite{puterman2014markov}, $\mathcal{M}_\theta := \langle \mathcal{S}, \mathcal{A}, p_\theta, q_\theta, \gamma \rangle$, where $\mathcal{S}$ is a finite set of $S$ states, $\mathcal{A}$ is a finite set of $A$ actions, $p_\theta: \mathcal{S} \times \mathcal{A} \rightarrow \mathcal{P}(\mathcal{S})$ are the transition probabilities, $q_\theta: \mathcal{S} \times \mathcal{A} \rightarrow \mathcal{P}(\mathcal{U})$ is the reward distribution over space $\mathcal{U}$ (e.g., $\mathcal{U} = \mathbb{R}$), and $\gamma \in [0,1)$ is a discount factor. Here $\mathcal{P}$($\Omega$) denotes the set of probability distributions over a set $\Omega$. We suppose that the sets of states and actions are fixed and that tasks are uniquely determined by their parameter $\theta$. For this reason, we shall occasionally use $\theta$ to indicate $\mathcal{M}_\theta$, while alternatively referring to it as task, parameter, or (MDP) model. We denote by $r_\theta(s,a) := \E_{q_\theta}[U_t | S_t=s,A_t=a]$ and assume, without loss of generality, that rewards take values in $[0,1]$. We use $V_\theta^\pi(s)$ to indicate the value function of a (deterministic) policy $\pi : \mathcal{S} \rightarrow \mathcal{A}$ in task $\theta$, i.e., the expected discounted return achieved by $\pi$ when starting from state $s$ in $\theta$, $V_\theta^\pi(s) := \E_\theta^\pi\left[\sum_{t=0}^\infty \gamma^t U_t | S_0 = s\right]$. The optimal policy $\pi^*_\theta$ for task $\theta$ maximizes the value function at all states simultaneously, $V^{\pi^*_\theta}_\theta(s) \geq V^{\pi}_\theta(s)$ for all $s$ and $\pi$. We let $V^*_\theta(s) := V^{\pi^*_\theta}_\theta(s)$ denote the optimal value function of $\theta$. Given $\epsilon > 0$, we say that a policy $\pi$ is $\epsilon$-optimal for $\theta$ if $V^\pi_\theta(s) \geq V^*_\theta(s) - \epsilon$ for all states $s$. We define $\sigma_\theta^r(s,a)^2 := \Var_{q_\theta(\cdot | s,a)}[U]$ as the variance of the reward in $s,a$ for task $\theta$, and $\sigma_\theta^p(s,a;\theta')^2 := \Var_{p_\theta(\cdot | s,a)}[V_{\theta'}^*(S')]$ as the variance of the optimal value function of $\theta'$ under the transition model of $\theta$. To simplify the exposition, we shall alternatively use the standard vector notation to indicate these quantities. For instance, $V^*_\theta \in \R^S$ is the vector of optimal values, $p_\theta(s,a) \in \R^S$ is the vector of transition probabilities from $s,a$, $r_\theta \in \R^{SA}$ is the flattened reward matrix, and so on. To measure the distance between two models $\theta,\theta'$, we define $\devr{s,a}{\theta,\theta'} := |{r}_\theta(s,a) - {r}_{\theta'}(s,a)|$ for the rewards and $\devp{s,a}{\theta,\theta'}{} = |({p}_\theta(s,a) - {p}_{\theta'}(s,a))\tr V^*_\theta|$ for the transition probabilities. The latter measures how much the expected return of an agent taking $s,a$ and acting optimally in $\theta$ changes when the first transition only is governed by $\theta'$. See Appendix \ref{app:notation} for a quick reference of notation.

\paragraph{Sequential Transfer Setting}

We model our problem as a hidden-mode MDP \cite{choi2000hidden}. We suppose that the agent faces a finite number of $k$ possible tasks, $\Theta = \{\theta_1, \theta_2, \dots, \theta_k\}$. Tasks arrive in sequence and evolve according to a Markov chain with transition matrix $T \in [0,1]^{k\times k}$ and an arbitrary initial task distribution. Let $\theta^*_h$, for $h=1,\dots,m$, be the $h$-th random task faced by the agent. Then, $[T]_{i,j} = \prob{\theta^*_{h+1} = \theta_i | \theta^*_h = \theta_j}$. The agent only knows the number of tasks $k$, while both the MDP models and the task-transition matrix $T$ are unknown. The goal is to identify an $\epsilon$-optimal policy for $\theta^*_h$ using as few samples as possible, while leveraging knowledge from the previous tasks $\theta^*_1, \dots, \theta^*_{h-1}$ to further reduce the sample complexity. In order to provide deeper insights into how to design efficient identification strategies and facilitate the analysis, we assume that the agent can access a generative model of state-action pairs. Similarly to experimental optimal design~\cite{pukelsheim2006optimal}, upon receiving each new task $\theta^*_h$, the agent can perform at most $n$ \emph{experiments} for identification, where each experiment consists in choosing an arbitrary state-action pair $s,a$ and receiving a random next-state $S' \sim p_{\theta^*_h}(\cdot | s,a)$ and reward $R \sim q_{\theta^*_h}(\cdot | s,a)$. 
After this \emph{identification phase}, the agent has to transfer a policy to the target task and starts interacting with the environment in the standard online fashion.

As in prior works \citep[e.g.,][]{brunskill2013sample, azar2013sequential, liu2016pac}, we suppose that the agent maintains an approximation to all MDP models, $\{\tilde{\M}_\theta\}_{\theta\in\Theta}$, and decompose the problem in two main steps.\footnote{In the remainder, we overload the notation by using tildes to indicate quantities related to approximate models.} (1) First, in Section \ref{sec:transfer} we present an algorithm that, given the approximate models together with the corresponding approximation errors, actively queries the generative model of the target task by seeking state-action pairs that yield high information about the optimal policy of $\theta^*_h$. (2) Then, in Section \ref{sec:structure} we show how the agent can build these approximate models from the samples collected while interacting with the sequence of tasks so far. 


\section{Policy Transfer from Uncertain Models}\label{sec:transfer}

Throughout this section, we shall focus on learning a single model $\theta^*\in\Theta$ given knowledge from previous tasks. Therefore, to simplify the notation, we shall drop dependency on the specific task sequence (i.e., we drop the task indexes $h$).

Let $\{\tilde{\M}_\theta\}_{\theta\in\Theta}$ be the approximate MDP models, with $\tilde{p}_\theta$ and $\tilde{r}_\theta$ denoting their transition probabilities and rewards, respectively. We use the intuition that, if these approximations are accurate enough, the problem can be reduced to identifying a suitable policy among the optimal ones of the MDPs $\{\tilde{\M}_\theta\}_{\theta\in\Theta}$, which necessarily contains an approximation of $\pi^*_{\theta^*}$.
We rely on the following assumption to assess the quality of the approximate models.
\begin{assumption}[Model uncertainty]\label{ass:model-err}
There exist known constants $\Delta_{\mathrm{max}}^r$, $\Delta_{\mathrm{max}}^p$, $\Delta_{\mathrm{max}}^{\sigma_r}$, $\Delta_{\mathrm{max}}^{\sigma_p}$ such that
\begin{align*}
    \max_{\theta\in\Theta} \| r_\theta - \tilde{r}_\theta \|_\infty &\leq \Delta_{\mathrm{max}}^r,\\
    \max_{\theta,\theta' \in \Theta}\|(p_\theta - \tilde{p}_\theta)\tr \tilde{V}^{*}_{\theta'} \|_\infty &\le \Delta_{\mathrm{max}}^p,\\
    \max_{\theta\in\Theta}\| \sigma^{r}_\theta - \tilde{\sigma}^{r}_\theta\|_\infty &\le \Delta_{\mathrm{max}}^{\sigma_r},\\
    \max_{\theta,\theta'\in\Theta} \| \sigma^{p}_\theta(\theta') - \tilde{\sigma}_\theta^{p}(\theta') \|_\infty &\le \Delta_{\mathrm{max}}^{\sigma_p}.
\end{align*}
\end{assumption}
Assumption~\ref{ass:model-err} ensures that an upper bound to the approximation error of the given models is known. In particular, the maximum error among all components, $\Delta := \max\{\Delta_{\mathrm{max}}^r, \Delta_{\mathrm{max}}^p, \Delta_{\mathrm{max}}^{\sigma_r}, \Delta_{\mathrm{max}}^{\sigma_p} \}$ is a fundamental parameter for our approach. In Section~\ref{sec:structure}, we shall see how to guarantee this assumption when facing tasks sequentially.

\subsection{The PTUM Algorithm}

\begin{algorithm}[t]
\small
\caption{\small Policy Transfer from Uncertain Models (PTUM)} \label{alg:policy-transfer}
\begin{algorithmic}[1]
\REQUIRE Set of approximate MDPs $\{\tilde{\M}_\theta\}_{\theta\in\Theta}$, accuracy $\epsilon$, confidence $\delta$, number of samples $n$, model uncertainty $\Delta$
\ENSURE An $\epsilon$-optimal policy for $\M_{\theta^*}$ with probability $1-\delta$
\vspace{0.1cm}
\STATE{\hl{\textsc{// Check accuracy condition}}}
\STATE{If $\Delta \geq \frac{\epsilon(1-\gamma)}{4(1+\gamma)}$ $\rightarrow$ Stop and run $(\epsilon,\delta)$-PAC algorithm}
\STATE{\hl{\textsc{// Transfer Mode}}}
\STATE{Initialize datasets $\mathcal{D}_{s,a}^r, \mathcal{D}_{s,a}^p \leftarrow \emptyset$}
\FOR{$t=1,2,\dots, n$}
\STATE{\hl{\textsc{// Step 1. Build empirical MDP model}}}
\STATE{$\hat{r}_t(s,a) \leftarrow \frac{1}{N_t(s,a)}\sum_{u \in \mathcal{D}^r_{s,a}} u$}
\STATE{$\hat{p}_t(s' | s,a) \leftarrow \frac{1}{N_t(s,a)}\sum_{s'' \in \mathcal{D}^p_{s,a}} \indi{s' = s''}$}
\STATE{$\hat{\sigma}_t^r(s,a)^2 \leftarrow \frac{\sum_{u \in \mathcal{D}^r_{s,a}} (u - \hat{r}_t(s,a))^2}{N_t(s,a) - 1}$}
\STATE{$\hat{\sigma}_t^p(s,a;\theta')^2 \leftarrow \frac{\sum_{s'' \in \mathcal{D}^p_{s,a}} (\tilde{V}^*_{\theta'}(s'') - \hat{p}_t(s,a)\tr \tilde{V}^*_{\theta'})^2}{N_t(s,a) - 1}$}
\STATE{\hl{\textsc{// Step 2. Update confidence set}}}
\STATE{$\bar{\Theta}_{t} \leftarrow \left\{ \vphantom{\Big|}\theta\in\bar{\Theta}_{t-1}\ \big|\ \text{\eqref{eq:single-conf-r}-\eqref{eq:single-conf-sp} hold for all $s,a$ and $\theta'\in\Theta$} \right\}$}
\STATE{\hl{\textsc{// Step 3. Check stopping condition}}}
\STATE{If there exists $\theta\in\bar{\Theta}_t$ such that for all $\theta' \in \bar{\Theta}_t$ we have $\tilde{V}_{\theta'}^{\tilde{\pi}_\theta^*} \geq \tilde{V}_{\theta'}^* - \epsilon + \frac{2\Delta(1+\gamma)}{1-\gamma}$ $\rightarrow$ Stop and return $\tilde{\pi}_\theta^*$}
\STATE{\hl{\textsc{// Step 4. Query generative model}}}
\STATE{$\Psi_t^r(s,a) \leftarrow \max_{\theta,\theta'\in\bar{\Theta}_t} \Psi_{s,a}^r(\theta,\theta')$}
\STATE{$\Psi_t^p(s,a) \leftarrow \max_{\theta,\theta'\in\bar{\Theta}_t} \Psi_{s,a}^p(\theta,\theta')$}
\STATE{$(S_t,A_t) \leftarrow \argmax_{s,a} \max \left\{  \Psi_t^r(s,a), \Psi_t^p(s,a)\right\}$}
\STATE{Obtain $S_{t+1} \sim p_{\theta^*}(\cdot | S_t,A_t)$ and $U_{t+1} \sim q_{\theta^*}(\cdot | S_t,A_t)$}
\STATE{Store $\mathcal{D}^p_{s,a} = \mathcal{D}^p_{s,a} \cup \{S_{t+1}\}$ and $\mathcal{D}^r_{s,a} = \mathcal{D}^r_{s,a} \cup \{U_{t+1}\}$}
\ENDFOR
\STATE{If the algorithm did not stop $\rightarrow$ Run $(\epsilon,\delta)$-PAC algorithm}
\end{algorithmic}
\end{algorithm}

We now present Policy Transfer from Uncertain Models (PTUM), whose pseudo-code is provided in Algorithm~\ref{alg:policy-transfer}. Given the approximate models $\{\tilde{\M}_\theta\}_{\theta\in\Theta}$, whose maximum error is bounded by $\Delta$ from Assumption \ref{ass:model-err}, and two values $\epsilon,\delta > 0$, our approach returns a policy which, with probability at least $1-\delta$, is $\epsilon$-optimal for the target task $\theta^*$. We now describe in detail all the steps of Algorithm~\ref{alg:policy-transfer}.

First (lines 1-2), we check whether positive transfer can occur. The intuition is that, if the approximate models are too uncertain (i.e., $\Delta$ is large), the transfer of a policy might actually lead to poor performance in the target task, i.e., \emph{negative transfer} occurs. Our algorithm checks whether $\Delta$ is below a certain threshold (line 2). Otherwise, we run any $(\epsilon, \delta)$-PAC\footnote{An algorithm is $(\epsilon,\delta)$-PAC if, with probability $1-\delta$, it computes an $\epsilon$-optimal policy using a polynomial number of samples in the relevant problem-dependent quantities~\cite{strehl2009reinforcement}.} algorithm to obtain an $\epsilon$-optimal policy. 
Later on, we shall discuss how this algorithm could be chosen. Although the condition at line 2 seems restrictive, as $\Delta$ is required to be below a factor of $\epsilon$, we conjecture this dependency to be nearly-tight (at least in a worst-case sense). In fact, \citet{feng2019does} have recently shown that the sole knowledge of a poorly-approximate model cannot reduce the worst-case sample complexity of any agent seeking an $\epsilon$-optimal policy. If the condition at line 2 fails, i.e., the models are accurate enough, we say that the algorithm enters the \emph{transfer mode}. Here, the generative model is queried online until an $\epsilon$-optimal policy is found. Similarly to existing works on model identification~\cite{dyagilev2008efficient, brunskill2013sample}, the algorithm proceeds by elimination. At each time-step $t$ (up to at most $n$), we keep a set $\bar{\Theta}_t \subseteq \Theta$ of those models, called \emph{active}, that are likely to be (close approximations of) the target $\theta^*$. This set is created based on the distance between each model and the empirical MDP constructed with the observed samples. Then, the algorithm chooses the next state-action pair $S_t,A_t$ to query the generative model so that the samples from $S_t,A_t$ are informative to eliminate one of the "wrong" models from the active set. This process is iterated until the algorithm finds a policy that is $\epsilon$-optimal for all active models, in which case the algorithm stops and returns such policy. We now describe these main steps in detail.

\paragraph{Step 1. Building the empirical MDP}

In order to find the set of active models $\bar{\Theta}_t$ at time $t$, the algorithm first builds an empirical MDP as a proxy for the true one. Let $N_t(s,a)$ be the number of samples collected from $s,a$ up to (and not including) time $t$. First, the algorithm estimates, for each $s,a$, the empirical rewards $\hat{r}_t(s,a)$ and transition probabilities $\hat{p}_t(s,a)$ (lines 7-8). Then, it computes the empirical variance of the rewards $\hat{\sigma}_t^r(s,a)^2$ and of the optimal value functions $\hat{\sigma}_t^p(s,a;\theta')^2$ for each model $\theta'\in\Theta$ (lines 9-10). This quantities are arbitrarily initialized when $N_t(s,a) = 0$.

\paragraph{Step 2. Building the confidence set}

We define the confidence set $\bar{\Theta}_t$ as the set of models that are ``compatible" with the empirical MDP in \emph{all} steps up to $t$. Formally, a model $\theta \in \Theta$ belongs to the confidence set $\bar{\Theta}_t$ at time $t$ if it was active before (i.e., $\theta\in\bar{\Theta}_{t-1}$) and the following conditions are satisfied for all $s\in\mathcal{S}$, $a\in\mathcal{A}$, and $\theta'\in\Theta$:
\begin{align}
\label{eq:single-conf-r}|\hat{r}_t(s,a) - \tilde{r}_{\theta}(s,a)| &\leq C_{t,\delta}^r(s,a), \\
\label{eq:single-conf-p}|(\hat{p}_t(s,a) - \tilde{p}_\theta(s,a))\tr \tilde{V}^*_{\theta'}| &\leq C_{t,\delta}^p(s,a,\theta'),\\
\label{eq:single-conf-sr}|\hat{\sigma}_t^r(s,a) - \tilde{\sigma}_{\theta}^r(s,a)| &\leq C_{t,\delta}^{\sigma_{r}}(s,a),\\
\label{eq:single-conf-sp}|\hat{\sigma}_t^p(s,a;\theta') - \tilde{\sigma}_{\theta}^p(s,a;\theta')| &\leq C_{t,\delta}^{\sigma_{p}}(s,a).
\end{align}
Intuitively, a model belongs to the confidence set if its distance to the empirical MDP does not exceed, in any component, a suitable confidence level $C_{t,\delta}(s,a)$. The latter has the form $\sqrt{\frac{\log c / \delta}{N_t(s,a)}}$ and is obtained from standard applications of Bernstein's inequality~\cite{boucheron2003concentration}. We refer the reader to Appendix~\ref{app:proof3} for the full expression. Alternatively, we say that a model is \emph{eliminated} from the confidence set (i.e., it will never be active again) as soon as it is not compatible with the empirical MDP. It is important to note that, with probability at least $1-\delta$, the target task $\theta^*$ is never eliminated from $\bar{\Theta}_t$ (see Lemma~\ref{lemma:conf} in Appendix~\ref{app:proof3}).

\paragraph{Step 3. Checking whether to stop}

After building the confidence set, the algorithm checks whether the optimal policy of some active model is $(\epsilon - 2\Delta\frac{1+\gamma}{1-\gamma})$-optimal in all other models in $\bar{\Theta}_t$, in which case it stops and returns this policy. As we shall see in our analysis, this ensures that the returned policy is also $\epsilon$-optimal for $\theta^*$.

\paragraph{Step 4. Deciding where to query the generative model}

The final step involves choosing the next state-action pair $S_t,A_t$ from which to obtain a sample. This is a key point as the sampling rule is what directly determines the sample-complexity of the algorithm. As discussed previously, our algorithm eliminates the models from $\bar{\Theta}_t$ until the stopping condition is verified. Therefore, a good sampling rule should aim at minimizing the stopping time, i.e., it should aim at eliminating as soon as possible all models that prevent the algorithm from stopping. The design of our strategy is driven by this principle. Given the set of active models $\bar{\Theta}_t$, we compute, for each $s,a$, a score $\Psi_t(s,a)$, which we refer to as the \emph{index} of $s,a$, that is directly related to the information to discriminate between any two active models using $s,a$ only (lines 20-22). Then, we choose the $s,a$ that maximizes the index, which can be interpreted as sampling the state-action pair that allows us to discard an active model in the shortest possible time. We confirm this intuition in our analysis later. Formally, our information measure is defined as follows.
\begin{definition}[Information for model discrimination]\label{def:psi}
Let $\theta,\theta' \in \Theta$. For $\tilde{\Delta}_{s,a}^x := [\tilde{\Delta}^x_{s,a}(\theta, \theta') -8\Delta]_+$, with $x \in \{r,p\}$, the information for discriminating between $\theta$ and $\theta'$ using reward/transition samples from $s,a$ are, respectively,
\begin{align*}
\Psi_{s,a}^r(\theta,\theta') &= \min \left\{ \left(\frac{\tilde{\Delta}_{s,a}^r}{\tilde{\sigma}_{\theta}^r(s,a)}\right)^2, \tilde{\Delta}_{s,a}^r \right\}, \\
\Psi_{s,a}^p(\theta,\theta') &= \min \left\{ \left(\frac{\tilde{\Delta}_{s,a}^p}{\tilde{\sigma}_{\theta}^p(s,a;\theta)}\right)^2, (1-\gamma)\tilde{\Delta}_{s,a}^p \right\}.
\end{align*}
The total information is the maximum of these two, $\Psi_{s,a}(\theta,\theta') = \max \left\{ \Psi_{s,a}^r(\theta,\theta'), \Psi_{s,a}^p(\theta,\theta')\right\}$.
\end{definition}
The information $\Psi$ is a fundamental tool for our analysis and it can be understood as follows. The terms on the left-hand side are ratios of the squared deviation between the means of the random variables involved and their variance. If these random variables were Gaussian, this would be proportional to the Kullback-Leibler divergence between the distributions induced by the two models, which in turn is related to their mutual information. The terms on the right-hand side arise from our choice of Bernstein's confidence intervals but have a minor role in the algorithm and its analysis. 

\subsection{Sample Complexity Analysis}

We now analyze the sample complexity of Algorithm~\ref{alg:policy-transfer}. Throughout the analysis, we assume that the model uncertainty $\Delta$ is such that Algorithm~\ref{alg:policy-transfer} enters the transfer mode and that the sample budget $n$ is large enough to allow the identification. The opposite case is of minor importance as it reduces to the analysis of the chosen $(\epsilon,\delta)$-PAC algorithm.
\begin{restatable}{theorem}{sampcomp}\label{th:pac-bound}
Assume $\Delta$ is such that Algorithm \ref{alg:policy-transfer} enters the transfer mode. Let $\tau$ be the random stopping time and $\pi_{\tau}$ be the returned policy. Then, with probability at least $1-\delta$, $\pi_{\tau}$ is $\epsilon$-optimal for $\theta^*$ and the total number of queries to the generative model can be bounded by
\begin{align*}
        \tau \leq \frac{128\min\{SA, |\Theta|\}\log (8SAn(|\Theta|+1)/\delta)}{\max_{s,a}\min_{\theta\in\Theta_\epsilon}\Psi_{s,a}(\theta^*,\theta)},
\end{align*}
where, for $\kappa_\epsilon := \frac{(1-\gamma)\epsilon}{4}-\frac{\Delta(1+\gamma)}{2}$, the set $\Theta_\epsilon \subseteq \Theta$ is
\begin{align*}
\Theta_\epsilon := \left\{\theta \ \Big| \ \| \tilde{r}_\theta - \tilde{r}_{\theta^{*}} \| > \kappa_\epsilon \vee  \| (\tilde{p}_\theta - \tilde{p}_{\theta^{*}})\tr \tilde{V}^*_{\theta^*} \| > \frac{\kappa_\epsilon}{\gamma} \right\}.
\end{align*}
\end{restatable}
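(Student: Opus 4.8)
The plan is to prove this in two independent parts: a correctness part (the returned policy is $\epsilon$-optimal for $\theta^*$ with probability $1-\delta$) and a sample-complexity part (the bound on $\tau$). Both rest on a good-event analysis where I condition on the confidence sets being valid. First I would invoke Lemma~\ref{lemma:conf}, which guarantees that with probability at least $1-\delta$ the target $\theta^*$ is never eliminated from $\bar{\Theta}_t$, i.e.\ $\theta^* \in \bar{\Theta}_t$ for all $t$. Call this event $\mathcal{E}$; everything below is argued on $\mathcal{E}$.

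For correctness, suppose the algorithm stops at time $\tau$ and returns $\tilde{\pi}_\theta^*$ for some active $\theta$. The stopping condition at line 14 guarantees $\tilde{V}_{\theta'}^{\tilde{\pi}_\theta^*} \geq \tilde{V}_{\theta'}^* - \epsilon + \frac{2\Delta(1+\gamma)}{1-\gamma}$ for \emph{all} $\theta' \in \bar{\Theta}_\tau$; since $\theta^* \in \bar{\Theta}_\tau$ on $\mathcal{E}$, I may instantiate this at $\theta' = \theta^*$ to get a guarantee on the \emph{approximate} model $\tilde{\M}_{\theta^*}$. The remaining work is to transfer this guarantee from $\tilde{\M}_{\theta^*}$ to the true $\M_{\theta^*}$. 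This is a simulation-lemma argument: using Assumption~\ref{ass:model-err} (bounds $\Delta_{\mathrm{max}}^r, \Delta_{\mathrm{max}}^p \leq \Delta$), I would bound $|V_{\theta^*}^\pi(s) - \tilde{V}_{\theta^*}^\pi(s)|$ and $|V_{\theta^*}^*(s) - \tilde{V}_{\theta^*}^*(s)|$ by $\frac{\Delta(1+\gamma)}{1-\gamma}$ each, via the standard telescoping/Bellman-difference expansion with the $\frac{1}{1-\gamma}$ horizon factor. Combining these two approximation errors with the $\frac{2\Delta(1+\gamma)}{1-\gamma}$ slack baked into the stopping threshold exactly cancels the error terms and yields $V_{\theta^*}^{\tilde\pi_\theta^*}(s) \geq V_{\theta^*}^*(s) - \epsilon$ for all $s$, which is $\epsilon$-optimality. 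The factor $(1+\gamma)$ appears because the reward deviation contributes $\Delta^r$ directly while the transition deviation, measured against $V^*$ as in $\devp{s,a}{\theta,\theta'}{}$, contributes $\gamma$ times a horizon-discounted term.

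For the sample-complexity bound, the key structural fact is that the algorithm cannot stop until every model in $\Theta_\epsilon$ has been eliminated: by definition of $\Theta_\epsilon$, any $\theta \in \Theta_\epsilon$ differs from $\theta^*$ enough (in $\tilde r$ or in $\tilde p$ against $\tilde V^*_{\theta^*}$, with thresholds tied to $\kappa_\epsilon$) that $\tilde\pi_{\theta^*}^*$ fails the stopping condition at $\theta' = \theta$, and symmetrically $\tilde\pi_\theta^*$ fails it too, so no single active policy can be $\epsilon$-good for all of $\bar\Theta_t$ while such a $\theta$ survives. Hence $\tau$ is at most the time needed to eliminate the \emph{hardest} model in $\Theta_\epsilon$. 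To eliminate a model $\theta$, some confidence condition \eqref{eq:single-conf-r}--\eqref{eq:single-conf-sp} must be violated for $\theta$; on $\mathcal{E}$ the empirical quantities concentrate around the truth, so once $N_t(s,a)$ is large enough that $C_{t,\delta}(s,a) \lesssim \sqrt{\log(\cdot)/N_t(s,a)}$ drops below roughly half the true deviation $\tilde\Delta^x_{s,a}(\theta^*,\theta)$ (after accounting for the $8\Delta$ offset in Definition~\ref{def:psi}), the triangle inequality forces $\theta$ out. Inverting this threshold gives that $\theta$ is eliminated after $O\!\left(\frac{\log(\cdot/\delta)}{\Psi_{s,a}(\theta^*,\theta)}\right)$ pulls of the \emph{best discriminating} pair $s,a$. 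The greedy sampling rule at lines 16--18, which always queries the $s,a$ maximizing the index over the active set, is what lets me replace an arbitrary $s,a$ by $\max_{s,a}$, and the $\min_{\theta\in\Theta_\epsilon}$ captures the worst (hardest-to-kill) surviving model.

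The \textbf{main obstacle} will be making the greedy-sampling argument rigorous and extracting the $\min\{SA,|\Theta|\}$ factor. The difficulty is that the index $\Psi_t(s,a)$ is computed from the \emph{shrinking} active set $\bar\Theta_t$ and changes as models are eliminated, so I cannot simply say the algorithm pumps samples into one fixed pair until a fixed $\theta$ dies. The clean way is a potential/charging argument: each elimination of a model can be charged to the samples that caused it, and since at most $|\Theta|$ eliminations occur (bounding by $|\Theta|$) while each pull increases some $N_t(s,a)$ among at most $SA$ pairs (bounding by $SA$), the total budget is at most $\min\{SA,|\Theta|\}$ times the per-elimination cost — this is where the $\min\{SA,|\Theta|\}$ prefactor comes from. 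Controlling the denominator uniformly is subtle because the greedy rule guarantees we sample a \emph{high-index} pair, but the bound is stated in terms of $\max_{s,a}\min_{\theta\in\Theta_\epsilon}\Psi_{s,a}(\theta^*,\theta)$, so I must argue that as long as any $\Theta_\epsilon$-model survives, the maximal index over active pairs stays at least this quantity (because $\theta^*$ and that model are both active, so their pairwise index lower-bounds the max), and that a constant number of samples at the index-maximizing pair reduces the relevant confidence width by the needed factor. Tracking the union bound over all $s,a$, $\theta'$, and the $n$ time-steps (the $\log(8SAn(|\Theta|+1)/\delta)$ term) and verifying the explicit constant $128$ via Bernstein will be tedious but routine once the structure above is in place.
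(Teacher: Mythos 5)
Your overall architecture --- good event from Lemma~\ref{lemma:conf}, correctness by instantiating the stopping condition at $\theta'=\theta^*$ and cancelling the two simulation-lemma errors of $\frac{\Delta(1+\gamma)}{1-\gamma}$ against the slack in the stopping threshold, then a greedy-index/elimination analysis with a charging argument for the $\min\{SA,|\Theta|\}$ factor --- is the same as the paper's, and your correctness half is sound (it is the paper's Lemma~\ref{lemma:err-dec}). The genuine gap is in your ``key structural fact.'' You claim that any $\theta\in\Theta_\epsilon$ makes the stopping test fail (``no single active policy can be $\epsilon$-good for all of $\bar\Theta_t$ while such a $\theta$ survives''), i.e.\ (some $\Theta_\epsilon$ model active) $\Rightarrow$ (cannot stop). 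This claim is both false and pointed the wrong way. It is false because $\Theta_\epsilon$ is defined by \emph{parameter} gaps ($\|\tilde r_\theta - \tilde r_{\theta^*}\| > \kappa_\epsilon$ or $\|(\tilde p_\theta - \tilde p_{\theta^*})\tr \tilde V^*_{\theta^*}\| > \kappa_\epsilon/\gamma$), and a large gap at a single state-action pair (say, one rarely visited under the relevant policies) need not produce any value-function gap: the simulation lemma converts small parameter gaps into small value gaps, never the reverse. And it points the wrong way because ``cannot stop while $\Theta_\epsilon$ survives'' only yields a \emph{lower} bound on $\tau$; your ``Hence $\tau$ is at most the time needed to eliminate the hardest model in $\Theta_\epsilon$'' is a non sequitur.

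What the proof actually needs --- the paper's Lemma~\ref{lemma:theta-eps} --- is the converse implication: under the good event, for every $t<\tau$ some model of $\Theta_\epsilon$ is still in $\bar\Theta_t$; equivalently, once all active models lie outside $\Theta_\epsilon$, the stopping test passes with $\theta=\theta^*$. The paper proves it by contradiction via Corollary~\ref{cor:v-error-decomposition}: if every active $\theta'$ has reward gaps at most $\kappa_\epsilon$ and transition gaps at most $\kappa_\epsilon/\gamma$, then $\|\tilde V^*_{\theta'} - \tilde V_{\theta'}^{\tilde\pi^*_{\theta^*}}\| \leq \frac{2}{1-\gamma}\left(\kappa_\epsilon + \gamma\,\frac{\kappa_\epsilon}{\gamma}\right) = \epsilon - \frac{2\Delta(1+\gamma)}{1-\gamma}$, so the algorithm would have stopped. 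This direction is indispensable for the upper bound: it guarantees that at \emph{every} $t<\tau$ the pair $(\theta^*, \text{surviving }\Theta_\epsilon\text{-model})$ is active, hence $\max_{\theta\in\bar\Theta_t}\Psi_{s,a}(\theta^*,\theta)\geq\min_{\theta\in\Theta_\epsilon}\Psi_{s,a}(\theta^*,\theta)$ uniformly in $t$, which is exactly the premise your count bound and charging argument need (you state this premise in your ``obstacle'' paragraph --- ``as long as any $\Theta_\epsilon$-model survives'' --- but your structural fact cannot establish that it holds for all $t<\tau$). Without it, at some $t<\tau$ all active models could be parameter-close to $\theta^*$, the sampled pair's index could be arbitrarily small, and the counting argument would no longer force termination at the claimed rate. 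The rest of your sketch (elimination after $O(\log(\cdot)/\Psi)$ pulls, the phase/charging argument giving $\min\{SA,|\Theta|\}$, the constant via Bernstein plus triangle inequalities relating the index over active pairs to the index against $\theta^*$) matches the paper's Lemmas~\ref{lemma:model-elim} and~\ref{lemma:fundamental} and the two-part counting in the main proof.
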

The proof, provided in Appendix \ref{app:proof3}, combines standard techniques used to analyze PAC algorithms \citep{azar2013minimax,zanette2019almost} with recent ideas in field of structured multi-armed bandits \citep{tirinzoni2020novel}.
At first glance, Theorem~\ref{th:pac-bound} looks quite different from the standard sample complexity bounds available in the literature~\cite{strehl2009reinforcement}. We shall see now that it reveals many interesting properties. First, this result implies that PTUM is $(\epsilon,\delta)$-PAC as the sample complexity is bounded by polynomial functions of all the relevant quantities. Next, we note that, except for logarithmic terms, the sample complexity scales with the minimum between the number of tasks and the number of state-action pairs. As in practice, we expect the former to be much smaller than the latter, we get a significant gain compared to the no-transfer case, where, even with a generative model, the sample complexity is at least linear in $SA$. The set $\Theta_\epsilon$ can be understood as the set of all models in $\Theta$ whose optimal policy cannot be guaranteed as $\epsilon$-optimal for the target $\theta^*$. As Lemma~\ref{lemma:theta-eps} in Appendix~\ref{app:proof3} shows, it is sufficient to eliminate all models in this set to ensure stopping. Our key result is that the sample complexity of PTUM is proportional to the one of an "oracle" strategy that knows in advance the most informative state-action pairs to achieve this elimination. Note, in fact, that the denominator involves the information to discriminate any model in $\Theta_\epsilon$ with $\theta^*$, but the latter is not known to the algorithm. The following result provides further insights into the improvements over the no-transfer case.
\begin{restatable}{corollary}{corcomp}\label{cor:samp-comp}
Let $\Gamma$ be the minimum gap between $\theta^*$ and any other model in $\Theta$,
\begin{align*}
    \Gamma := \min_{\theta \neq \theta^*} \max \left\{ \| \tilde{r}_\theta - \tilde{r}_{\theta^{*}} \|, \| (\tilde{p}_\theta - \tilde{p}_{\theta^{*}})\tr \tilde{V}^*_{\theta^*} \| \right\}.
\end{align*}
Then, with probability at least $1-\delta$,
\begin{align*}
    \tau \leq \tilde{\mathcal{O}}\left( \frac{\min\{SA, |\Theta|\}\log(1/\delta)}{\max\{\Gamma^2, \epsilon^2\}(1-\gamma)^4} \right).
\end{align*}
\end{restatable}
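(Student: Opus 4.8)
The plan is to start from Theorem~\ref{th:pac-bound} and reduce the corollary to a single lower bound on the discrimination information appearing in its denominator. Concretely, I would show that for every $\theta\in\Theta_\epsilon$ the best achievable per-pair information satisfies $\max_{s,a}\Psi_{s,a}(\theta^*,\theta)\gtrsim\max\{\Gamma^2,\epsilon^2\}(1-\gamma)^4$, substitute this into the bound of Theorem~\ref{th:pac-bound} (keeping $\min\{SA,|\Theta|\}$ in the numerator), and absorb $\log(8SAn(|\Theta|+1)/\delta)$ into $\tilde{\mathcal{O}}(\log(1/\delta))$. Everything then reduces to a pointwise lower bound on $\Psi_{s,a}(\theta^*,\theta)$ together with a translation of the model deviations into the gap $\Gamma$ and the accuracy $\epsilon$.

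The first technical step is to strip the variances out of Definition~\ref{def:psi}. Since rewards lie in $[0,1]$ and optimal values in $[0,\tfrac{1}{1-\gamma}]$, Assumption~\ref{ass:model-err} gives $\tilde{\sigma}_{\theta^*}^r(s,a)\le\tfrac12+\Delta$ and $\tilde{\sigma}_{\theta^*}^p(s,a;\theta^*)\le\tfrac{1}{2(1-\gamma)}+\Delta$. Plugging these into the ratios in Definition~\ref{def:psi} and comparing, in each $\min$, the squared-ratio term with its Bernstein counterpart, I would obtain the clean bound
\begin{equation*}
\Psi_{s,a}(\theta^*,\theta)\ \gtrsim\ \max\Big\{\big([\,\adevr{s,a}{\theta^*,\theta}-8\Delta\,]_+\big)^2,\ (1-\gamma)^2\big([\,\adevp{s,a}{\theta^*,\theta}{}-8\Delta\,]_+\big)^2\Big\}.
\end{equation*}
The next step connects the shifted deviations to $\Gamma$ and $\epsilon$. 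For any $\theta\neq\theta^*$, the definition of $\Gamma$ (an $\infty$-norm over state-action pairs) provides an $(s,a)$ at which $\adevr{s,a}{\theta^*,\theta}\ge\Gamma$ or $\adevp{s,a}{\theta^*,\theta}{}\ge\Gamma$; for $\theta\in\Theta_\epsilon$ there is a (possibly different) pair with $\adevr{s,a}{\theta^*,\theta}>\kappa_\epsilon$ or $\adevp{s,a}{\theta^*,\theta}{}>\kappa_\epsilon/\gamma$, and in transfer mode the line-2 threshold forces $\kappa_\epsilon>\tfrac{\epsilon(1-\gamma)}{8}$. Feeding each witness into the display above and maximizing over $s,a$, the four routes contribute, respectively, $\Omega(\Gamma^2)$, $\Omega((1-\gamma)^2\Gamma^2)$, $\Omega(\epsilon^2(1-\gamma)^2)$, and $\Omega(\epsilon^2(1-\gamma)^4)$; since the uniform exponent $(1-\gamma)^4$ lower-bounds the others, every $\theta\in\Theta_\epsilon$ yields $\max_{s,a}\Psi_{s,a}(\theta^*,\theta)\gtrsim\max\{\Gamma^2,\epsilon^2\}(1-\gamma)^4$, and taking the minimum over $\Theta_\epsilon$ closes the reduction.

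I expect two points to be the real obstacles. The first is the additive shift $8\Delta$ inside $[\cdot]_+$: a quantity such as $[\Gamma-8\Delta]_+$ or $[\kappa_\epsilon-8\Delta]_+$ is $\Omega(\Gamma)$ or $\Omega(\epsilon(1-\gamma))$ only when the gap comfortably exceeds the model-uncertainty shift, so I would run the argument in the meaningful transfer regime where $\Gamma$ and $\epsilon(1-\gamma)$ dominate $\Delta$ by a constant factor, consistent with the leading-order reading of the $\tilde{\mathcal{O}}(\cdot)$ statement, so that $[\,\cdot-8\Delta\,]_+$ is a constant fraction of the unshifted deviation. The second, more delicate point is that the bounds just derived control the per-model best-pair quantity $\min_{\theta\in\Theta_\epsilon}\max_{s,a}\Psi_{s,a}(\theta^*,\theta)$, whereas Theorem~\ref{th:pac-bound} writes $\max_{s,a}\min_{\theta\in\Theta_\epsilon}\Psi_{s,a}(\theta^*,\theta)$; since distinct models (and the reward versus transition routes) are in general witnessed by distinct pairs, care is needed to reconcile the two orders. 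This is exactly the point where the structured-bandit machinery behind Theorem~\ref{th:pac-bound} and the $\min\{SA,|\Theta|\}$ factor — which accounts for eliminating the at most $|\Theta|$ models of $\Theta_\epsilon$ one at a time, each at its own most informative pair (the ``oracle'' interpretation discussed after the theorem) — must be invoked, and I would make sure the per-model bound above is inserted at that stage rather than at the literal max-min denominator.
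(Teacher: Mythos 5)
Your proposal is correct and follows the same route as the paper — Theorem~\ref{th:pac-bound} plus a lower bound on the discrimination information obtained from witness pairs for $\Gamma$ and $\kappa_\epsilon$, with the reward variance bounded by a constant, the value-function variance by $1/(1-\gamma)^2$, and $\kappa_\epsilon > \epsilon(1-\gamma)/8$ from the transfer condition. But the comparison is instructive, because the paper's entire proof is essentially the single asserted display $\max_{s,a}\min_{\theta\in\Theta_\epsilon}\Psi_{s,a}(\theta^*,\theta) \geq \max\{\Gamma^2,\kappa_\epsilon^2\}(1-\gamma)^2$ (``it is easy to see''), and the two obstacles you flag are exactly what that assertion sweeps away; on both points you are more scrupulous than the paper. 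First, the paper silently drops the $[\cdot - 8\Delta]_+$ truncation in Definition~\ref{def:psi}, even though the transfer condition only guarantees $\Delta < \frac{\epsilon(1-\gamma)}{4(1+\gamma)}$, so $8\Delta$ can exceed $\kappa_\epsilon$ and zero out the truncated gap for a model whose deviations are barely above $\kappa_\epsilon$; your explicit restriction to the regime where $\Gamma$ and $\epsilon(1-\gamma)$ dominate $\Delta$ is a caveat the corollary genuinely needs, not mere caution. Second, your max-min versus min-max worry is well-founded: per-model witnesses only bound $\min_{\theta\in\Theta_\epsilon}\max_{s,a}\Psi_{s,a}(\theta^*,\theta)$, and since $\max_{s,a}\min_{\theta}\Psi_{s,a} \leq \min_{\theta}\max_{s,a}\Psi_{s,a}$, the paper's literal max-min display does not follow from them — indeed, two models in $\Theta_\epsilon$ each differing from $\theta^*$ at a single, distinct state-action pair make the max-min vanish while $\Gamma > 0$. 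Your proposed repair, inserting the per-model bound into the phase-based elimination argument of Theorem~\ref{th:pac-bound}'s proof, is precisely what makes the step rigorous: there the per-phase denominator is $\max_{s,a}\max_{\theta\in\bar{\Theta}_t}\Psi_{s,a}(\theta^*,\theta)$, which is at least $\min_{\theta\in\Theta_\epsilon}\max_{s,a}\Psi_{s,a}(\theta^*,\theta)$ as long as a model of $\Theta_\epsilon$ survives, so the $\min\{SA,|\Theta|\}$-type bound holds with the min-max denominator that your witness argument actually controls — this is the reading the paper's one-line proof implicitly uses. In short: same approach and same final computation, but your version makes explicit (and correctly patches) two steps that the paper's proof elides.
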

This result reveals that the sample complexity of Algorithm~\ref{alg:policy-transfer} does not scale with $\epsilon$, which is typically regarded as the main term in PAC bounds. That is, when $\epsilon$ is small, the bound scales with the minimum gap $\Gamma$ between the approximate models. Interestingly, the dependence on $\Gamma$ is the same as the one obtained by \citet{brunskill2013sample}, but in our case it constitutes a worst-case scenario since $\Gamma$ can be regarded as the \emph{minimum} positive information for model discrimination, while Theorem \ref{th:pac-bound} scales with the maximum one. Moreover, since our sample complexity bound is never worse than the one of \citet{brunskill2013sample} and theirs achieves robustness to negative transfer, PTUM directly inherits this property.
We note that $\Gamma > 0$ since, otherwise, two identical models would exist and one could be safely neglected. The key consequence is that one could set $\epsilon = 0$ and the algorithm would retrieve an optimal policy. However, this requires the models to be perfectly approximated so as to enter the transfer mode. Finally, we point out that the optimal dependence on the discount factor was proved to be $\mathcal{O}(1/(1-\gamma)^3)$~\cite{azar2013minimax}. Here, we get a slightly sub-optimal result since, for simplicity, we naively upper-bounded the variances with their maximum value, but a more involved analysis (e.g., those by~\citet{azar2013minimax,sidford2018near}) should lead to optimal dependence.

\subsection{Discussion}

The sampling procedure of PTUM (Step 4) relies on the availability of a generative model to query informative state-action pairs. We note that the definition of informative state-action pair and all other components of the algorithm are independent on the assumption that a generative model is available. Therefore, one could use ideas similar to PTUM even in the absence of a generative model. For instance, taking inspiration from E$^3$ \citep{kearns2002near}, we could build a surrogate ``exploration" MDP with high rewards in informative state-action pairs and solve this MDP to obtain a policy that autonomously navigates the true environment to collect information. Alternatively, we could use the information measure $\Psi_{s,a}$ as an exploration bonus \citep{jian2019exploration}. We conjecture that the analysis of this kind of approaches would follow quite naturally from the one of PTUM under the standard assumption of finite MDP diameter $D < \infty$ \citep{jaksch2010near}, for which the resulting bound would have an extra linear scaling in $D$ as in prior works \citep{brunskill2013sample}.

PTUM calls an $(\epsilon,\delta)$-PAC algorithm whenever the models are too inaccurate or whenever $n$ queries to the generative model are not sufficient to identify a near-optimal policy. This algorithm can be freely chosen among those available in the literature. For instance, we could choose the MaxQInit algorithm of \citet{abel2018policy} which uses an optimistic value function to initialize the learning process of a PAC-MDP method \citep{strehl2009reinforcement}. In our case, the information about $\theta^*$ collected through the generative model could be used to compute much tighter upper bounds to the optimal value function than those obtained solely from previous tasks, thus significantly reducing the overall sample complexity. Alternatively, we could use the Finite-Model-RL algorithm of \cite{brunskill2013sample} or the Parameter ELimination (PEL) method of \cite{dyagilev2008efficient} by passing the set of survived models $\bar{\Theta}_n$ instead of $\Theta$, so that the number of remaining eliminations is potentially much smaller than $|\Theta|$.


\section{Learning Sequentially-Related Tasks}\label{sec:structure}

We now show how to estimate the MDP models (i.e., the reward and transition probabilities) for each $\theta$ sequentially,  together with the task-transition matrix $T$. The method we propose allows us to obtain confidence bounds over these estimates which can be directly plugged into Algorithm~\ref{alg:policy-transfer} to find an $\epsilon$-optimal policy for each new task.

We start by noticing that our sequential transfer setting can be formulated as a hidden Markov model (HMM) where the target tasks $\{\theta_h^*\}_{h\geq 1}$ are the unobserved variables, or hidden states, which evolve according to $T$, and the samples collected while learning each task are the agent's observations. Formally, consider the $h$-th task $\theta^*_h$ and, with some abuse of notation, let $\hat{q}_h(u | s,a)$ and $\hat{p}_h(s' | s,a)$ respectively denote the empirical reward distribution and transition model estimated after solving $\theta^*_h$ and, without loss of generality, after collecting at least one sample from each state-action pair. In order to simplify the exposition and analysis, we assume, in this section only, that the reward-space is finite with $U$ elements.\footnote{The proposed methods can be applied to continuous reward distributions, e.g., Gaussian, with only minor tweaks.} It is easy to see that $\expec{\hat{q}_h(u | s,a) | \theta^*_h = \theta} = q_\theta(u | s,a)$ and $\expec{\hat{p}_h(s' | s,a) | \theta^*_h = \theta} = p_\theta(s' | s,a)$. Furthermore, the random variables $\hat{q}_h(u | s,a)$ and  $\hat{p}_h(s' | s,a)$ are conditionally independent of all other variables given the target task $\theta^*_h$. Let $o_h \in \R^{d}$, for $d = SA(S+U)$, be a vectorized version of these empirical models (i.e., our observation vector), $o_h = [\mathrm{vec}(\hat{q}); \mathrm{vec}(\hat{p})]$. Let $O \in \R^{d\times k}$ be defined as $O = [\mu_{1}, \mu_2, \dots, \mu_k]$ with $\mu_{j} := \expec{o_h | \theta_h^* = \theta_j}$. Intuitively, $O$ has the (exact) flattened MDP models as its columns and can be regarded as the matrix of conditional-observation means for our HMM. Then, the problem reduces to estimating the matrices $T$ and $O$ while facing sequential tasks, which is a standard HMM learning problem. Spectral methods~\cite{anandkumar2012method, anandkumar2014tensor} have been widely applied for this purpose. Besides their good empirical performance, these approaches typically come with strong finite-sample guarantees, which is a fundamental tool for our study. We also note that spectral methods have already been applied to solve problems related to ours, including transfer in multi-armed bandits~\cite{azar2013sequential} and learning POMDPs~\cite{azizzadenesheli2016reinforcement, guo2016pac}. Here, we apply the tensor decomposition approach of~\citet{anandkumar2014tensor}. As their algorithm is quite involved, we include a brief description in Appendix~\ref{app:rtp}, while in the following we focus on analyzing the estimated HMM parameters. 

Our high-level sequential transfer procedure, which combines PTUM with HMM learning, is presented in Algorithm~\ref{alg:sequential-transfer}. The approach keeps a running estimate of all MDP models $\{\tilde{\M}_\theta^h\}_{\theta\in\Theta}$ together with a bound on their errors $\Delta_h$. These estimates are used to solve each task via PTUM (line 4) and updated by plugging the resulting observations into the spectral method (lines 5-7). Then, the error bounds of the new estimates are computed as explained in Section~\ref{sec:error-bounds-spectral} (line 8). Finally, the estimated task-transition matrix is used to predict which tasks are more likely to occur and the resulting set $\tilde{\Theta}_{h+1}$ is used to run PTUM at the next step.

\begin{algorithm}[t]
\caption{Sequential Transfer} \label{alg:sequential-transfer}
\begin{algorithmic}[1]

\STATE{Initialize set of approximate models $\{\tilde{\M}_\theta^1\}_{\theta\in\Theta}$, model errors $\Delta_1 \leftarrow \infty$, and initial active set $\tilde{\Theta}_1 \leftarrow \Theta$}
\FOR{$h=1,2,\dots$}
\STATE{Receive new task $\theta^*_h \sim T(\cdot | \theta^*_{h-1})$}
\STATE{Run Algorithm \ref{alg:policy-transfer} with $\{\tilde{\M}_\theta^h\}_{\theta\in\tilde{\Theta}_{h}}$ and $\Delta_h$}
\STATE{Obtain estimates $\hat{q}_h(u | s,a)$ and $\hat{p}_h(s' | s,a)$}
\STATE{Estimate $\hat{O}_h, \hat{T}_h$ using Algorithm \ref{alg:hmm-learning} in Appendix \ref{app:rtp}}
\STATE{Update model estimates $\{\tilde{\M}^{h+1}_\theta\}_{\theta\in\Theta}$ from $\hat{O}_h$}
\STATE{Compute model error $\Delta_h$}
\STATE{Predict set of initial models $\tilde{\Theta}_{h+1}$ from $\hat{T}_h$}
\ENDFOR

\end{algorithmic}
\end{algorithm}

\subsection{Error Bounds for the Estimated Models}\label{sec:error-bounds-spectral}

We now derive finite-sample bounds on the estimation error of each MDP model. First, we need the following assumption, due to~\citet{anandkumar2012method}, to ensure that we can recover the HMM parameters from the samples.
\begin{assumption}\label{ass:full-rank}
The matrix $O$ is full column-rank. Furthermore, the stationary distribution of the underlying Markov chain, $\omega \in \mathcal{P}(\Theta)$, is such that $\omega(\theta) > 0$ for all $\theta \in \Theta$.
\end{assumption}
As noted by \citet{anandkumar2012method}, this assumption is milder than assuming minimal positive gaps between the different models (as was done, e.g., by~\citet{brunskill2013sample} and~\citet{liu2016pac}). We now present the main result of this section, which provides deviation inequalities between true and estimated models that hold uniformly over time.
\begin{restatable}{theorem}{rtpmodels}\label{th:rtp-err-models}
Let $\{\tilde{\M}_\theta^h\}_{\theta\in\Theta, h\geq1}$ be the sequence of MDP models estimated by Algorithm \ref{alg:sequential-transfer}, with $\tilde{p}_{h,\theta}(s' | s,a)$ and $\tilde{q}_{h,\theta}(u | s,a)$ the transition and reward distributions, $\tilde{V}_{h,\theta}^*$ the optimal value functions, and $\tilde{\sigma}_{h,\theta}^r(s,a)$, $\tilde{\sigma}_{h,\theta}^p(s,a; \theta')$ the corresponding variances. There exist constants $\rho, \rho_r, \rho_p, \rho_{\sigma_r}, \rho_{\sigma_p}$ such that, for $\delta' \in (0,1)$, if
\begin{align*}
    h > \rho \log\frac{ 2h^2SA(S+U)}{\delta'},
\end{align*}
then, with probability at least $1-\delta'$, the following hold simultaneously for all $h\geq1$, $s\in\mathcal{S}$, $a\in\mathcal{A}$, and $\theta,\theta'\in\Theta$:
\begin{align*}
    |r_\theta(s,a) - \tilde{r}_{h,\theta}(s,a)| &\leq \rho_r \sqrt{\frac{\log c_{h,\delta'}}{h}}, \\
    |(p_\theta(s,a) - \tilde{p}_{h,\theta}(s,a))\tr \tilde{V}_{h,\theta'}^*| &\leq \rho_p \sqrt{\frac{\log c_{h,\delta'}}{h}}, \\ 
        |\sigma_\theta^r(s,a) - \tilde{\sigma}^r_{h,\theta}(s,a)| &\leq \rho_{\sigma_r} \sqrt{\frac{\log c_{h,\delta'}}{h}}, \\
    |\sigma_\theta^r(s,a; \theta') - \tilde{\sigma}^p_{h,\theta}(s,a; \theta')| &\leq \rho_{\sigma_p} \sqrt{\frac{\log c_{h,\delta'}}{h}},
\end{align*}
where $c_{h,\delta'} := \pi^2h^2SA(S+U)/\delta'$.
\end{restatable}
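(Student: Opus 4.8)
The plan is to reduce all four inequalities to a single column-wise estimation bound for the observation-mean matrix $O$ coming from the tensor-decomposition estimator, and then propagate this bound through the definitions of the reward means, the value-weighted transitions, and the two variance terms. Recall that the $j$-th column $\mu_j$ of $O$ is the flattened pair $[\mathrm{vec}(q_{\theta_j}); \mathrm{vec}(p_{\theta_j})]$, so a bound on $\|\hat{\mu}_{h,j}-\mu_j\|_\infty$ is exactly an entrywise bound on the estimated reward distributions $\tilde q_{h,\theta}$ and transition kernels $\tilde p_{h,\theta}$.

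First I would invoke the finite-sample analysis of the spectral method (\citet{anandkumar2014tensor}) to control $\|\hat{\mu}_{h,j}-\mu_j\|$. Assumption~\ref{ass:full-rank} is precisely what this step requires: full column rank of $O$ keeps its smallest singular value---and hence the condition numbers entering the tensor perturbation bounds---bounded away from zero, while $\omega(\theta)>0$ for all $\theta$ makes the chain ergodic so that the empirical second- and third-order moment tensors built from consecutive observations concentrate around their population values. The burn-in condition $h>\rho\log(\cdot)$ encodes the requirement that, after $h$ tasks, this moment perturbation is small enough (relative to the relevant eigengap) for the recovery guarantee of the tensor power method to apply, with $\rho$ collecting the problem-dependent constants (minimal singular value of $O$, stationary weights, mixing). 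This yields a single-step bound of the form $\|\hat{\mu}_{h,j}-\mu_j\|_\infty\le C\sqrt{\log(1/\delta_h)/h}$.

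To obtain a guarantee holding simultaneously for all $h$, I would apply the single-step bound at each $h$ with confidence $\delta_h = 6\delta'/(\pi^2 h^2)$ and union bound over $h\ge1$ using $\sum_{h\ge1}h^{-2}=\pi^2/6$, together with a union bound over the $SA(S+U)$ coordinates of the observation vector. This produces exactly the factor $c_{h,\delta'}=\pi^2 h^2 SA(S+U)/\delta'$ inside the logarithm and accounts for the $\log(2h^2SA(S+U)/\delta')$ form of the sample threshold. It remains to translate the entrywise error into the four component bounds. The reward-mean bound follows from $r_\theta(s,a)=\sum_u u\,q_\theta(u|s,a)$ and rewards in $[0,1]$; the value-weighted transition bound from $|(p_\theta-\tilde p_{h,\theta})\tr\tilde V^*_{h,\theta'}|\le\|p_\theta-\tilde p_{h,\theta}\|_1\|\tilde V^*_{h,\theta'}\|_\infty\le\|p_\theta-\tilde p_{h,\theta}\|_1/(1-\gamma)$. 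For the variances I would expand $\sigma^r_\theta(s,a)^2=\sum_u u^2 q_\theta(u|s,a)-r_\theta(s,a)^2$ and $\sigma^p_\theta(s,a;\theta')^2 = p_\theta(s,a)\tr(V^*_{\theta'}\odot V^*_{\theta'})-(p_\theta(s,a)\tr V^*_{\theta'})^2$, bound each as a Lipschitz function of the distribution (and, for the second, of the value function), and pass from variance to standard deviation via $|\sqrt a-\sqrt b|\le\sqrt{|a-b|}$ or $|a-b|/(\sqrt a+\sqrt b)$.

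I expect the value-function-variance term to be the main obstacle: its expansion couples the transition error, the propagated value-function perturbation $\|V^*_{\theta'}-\tilde V^*_{h,\theta'}\|_\infty$ (which I would bound by a simulation-lemma argument from the already-established reward and transition errors of $\theta'$, and which is amplified by $\|V^*\|_\infty\le1/(1-\gamma)$ through the quadratic $V^*\odot V^*$), and finally the square-root step. Keeping the overall rate at $\sqrt{\log/h}$ therefore requires tracking all these dependencies together and absorbing the resulting factors into $\rho_{\sigma_p}$. The other delicate point is the clean invocation of the spectral guarantee under the mild Assumption~\ref{ass:full-rank}, namely extracting the correct problem-dependent constants and the $h>\rho\log(\cdot)$ burn-in from the tensor perturbation analysis.
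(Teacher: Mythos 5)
Your proposal follows essentially the same route as the paper's proof: a finite-sample tensor-decomposition bound on the recovered conditional means under Assumption~\ref{ass:full-rank} (imported in the paper as Lemma~\ref{lemma:bound-aziz} from \citet{azizzadenesheli2016reinforcement}), a per-$h$ stratification with confidence $6\delta'/(\pi^2h^2)$ and a union bound producing exactly $c_{h,\delta'}$, and $\ell_1$/H\"older propagation to the four MDP quantities (the paper's Lemma~\ref{lemma:model-err-m}). Two points need repair, however, and one of your anticipated obstacles dissolves on closer reading.

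First, in your variance step the branch $|\sqrt{a}-\sqrt{b}|\le\sqrt{|a-b|}$ is not actually an option: it degrades the rate to $(\log c_{h,\delta'}/h)^{1/4}$, contradicting the claimed $\sqrt{\log c_{h,\delta'}/h}$. You must take the $|a-b|/(\sqrt{a}+\sqrt{b})$ branch, and that requires a uniform lower bound $\underline{\sigma}>0$ on both the true and the estimated standard deviations --- a hypothesis absent from the theorem statement, which the paper adds explicitly as an assumption in the appendix and packages as Lemma~\ref{lemma:var-bound}: for $f\in[0,b]$ and both variances at least $c$, $|\sqrt{\Var_p[f]}-\sqrt{\Var_q[f]}|\le b\left(\frac{b}{2\sqrt{c}}+1\right)\|p-q\|_1$. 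Without stating this hypothesis your argument is incomplete. Second, the estimated models are not read directly off the tensor-decomposition output $\hat{\mu}_{3,j}$: they are the columns $[\hat{O}]_{:,j}=\hat{\Sigma}_{2,1}\hat{\Sigma}_{3,1}^\dagger\hat{\mu}_{3,j}$, so between the tensor guarantee and your entrywise model bound sits a matrix-perturbation step (the paper's Lemma~\ref{lemma:O-error}) controlling the concentration of the empirical covariances and the perturbation of the pseudo-inverse $\hat{\Sigma}_{3,1}^\dagger$; this step also contributes to the burn-in constant $\rho$ and cannot be skipped. Conversely, your declared ``main obstacle'' is self-inflicted: the paper never propagates $\|V^*_{\theta'}-\tilde{V}^*_{h,\theta'}\|_\infty$ through the variance term, because the two $\sigma^p$ quantities it compares are variances of the \emph{same} fixed function $\tilde{V}^*_{h,\theta'}$ under the two transition laws --- which is precisely the quantity PTUM's confidence sets use --- so Lemma~\ref{lemma:var-bound} with $b=1/(1-\gamma)$ applies directly, with no simulation-lemma coupling. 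Your heavier route would also work, at the cost of additional $1/(1-\gamma)$ factors absorbed into $\rho_{\sigma_p}$.
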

To favor readability, we collapsed all terms of minor relevance to our purpose into the constants $\rho$. These are functions of the given family of tasks (through maximum/minimum eigenvalues of the covariance matrices introduced previously) and of the underlying Markov chain. Full expressions are reported in Appendix~\ref{app:proof4}. These bounds provide us with the desired deviations between the true and estimated models and can be used as input to the PTUM algorithm to learn each task in the sequential setup. It is important to note that, like other applications of spectral methods to RL problems~\cite{azar2013sequential, azizzadenesheli2016reinforcement}, the constants $\rho$ are not known in practice and should be regarded as a parameter of the algorithm. Such a parameter can be interpreted as the confidence level in a standard confidence interval. Finally, regardless of constants, these deviations decay at the classic rate of $\mathcal{O}(1/\sqrt{h})$, so that the agent recovers the perfect models of all tasks in the asymptotic regime. This also  implies that the agent needs to observe $\mathcal{O}(1/\epsilon^2)$ tasks before PTUM enters the transfer mode, which is theoretically tight in the worst-case \cite{feng2019does} but quite conservative in practice.

\subsection{Exploiting the Inter-Task Dynamics}

We now show how the agent can exploit the task-transition matrix $T$ to further reduce the sample complexity. Suppose that $T$ and the identity of the current task $\theta_h^*$ are known. Then, through $T(\cdot | \theta^*_h)$, the agent has prior knowledge about the next task and it could, for instance, discard all models whose probability is low enough without even passing them to the PTUM algorithm. Using this insight, we can design an approach to pre-process the initial set of models by replacing $T$ with its estimate $\hat{T}$.
Let $\hat{T}_h$ be the estimated task-transition matrix and $\bar{\Theta}_h$ be the set of active models returned by Algorithm~\ref{alg:policy-transfer}. Then, by design of the algorithm, $\bar{\Theta}_h$ contains the target task $\theta^*_h$ with sufficient probability and thus we can predict the probability that $\theta^*_{h+1}$ is equal to some $\theta$ as $\sum_{\theta'\in\bar{\Theta}_h} \hat{T}_h(\theta,\theta')$. Our idea is to check whether this probability, plus some confidence value, is lower than a certain threshold $\eta$. In such a case, we discard $\theta$ from the initial set of models $\tilde{\Theta}_{h+1}$ to be used at the next step. The following theorem ensures that, for a well-chosen threshold value, the target model is never discarded from the initial set at any time with high probability.
\begin{restatable}{theorem}{preelim}\label{th:pre-elim}
Let $\delta' \in (0,1)$ and $\delta \leq \frac{\delta'}{3m^2}$ be the confidence value for Algorithm \ref{alg:policy-transfer}. Suppose that, before each task $h$, a model $\theta$ is eliminated from the initial active set if:
\begin{align*}
    \sum_{\theta'\in\bar{\Theta}_h} \hat{T}_h(\theta,\theta') + \delta k + \rho_Tk \sqrt{\frac{\log (9kdm^2/\delta')}{h}} \leq \eta.
\end{align*}
Then, for $\eta = \frac{\delta'}{3km^2}$, with probability at least $1-\delta'$, at any time the true model is never eliminated from the initial set.
\end{restatable}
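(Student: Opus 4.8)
The plan is to fix $h$ and show that the elimination condition, evaluated at the \emph{true} next task $\theta = \theta^*_{h+1}$, fails on a high-probability event; since failure means $\theta^*_{h+1}$ is retained in $\tilde{\Theta}_{h+1}$, a union bound over $h$ then gives the claim. Concretely, I would lower-bound the left-hand side of the elimination inequality and show it always exceeds $\eta$. The two ingredients are (i) the PTUM guarantee and (ii) a deviation bound for $\hat{T}_h$.

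First I would invoke two events. From the analysis of PTUM (Lemma~\ref{lemma:conf}), for each $h$ the confidence set $\bar{\Theta}_h$ contains the true current task, i.e. $\theta^*_h \in \bar{\Theta}_h$, with probability at least $1-\delta$. From the spectral estimation guarantee for the transition matrix — the analogue of Theorem~\ref{th:rtp-err-models} for $T$, which introduces the constant $\rho_T$ — with high probability, uniformly over all $h$ and all pairs,
\[
|\hat{T}_h(\theta,\theta') - T(\theta,\theta')| \leq \rho_T \sqrt{\tfrac{\log(9kdm^2/\delta')}{h}}.
\]

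Then I would chain the inequalities. On the spectral event, summing the per-entry deviation over the at most $k$ columns $\theta'\in\bar{\Theta}_h$ gives
\[
\sum_{\theta'\in\bar{\Theta}_h}\hat{T}_h(\theta^*_{h+1},\theta') + \rho_T k \sqrt{\tfrac{\log(9kdm^2/\delta')}{h}} \geq \sum_{\theta'\in\bar{\Theta}_h} T(\theta^*_{h+1},\theta').
\]
On the PTUM event, since $\theta^*_h \in \bar{\Theta}_h$, the right-hand side is at least the realized transition probability $T(\theta^*_{h+1},\theta^*_h) \geq 0$. Hence the full left-hand side of the elimination condition is at least $\delta k$. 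The choice $\eta = \delta'/(3km^2)$ together with $\delta = \delta'/(3m^2)$ yields $\delta k = k^2\eta \geq \eta$, so the elimination condition (LHS $\leq \eta$) cannot hold for $\theta^*_{h+1}$, and the true model survives. The key point is that the additive $\delta k$ term, not any lower bound on $T(\theta^*_{h+1},\theta^*_h)$, is what forces survival even when the realized transition has tiny probability. Finally, a union bound over the failure of the PTUM event across the $\leq m$ tasks gives $\sum_h \delta \leq m\delta \leq \delta'/3$, and allocating the remaining budget to the two spectral events ($O$- and $T$-estimation, each $\delta'/3$) yields total failure probability at most $\delta'$.

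I expect the crux to be establishing the uniform-in-$h$ deviation bound for $\hat{T}_h$ with the stated constant $\rho_T$ and logarithmic factor, since this rests on the tensor-decomposition machinery already used for Theorem~\ref{th:rtp-err-models}; and, secondarily, the bookkeeping of the confidence budget so that the three error sources — PTUM failures, the $O$-estimation of Theorem~\ref{th:rtp-err-models}, and the $T$-estimation — combine to exactly $\delta'$, which explains the factors of $3$ in $\eta$, $\delta$, and the $9kdm^2$ inside the logarithm. The remaining inequality manipulations and the verification that $\delta k \geq \eta$ under the prescribed choices are routine.
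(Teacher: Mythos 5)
Your proposal has a genuine gap at its core step. You claim that, on the concentration events, the elimination condition can never hold at $\theta=\theta^*_{h+1}$, but your lower bound on the left-hand side uses only $T(\theta^*_{h+1},\theta^*_h)\geq 0$, so it reduces to the trivial observation that the LHS is at least its additive term $\delta k$; your conclusion then needs $\delta k\geq \eta$, i.e. $\delta\geq \delta'/(3k^2m^2)$. The theorem only assumes $\delta\leq \delta'/(3m^2)$, with no lower bound on $\delta$, so this inequality is simply not available. Worse, since your argument uses no property of $\theta^*_{h+1}$ beyond nonnegativity of $T$, in the regime where $\delta k\geq\eta$ does hold it would show that \emph{no} model is ever eliminated, which makes the pre-elimination rule vacuous and contradicts the stated purpose of the theorem (that a sparse $\hat{T}$ lets many models be discarded before running PTUM). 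The real difficulty, which your deterministic route cannot handle, is that the realized next task may be a genuinely low-probability transition: in that case the elimination condition can hold at $\theta^*_{h+1}$ even when every concentration event holds, so no almost-sure survival argument on those events is possible; the guarantee is intrinsically probabilistic over the task realization.

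The paper's proof quantifies exactly this. Its key device is Lemma~\ref{lemma:prob-theta-init}: if $\bar{\Theta}_h$ contains $\theta^*_h$ with probability at least $1-\delta$, then on the concentration event for $\hat{T}_h$ (Lemma~\ref{lemma:err-T}) one has $\prob{\theta^*_{h+1}=\theta}\leq \sum_{\theta'\in\bar{\Theta}_h}\hat{T}_h(\theta,\theta') + \delta k + \rho_T k\sqrt{\log(\cdot)/h}$, which is precisely the elimination threshold. Hence any eliminated model satisfies $\prob{\theta^*_{h+1}=\theta}\leq \eta$, so the probability that the \emph{true} next task is among the eliminated ones is at most $k\eta=\delta'/(3m^2)$ per step, plus the PTUM failure probability $\delta$ and the spectral failure probability (also taken at level $\delta'/(3km^2)$ per model). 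A recursive decomposition $\prob{\theta^*_h\notin\tilde{\Theta}_h}\leq \sum_{l=2}^{h}\prob{\theta^*_l\notin\tilde{\Theta}_l \wedge \theta^*_{l-1}\in\tilde{\Theta}_{l-1}}$ and a union bound over the at most $m^2$ resulting terms give $\left(\delta+\tfrac{2\delta'}{3m^2}\right)m^2\leq \delta'$. This also corrects your budget bookkeeping: the three thirds of $\delta'$ are consumed by (i) PTUM failures across the $m^2$ terms of the recursion, (ii) the $\hat{T}$-concentration failure, and (iii) the event that the realized next task is one of the at most $k$ models with predicted probability below $\eta$ --- not by a separate $O$-estimation event, whose contribution is already folded into the guarantee of Lemma~\ref{lemma:err-T}.
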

As before, we collapsed minor terms into the constant $\rho_T$. When $T$ is sparse (e.g., only a small number of successor tasks is possible), this technique could significantly reduce the sample complexity to identify a near-optimal policy as many models are discarded even before starting PTUM.

\section{Related Works}

\begin{figure*}[t]
\centering
\begin{minipage}[t]{0.49\linewidth}
\centering
\includegraphics[height=3.3cm]{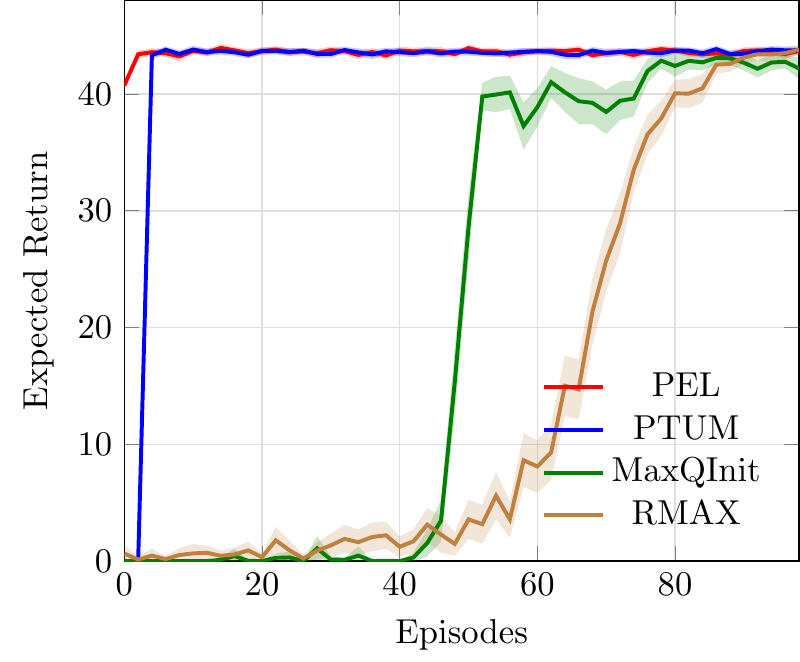}
\includegraphics[height=3.3cm]{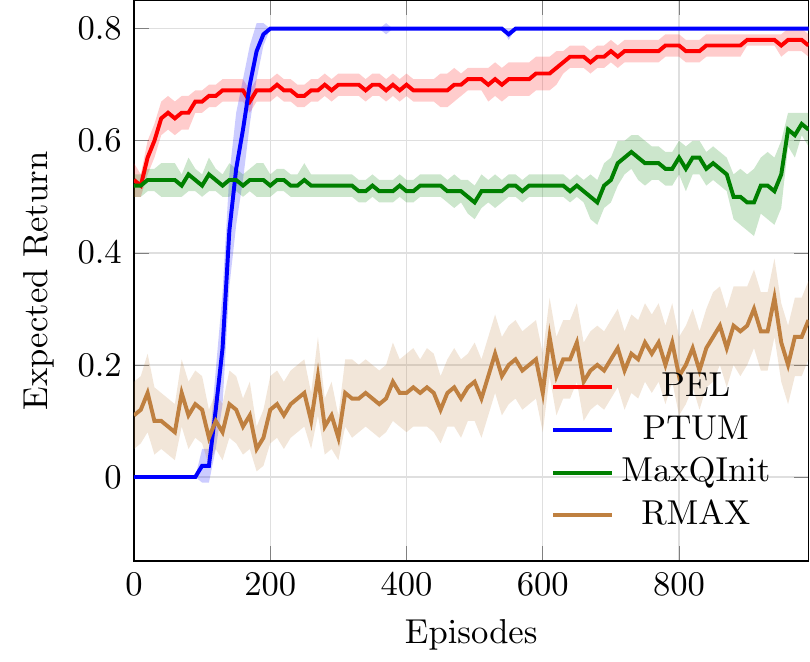}
\caption{Policy transfer from known models. \textit{(left)} Optimism gains information.  \textit{(right)} High-reward states are poorly informative.}
    \label{fig:known-models}
\end{minipage}\hfill
\begin{minipage}[t]{0.49\linewidth}
\centering
\includegraphics[height=3.3cm]{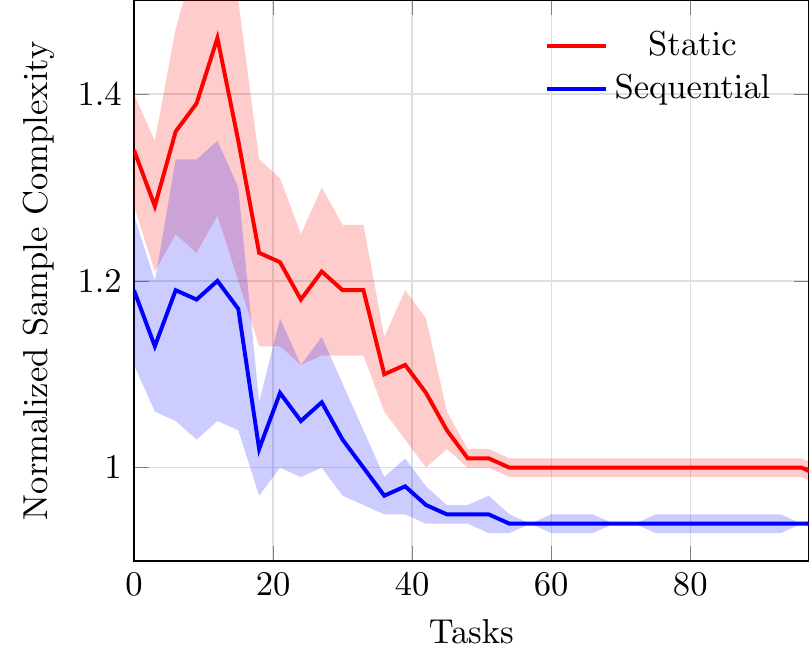} \hfill
\includegraphics[height=3.3cm]{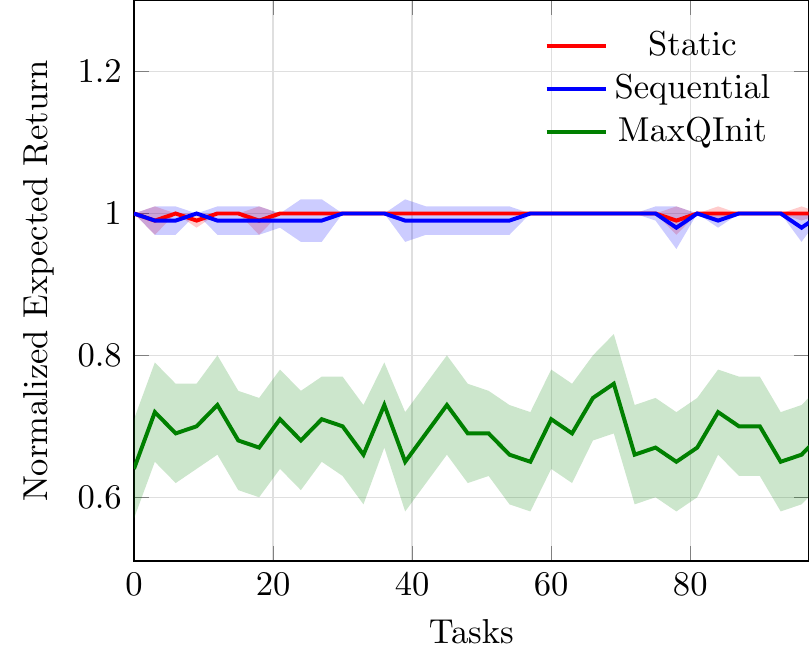}
\caption{Sequential transfer experiment.  \textit{(left)} The normalized sample complexity.  \textit{(right)} Performance of the computed policies.} 
    \label{fig:sequential}
\end{minipage}
\end{figure*}

In recent years, there has been a growing interest in understanding transfer in RL from a theoretical viewpoint. The work by~\citet{brunskill2013sample} and its follow-up by~\citet{liu2016pac} are perhaps the most related to ours. The authors consider a multi-task scenario in which the agent faces finitely many tasks drawn from a fixed distribution. Although their methods are designed for task identification, they do not actively seek information as our algorithm does and the resulting guarantees are therefore weaker. \citet{azar2013sequential} present similar ideas in the context of multi-armed bandits. Like our work, they employ spectral methods to learn tasks sequentially, without however exploiting any temporal correlations. \citet{dyagilev2008efficient} propose a method for task identification based on sequential hypothesis testing. Their algorithm explores the environment by running optimistic policies and, like ours, aims at eliminating models from a confidence set. \citet{mann2012directed} and~\citet{abel2018policy} study how to to achieve jumpstart. The idea is to optimistically initialize the value function of the target task and then run any PAC algorithm. The resulting methods provably achieve positive transfer. \citet{mahmud2013clustering} cluster solved tasks in a few groups to ease policy transfer, which relates to our spectral learning of MDP models. Our setup is also related to other RL settings, including contextual MDPs~\cite{hallak2015contextual, jiang2017contextual, modi2018markov, sun2018model}, where the decision process depends on latent contexts, and hidden-parameter MDPs~\cite{velez2013hidden, killian2017robust}, which follow a similar idea. Finally, we notice that the PAC-MDP framework has been widely employed to study RL algorithms~\cite{strehl2009reinforcement}. Many of the above-mentioned works take inspiration from classic PAC methodologies, including the well-known E3~\cite{kearns2002near}, R-MAX~\cite{brafman2002r}, and Delayed Q-learning~\cite{strehl2006pac}, and so does ours.

\section{Experiments}

The goal of our experiments is twofold. First, we analyze the performance of PTUM when the task models are known, focusing on the comparison between identification and jumpstart strategies. Then, we consider the sequential setting and show that our approach progressively transfers knowledge and exploits the temporal task correlations to improve over "static" transfer methods.  Due to space constraints, here we present only the high-level setup of each experiment and discuss the results. We refer the reader to Appendix~\ref{app:exp} for all the details and further results. All our plots report averages of $100$ runs with $99\%$ Student's t confidence intervals.

\paragraph*{Identification vs Jumpstart}

For this experiment, we adopt a standard $12\times 12$ grid-world divided into two parts by a vertical wall  (i.e., the $4$-rooms domain of \citet{sutton1999between} with only two rooms). The agent starts in the left room and must reach a goal state in the right one, with the two rooms connected by a door. We consider $12$ different tasks, whose models are known to the agent, with different goal and door locations. We compare PTUM with three baselines: RMAX \cite{brafman2002r}, which does not perform any transfer, RMAX with MaxQInit \cite{abel2018policy}, in which the Q-function is initialized optimistically to achieve a jumpstart, and PEL \cite{dyagilev2008efficient}, which performs model identification. Not to give advantage to PTUM, as the other algorithms run episodes online with no generative model, in our plots we count each sample taken by PTUM as one episode and assign it the minimum reward value. Once PTUM returns a policy, we report its online performance. The results in Figure \ref{fig:known-models}\textit{(left)} show that the two identification methods find an optimal policy in a very small number of episodes. The jumpstart of MaxQInit is delayed since the optimistic Q-function directs the agent towards the wall, but finding the door takes many samples. PEL, which is also optimistic, instantly identifies the solution since attempts to cross the wall in locations without a door immediately discriminate between tasks. This is in fact an example where the optimism principle leads to both rewards and information. To understand why this is not always the case, we run the same experiment by removing the wall and adding multiple goal locations with different reward values. Some goals have uniformly high values over all tasks (thus they provide small information) while others are more variable (thus with higher information). In Figure~\ref{fig:known-models}\textit{(right)}, we see that both PEL and MaxQInit achieve a high jumpstart, as they immediately go towards one of the goals, but converge much more slowly than PTUM, which focuses on the informative ones.

\paragraph*{Sequential Transfer}

We consider a variant of the objectworld domain by \citet{levine2011nonlinear}. Here, we have a $5\times 5$ grid where each cell can contain one of $11$ possible items with different values (or no items at all). There are $8$ possible tasks, each of which randomizes the items with probabilities inversely proportional to their value. In order to show the benefits of exploiting temporal correlations, we design a sparse task-transition matrix in which each task has only a few possible successors. We run our sequential transfer method (Algorithm \ref{alg:sequential-transfer}) and compare it with a variant that ignores the inter-task dynamics and only performs "static" transfer (i.e., only the estimated MDP models are used). We note that model-learning in this variant reduces to the spectral method proposed by \citet{azar2013sequential} for bandits. For the initial phase of the learning process, where the transfer condition is not satisfied, we solve each task by sampling state-action pairs uniformly as in \citet{azar2013minimax}. Figure \ref{fig:sequential}\textit{(left)} shows the normalized number of samples required by PTUM to solve each task starting from the first step in which the algorithm enters the transfer mode\footnote{The sample complexity to solve each task is divided by the one of the oracle version of PTUM with known models}. We can appreciate that the sample complexities of both variants decay as the number of tasks grows, meaning that the learned models become more and more accurate. As expected, the sequential version outperforms the static one by exploiting the temporal task correlations. It is important to note that the normalized sample complexity of our algorithm goes below $1$ (i.e., below the oracle). This is a key point as the sample complexity of oracle PTUM is computed with the set of all models as input, while the sequential approach filters this set based on the estimated task-transition matrix. In Figure \ref{fig:sequential}\textit{(right)}, we confirm that both variants of our approach return near-optimal policies at all steps (all are $\epsilon$-optimal for the chosen value of $\epsilon$). We also report the performance of the policies computed by MaxQInit when the same maximum-sample budget as our approach is given (to be fair, we overestimated it as $5$ to $10$ times larger than what oracle PTUM needs). Although for MaxQInit we used the oracle optimistic initialization (with the true Q-functions), the algorithm is not able to obtain near-optimal performance using the given budget, while PTUM achieves it despite the estimated models.

\section{Conclusion}

We studied two important questions in lifelong RL. First, we addressed the problem of quickly identifying a near-optimal policy to be transferred among the solutions of previously-solved tasks. The proposed approach sheds light on how to design strategies that actively seek information to reduce identification time, and our sample complexity bounds confirm a significant positive transfer. Second, we showed that learning sequentially-related tasks is not significantly harder than the common case where they are statically sampled from a fixed distribution. By combining these two building blocks, we obtained an algorithm that sequentially learns the task structure, exploits temporal correlations, and quickly identifies the solution of each new task. We confirmed these properties in simple simulated domains, showing the benefits of identification over jumpstart strategies and of exploiting temporal task relations.

Our work opens up several interesting directions. Our ideas for task identification could be extended to the online setting, where a generative model is not available. Here, we would probably need a more principled trade-off between information and rewards, which can be modeled by different objectives (such as regret). 
Furthermore, it would be interesting to study the sequential transfer setting in large continuous MDPs. Here we could take inspiration from recent papers on meta RL that learn latent task representations \citep{humplik2019meta,lan2019meta,rakelly2019efficient,zintgraf2019varibad}  and meta-train policies that seek information for quickly identifying the task parameters.

\bibliography{biblio}

\begin{thebibliography}{60}
\providecommand{\natexlab}[1]{#1}
\providecommand{\url}[1]{\texttt{#1}}
\expandafter\ifx\csname urlstyle\endcsname\relax
  \providecommand{\doi}[1]{doi: #1}\else
  \providecommand{\doi}{doi: \begingroup \urlstyle{rm}\Url}\fi

\bibitem[Abel et~al.(2018)Abel, Jinnai, Guo, Konidaris, and
  Littman]{abel2018policy}
Abel, D., Jinnai, Y., Guo, S.~Y., Konidaris, G., and Littman, M.
\newblock Policy and value transfer in lifelong reinforcement learning.
\newblock In Dy, J. and Krause, A. (eds.), \emph{Proceedings of the 35th
  International Conference on Machine Learning}, volume~80 of \emph{Proceedings
  of Machine Learning Research}, pp.\  20--29, Stockholmsmässan, Stockholm
  Sweden, 10--15 Jul 2018. PMLR.

\bibitem[Anandkumar et~al.(2012)Anandkumar, Hsu, and
  Kakade]{anandkumar2012method}
Anandkumar, A., Hsu, D., and Kakade, S.~M.
\newblock A method of moments for mixture models and hidden markov models.
\newblock In \emph{Conference on Learning Theory}, pp.\  33--1, 2012.

\bibitem[Anandkumar et~al.(2014)Anandkumar, Ge, Hsu, Kakade, and
  Telgarsky]{anandkumar2014tensor}
Anandkumar, A., Ge, R., Hsu, D., Kakade, S.~M., and Telgarsky, M.
\newblock Tensor decompositions for learning latent variable models.
\newblock \emph{Journal of Machine Learning Research}, 15:\penalty0 2773--2832,
  2014.

\bibitem[Azar et~al.(2013{\natexlab{a}})Azar, Lazaric, and
  Brunskill]{azar2013sequential}
Azar, M., Lazaric, A., and Brunskill, E.
\newblock Sequential transfer in multi-armed bandit with finite set of models.
\newblock In Burges, C. J.~C., Bottou, L., Welling, M., Ghahramani, Z., and
  Weinberger, K.~Q. (eds.), \emph{Advances in Neural Information Processing
  Systems 26}, pp.\  2220--2228. Curran Associates, Inc., 2013{\natexlab{a}}.

\bibitem[Azar et~al.(2013{\natexlab{b}})Azar, Munos, and
  Kappen]{azar2013minimax}
Azar, M., Munos, R., and Kappen, H.~J.
\newblock Minimax pac bounds on the sample complexity of reinforcement learning
  with a generative model.
\newblock \emph{Machine Learning}, 91\penalty0 (3):\penalty0 325--349, Jun
  2013{\natexlab{b}}.

\bibitem[Azizzadenesheli et~al.(2016)Azizzadenesheli, Lazaric, and
  Anandkumar]{azizzadenesheli2016reinforcement}
Azizzadenesheli, K., Lazaric, A., and Anandkumar, A.
\newblock Reinforcement learning of pomdps using spectral methods.
\newblock \emph{arXiv preprint arXiv:1602.07764}, 2016.

\bibitem[Barreto et~al.(2017)Barreto, Dabney, Munos, Hunt, Schaul, van Hasselt,
  and Silver]{barreto2017successor}
Barreto, A., Dabney, W., Munos, R., Hunt, J.~J., Schaul, T., van Hasselt,
  H.~P., and Silver, D.
\newblock Successor features for transfer in reinforcement learning.
\newblock In Guyon, I., Luxburg, U.~V., Bengio, S., Wallach, H., Fergus, R.,
  Vishwanathan, S., and Garnett, R. (eds.), \emph{Advances in Neural
  Information Processing Systems 30}, pp.\  4055--4065. Curran Associates,
  Inc., 2017.

\bibitem[Barreto et~al.(2018)Barreto, Borsa, Quan, Schaul, Silver, Hessel,
  Mankowitz, Zidek, and Munos]{barreto2018transfer}
Barreto, A., Borsa, D., Quan, J., Schaul, T., Silver, D., Hessel, M.,
  Mankowitz, D., Zidek, A., and Munos, R.
\newblock Transfer in deep reinforcement learning using successor features and
  generalised policy improvement.
\newblock In \emph{International Conference on Machine Learning}, pp.\
  501--510, 2018.

\bibitem[Boucheron et~al.(2003)Boucheron, Lugosi, and
  Bousquet]{boucheron2003concentration}
Boucheron, S., Lugosi, G., and Bousquet, O.
\newblock Concentration inequalities.
\newblock In \emph{Summer School on Machine Learning}, pp.\  208--240.
  Springer, 2003.

\bibitem[Brafman \& Tennenholtz(2002)Brafman and Tennenholtz]{brafman2002r}
Brafman, R.~I. and Tennenholtz, M.
\newblock R-max-a general polynomial time algorithm for near-optimal
  reinforcement learning.
\newblock \emph{Journal of Machine Learning Research}, 3\penalty0
  (Oct):\penalty0 213--231, 2002.

\bibitem[Brunskill \& Li(2013)Brunskill and Li]{brunskill2013sample}
Brunskill, E. and Li, L.
\newblock Sample complexity of multi-task reinforcement learning.
\newblock \emph{arXiv preprint arXiv:1309.6821}, 2013.

\bibitem[Choi et~al.(2000{\natexlab{a}})Choi, Yeung, and
  Zhang]{choi2000environment}
Choi, S.~P., Yeung, D.-Y., and Zhang, N.~L.
\newblock An environment model for nonstationary reinforcement learning.
\newblock In \emph{Advances in neural information processing systems}, pp.\
  987--993, 2000{\natexlab{a}}.

\bibitem[Choi et~al.(2000{\natexlab{b}})Choi, Yeung, and Zhang]{choi2000hidden}
Choi, S.~P., Yeung, D.-Y., and Zhang, N.~L.
\newblock Hidden-mode markov decision processes for nonstationary sequential
  decision making.
\newblock In \emph{Sequence Learning}, pp.\  264--287. Springer,
  2000{\natexlab{b}}.

\bibitem[Doshi~velez \& Konidaris(2013)Doshi~velez and
  Konidaris]{velez2013hidden}
Doshi~velez, F. and Konidaris, G.
\newblock Hidden parameter markov decision processes: A semiparametric
  regression approach for discovering latent task parametrizations.
\newblock \emph{IJCAI : proceedings of the conference}, 2016, 08 2013.

\bibitem[Dyagilev et~al.(2008)Dyagilev, Mannor, and
  Shimkin]{dyagilev2008efficient}
Dyagilev, K., Mannor, S., and Shimkin, N.
\newblock Efficient reinforcement learning in parameterized models: Discrete
  parameter case.
\newblock In \emph{European Workshop on Reinforcement Learning}, pp.\  41--54.
  Springer, 2008.

\bibitem[Feng et~al.(2019)Feng, Yin, and Yang]{feng2019does}
Feng, F., Yin, W., and Yang, L.~F.
\newblock Does knowledge transfer always help to learn a better policy?
\newblock \emph{arXiv preprint arXiv:1912.02986}, 2019.

\bibitem[Fern{\'a}ndez \& Veloso(2006)Fern{\'a}ndez and
  Veloso]{fernandez2006probabilistic}
Fern{\'a}ndez, F. and Veloso, M.
\newblock Probabilistic policy reuse in a reinforcement learning agent.
\newblock In \emph{Proceedings of the fifth international joint conference on
  Autonomous agents and multiagent systems}, pp.\  720--727, 2006.

\bibitem[Guo et~al.(2016)Guo, Doroudi, and Brunskill]{guo2016pac}
Guo, Z.~D., Doroudi, S., and Brunskill, E.
\newblock A pac rl algorithm for episodic pomdps.
\newblock In \emph{Artificial Intelligence and Statistics}, pp.\  510--518,
  2016.

\bibitem[Hadoux et~al.(2014)Hadoux, Beynier, and Weng]{hadoux2014solving}
Hadoux, E., Beynier, A., and Weng, P.
\newblock Solving hidden-semi-markov-mode markov decision problems.
\newblock In \emph{International Conference on Scalable Uncertainty
  Management}, pp.\  176--189. Springer, 2014.

\bibitem[Hallak et~al.(2015)Hallak, Di~Castro, and
  Mannor]{hallak2015contextual}
Hallak, A., Di~Castro, D., and Mannor, S.
\newblock Contextual markov decision processes.
\newblock \emph{arXiv preprint arXiv:1502.02259}, 2015.

\bibitem[Humplik et~al.(2019)Humplik, Galashov, Hasenclever, Ortega, Teh, and
  Heess]{humplik2019meta}
Humplik, J., Galashov, A., Hasenclever, L., Ortega, P.~A., Teh, Y.~W., and
  Heess, N.
\newblock Meta reinforcement learning as task inference.
\newblock \emph{arXiv preprint arXiv:1905.06424}, 2019.

\bibitem[Jaksch et~al.(2010)Jaksch, Ortner, and Auer]{jaksch2010near}
Jaksch, T., Ortner, R., and Auer, P.
\newblock Near-optimal regret bounds for reinforcement learning.
\newblock \emph{Journal of Machine Learning Research}, 11:\penalty0 1563--1600,
  2010.

\bibitem[Jian et~al.(2019)Jian, Fruit, Pirotta, and
  Lazaric]{jian2019exploration}
Jian, Q., Fruit, R., Pirotta, M., and Lazaric, A.
\newblock Exploration bonus for regret minimization in discrete and continuous
  average reward mdps.
\newblock In \emph{Advances in Neural Information Processing Systems}, pp.\
  4890--4899, 2019.

\bibitem[Jiang et~al.(2017)Jiang, Krishnamurthy, Agarwal, Langford, and
  Schapire]{jiang2017contextual}
Jiang, N., Krishnamurthy, A., Agarwal, A., Langford, J., and Schapire, R.~E.
\newblock Contextual decision processes with low bellman rank are
  pac-learnable.
\newblock In \emph{Proceedings of the 34th International Conference on Machine
  Learning-Volume 70}, pp.\  1704--1713. JMLR. org, 2017.

\bibitem[Kearns \& Singh(2002)Kearns and Singh]{kearns2002near}
Kearns, M. and Singh, S.
\newblock Near-optimal reinforcement learning in polynomial time.
\newblock \emph{Machine learning}, 49\penalty0 (2-3):\penalty0 209--232, 2002.

\bibitem[Killian et~al.(2017)Killian, Daulton, Konidaris, and
  Doshi-Velez]{killian2017robust}
Killian, T.~W., Daulton, S., Konidaris, G., and Doshi-Velez, F.
\newblock Robust and efficient transfer learning with hidden parameter markov
  decision processes.
\newblock In \emph{Advances in Neural Information Processing Systems}, pp.\
  6250--6261, 2017.

\bibitem[Konidaris et~al.(2012)Konidaris, Scheidwasser, and
  Barto]{konidaris2012transfer}
Konidaris, G., Scheidwasser, I., and Barto, A.
\newblock Transfer in reinforcement learning via shared features.
\newblock \emph{Journal of Machine Learning Research}, 13\penalty0
  (May):\penalty0 1333--1371, 2012.

\bibitem[Lan et~al.(2019)Lan, Li, Guan, and Wang]{lan2019meta}
Lan, L., Li, Z., Guan, X., and Wang, P.
\newblock Meta reinforcement learning with task embedding and shared policy.
\newblock \emph{arXiv preprint arXiv:1905.06527}, 2019.

\bibitem[Lazaric(2012)]{lazaric2012transfer}
Lazaric, A.
\newblock Transfer in reinforcement learning: a framework and a survey.
\newblock In \emph{Reinforcement Learning}, pp.\  143--173. Springer, 2012.

\bibitem[Lazaric et~al.(2008)Lazaric, Restelli, and
  Bonarini]{lazaric2008transfer}
Lazaric, A., Restelli, M., and Bonarini, A.
\newblock Transfer of samples in batch reinforcement learning.
\newblock In \emph{Proceedings of the 25th international conference on Machine
  learning}, pp.\  544--551. ACM, 2008.

\bibitem[Levine et~al.(2011)Levine, Popovic, and Koltun]{levine2011nonlinear}
Levine, S., Popovic, Z., and Koltun, V.
\newblock Nonlinear inverse reinforcement learning with gaussian processes.
\newblock In \emph{Advances in Neural Information Processing Systems}, pp.\
  19--27, 2011.

\bibitem[Liu \& Stone(2006)Liu and Stone]{liu2006value}
Liu, Y. and Stone, P.
\newblock Value-function-based transfer for reinforcement learning using
  structure mapping.
\newblock In \emph{Proceedings of the 21st national conference on Artificial
  intelligence-Volume 1}, pp.\  415--420, 2006.

\bibitem[Liu et~al.(2016)Liu, Guo, and Brunskill]{liu2016pac}
Liu, Y., Guo, Z., and Brunskill, E.
\newblock Pac continuous state online multitask reinforcement learning with
  identification.
\newblock In \emph{Proceedings of the 2016 International Conference on
  Autonomous Agents \& Multiagent Systems}, pp.\  438--446, 2016.

\bibitem[Mahmud et~al.(2013)Mahmud, Hawasly, Rosman, and
  Ramamoorthy]{mahmud2013clustering}
Mahmud, M., Hawasly, M., Rosman, B., and Ramamoorthy, S.
\newblock Clustering markov decision processes for continual transfer.
\newblock \emph{arXiv preprint arXiv:1311.3959}, 2013.

\bibitem[Mann \& Choe(2012)Mann and Choe]{mann2012directed}
Mann, T.~A. and Choe, Y.
\newblock Directed exploration in reinforcement learning with transferred
  knowledge.
\newblock 2012.

\bibitem[Maurer \& Pontil(2009)Maurer and Pontil]{maurer2009empirical}
Maurer, A. and Pontil, M.
\newblock Empirical bernstein bounds and sample variance penalization.
\newblock \emph{arXiv preprint arXiv:0907.3740}, 2009.

\bibitem[Meng \& Zheng(2010)Meng and Zheng]{meng2010optimal}
Meng, L. and Zheng, B.
\newblock The optimal perturbation bounds of the moore--penrose inverse under
  the frobenius norm.
\newblock \emph{Linear algebra and its applications}, 432\penalty0
  (4):\penalty0 956--963, 2010.

\bibitem[Modi et~al.(2018)Modi, Jiang, Singh, and Tewari]{modi2018markov}
Modi, A., Jiang, N., Singh, S., and Tewari, A.
\newblock Markov decision processes with continuous side information.
\newblock In \emph{Algorithmic Learning Theory}, pp.\  597--618, 2018.

\bibitem[Pukelsheim(2006)]{pukelsheim2006optimal}
Pukelsheim, F.
\newblock \emph{Optimal design of experiments}.
\newblock SIAM, 2006.

\bibitem[Puterman(2014)]{puterman2014markov}
Puterman, M.~L.
\newblock \emph{Markov decision processes: discrete stochastic dynamic
  programming}.
\newblock John Wiley \& Sons, 2014.

\bibitem[Rakelly et~al.(2019)Rakelly, Zhou, Finn, Levine, and
  Quillen]{rakelly2019efficient}
Rakelly, K., Zhou, A., Finn, C., Levine, S., and Quillen, D.
\newblock Efficient off-policy meta-reinforcement learning via probabilistic
  context variables.
\newblock In \emph{International conference on machine learning}, pp.\
  5331--5340, 2019.

\bibitem[Rosman et~al.(2016)Rosman, Hawasly, and
  Ramamoorthy]{rosman2016bayesian}
Rosman, B., Hawasly, M., and Ramamoorthy, S.
\newblock Bayesian policy reuse.
\newblock \emph{Machine Learning}, 104\penalty0 (1):\penalty0 99--127, 2016.

\bibitem[Sidford et~al.(2018)Sidford, Wang, Wu, Yang, and Ye]{sidford2018near}
Sidford, A., Wang, M., Wu, X., Yang, L., and Ye, Y.
\newblock Near-optimal time and sample complexities for solving markov decision
  processes with a generative model.
\newblock In \emph{Advances in Neural Information Processing Systems}, pp.\
  5186--5196, 2018.

\bibitem[Strehl et~al.(2006)Strehl, Li, Wiewiora, Langford, and
  Littman]{strehl2006pac}
Strehl, A.~L., Li, L., Wiewiora, E., Langford, J., and Littman, M.~L.
\newblock Pac model-free reinforcement learning.
\newblock In \emph{Proceedings of the 23rd international conference on Machine
  learning}, pp.\  881--888, 2006.

\bibitem[Strehl et~al.(2009)Strehl, Li, and Littman]{strehl2009reinforcement}
Strehl, A.~L., Li, L., and Littman, M.~L.
\newblock Reinforcement learning in finite mdps: Pac analysis.
\newblock \emph{Journal of Machine Learning Research}, 10\penalty0
  (Nov):\penalty0 2413--2444, 2009.

\bibitem[Sun(2013)]{sun2013survey}
Sun, S.
\newblock A survey of multi-view machine learning.
\newblock \emph{Neural computing and applications}, 23\penalty0 (7-8):\penalty0
  2031--2038, 2013.

\bibitem[Sun et~al.(2018)Sun, Jiang, Krishnamurthy, Agarwal, and
  Langford]{sun2018model}
Sun, W., Jiang, N., Krishnamurthy, A., Agarwal, A., and Langford, J.
\newblock Model-based rl in contextual decision processes: Pac bounds and
  exponential improvements over model-free approaches.
\newblock \emph{arXiv preprint arXiv:1811.08540}, 2018.

\bibitem[Sutton \& Barto(1998)Sutton and Barto]{sutton1998reinforcement}
Sutton, R.~S. and Barto, A.~G.
\newblock \emph{Reinforcement learning: An introduction}, volume~1.
\newblock MIT press Cambridge, 1998.

\bibitem[Sutton et~al.(1999)Sutton, Precup, and Singh]{sutton1999between}
Sutton, R.~S., Precup, D., and Singh, S.
\newblock Between mdps and semi-mdps: A framework for temporal abstraction in
  reinforcement learning.
\newblock \emph{Artificial intelligence}, 112\penalty0 (1-2):\penalty0
  181--211, 1999.

\bibitem[Sutton et~al.(2000)Sutton, McAllester, Singh, and
  Mansour]{sutton2000policy}
Sutton, R.~S., McAllester, D.~A., Singh, S.~P., and Mansour, Y.
\newblock Policy gradient methods for reinforcement learning with function
  approximation.
\newblock In \emph{Advances in neural information processing systems}, pp.\
  1057--1063, 2000.

\bibitem[Taylor \& Stone(2009)Taylor and Stone]{taylor2009transfer}
Taylor, M.~E. and Stone, P.
\newblock Transfer learning for reinforcement learning domains: A survey.
\newblock \emph{Journal of Machine Learning Research}, 10\penalty0
  (Jul):\penalty0 1633--1685, 2009.

\bibitem[Taylor et~al.(2008)Taylor, Jong, and Stone]{taylor2008transferring}
Taylor, M.~E., Jong, N.~K., and Stone, P.
\newblock Transferring instances for model-based reinforcement learning.
\newblock In \emph{Joint European Conference on Machine Learning and Knowledge
  Discovery in Databases}, pp.\  488--505. Springer, 2008.

\bibitem[Tirinzoni et~al.(2018{\natexlab{a}})Tirinzoni, Rodriguez~Sanchez, and
  Restelli]{tirinzoni2018transfer}
Tirinzoni, A., Rodriguez~Sanchez, R., and Restelli, M.
\newblock Transfer of value functions via variational methods.
\newblock In Bengio, S., Wallach, H., Larochelle, H., Grauman, K.,
  Cesa-Bianchi, N., and Garnett, R. (eds.), \emph{Advances in Neural
  Information Processing Systems 31}, pp.\  6179--6189. Curran Associates,
  Inc., 2018{\natexlab{a}}.

\bibitem[Tirinzoni et~al.(2018{\natexlab{b}})Tirinzoni, Sessa, Pirotta, and
  Restelli]{tirinzoni2018importance}
Tirinzoni, A., Sessa, A., Pirotta, M., and Restelli, M.
\newblock Importance weighted transfer of samples in reinforcement learning.
\newblock In Dy, J. and Krause, A. (eds.), \emph{Proceedings of the 35th
  International Conference on Machine Learning}, volume~80 of \emph{Proceedings
  of Machine Learning Research}, pp.\  4936--4945, Stockholmsmässan, Stockholm
  Sweden, 10--15 Jul 2018{\natexlab{b}}. PMLR.

\bibitem[Tirinzoni et~al.(2019)Tirinzoni, Salvini, and
  Restelli]{tirinzoni2019transfer}
Tirinzoni, A., Salvini, M., and Restelli, M.
\newblock Transfer of samples in policy search via multiple importance
  sampling.
\newblock In Chaudhuri, K. and Salakhutdinov, R. (eds.), \emph{Proceedings of
  the 36th International Conference on Machine Learning}, volume~97 of
  \emph{Proceedings of Machine Learning Research}, pp.\  6264--6274, Long
  Beach, California, USA, 09--15 Jun 2019. PMLR.

\bibitem[Tirinzoni et~al.(2020)Tirinzoni, Lazaric, and
  Restelli]{tirinzoni2020novel}
Tirinzoni, A., Lazaric, A., and Restelli, M.
\newblock A novel confidence-based algorithm for structured bandits.
\newblock In Chiappa, S. and Calandra, R. (eds.), \emph{Proceedings of the
  Twenty Third International Conference on Artificial Intelligence and
  Statistics}, volume 108 of \emph{Proceedings of Machine Learning Research},
  pp.\  3175--3185, Online, 26--28 Aug 2020. PMLR.

\bibitem[Yang et~al.(2019)Yang, Petersen, Zha, and Faissol]{yang2019single}
Yang, J., Petersen, B., Zha, H., and Faissol, D.
\newblock Single episode policy transfer in reinforcement learning.
\newblock \emph{arXiv preprint arXiv:1910.07719}, 2019.

\bibitem[Yin \& Pan(2017)Yin and Pan]{yin2017knowledge}
Yin, H. and Pan, S.~J.
\newblock Knowledge transfer for deep reinforcement learning with hierarchical
  experience replay.
\newblock In \emph{Thirty-First AAAI Conference on Artificial Intelligence},
  2017.

\bibitem[Zanette et~al.(2019)Zanette, Kochenderfer, and
  Brunskill]{zanette2019almost}
Zanette, A., Kochenderfer, M.~J., and Brunskill, E.
\newblock Almost horizon-free structure-aware best policy identification with a
  generative model.
\newblock In \emph{Advances in Neural Information Processing Systems}, pp.\
  5625--5634, 2019.

\bibitem[Zintgraf et~al.(2019)Zintgraf, Shiarlis, Igl, Schulze, Gal, Hofmann,
  and Whiteson]{zintgraf2019varibad}
Zintgraf, L., Shiarlis, K., Igl, M., Schulze, S., Gal, Y., Hofmann, K., and
  Whiteson, S.
\newblock Varibad: A very good method for bayes-adaptive deep rl via
  meta-learning.
\newblock \emph{arXiv preprint arXiv:1910.08348}, 2019.

\end{thebibliography}
\bibliographystyle{icml2020}

\newpage
\appendix
\onecolumn

\section{Notation} \label{app:notation}

\begin{table*}[h]
\centering
\begin{tabular}{@{}lll@{}} 
\toprule
Symbol & Expression &Meaning \\
\cmidrule{1-3}
$\Theta$ & - & Set of parameters\\
$k$ & $|\Theta|$ & Number of possible parameters/tasks\\
$n$ & - & Sample budget for querying the generative model (Section \ref{sec:transfer})\\
$m$ & - & Maximum number of tasks in the sequential setting (Section \ref{sec:structure})\\
$\mathcal{S}$ & - & Set of $S$ states\\
$\mathcal{A}$ & - & Set of $A$ actions\\
$\mathcal{U}$ & - & Set of reward values (finite with cardinality $U$ for Section \ref{sec:structure} only)\\
$T$ & - & Task-transition matrix\\
$p_\theta(s' | s,a)$ & - & Transition probabilities of MDP $\mathcal{M}_\theta$\\
$q_\theta(u | s,a)$ & - & Reward distribution of MDP $\mathcal{M}_\theta$\\
$r_\theta(s,a)$ & - & Mean reward of MDP $\mathcal{M}_\theta$\\
$\gamma$ & - & Discount factor\\
$V_\theta^\pi(s)$ & $\E_\theta^\pi\left[\sum_{t=0}^\infty \gamma^t U_t | S_0 = s\right]$ & Value function of policy $\pi$ in MDP $\mathcal{M}_\theta$\\
$V_\theta^*(s)$ & $\max_\pi V_\theta^\pi(s)$ & Optimal value function for MDP $\mathcal{M}_\theta$\\
$\sigma_\theta^r(s,a)^2$ & $\Var_{q_\theta(\cdot | s,a)}[U]$ & Reward variance in task $\M_\theta$\\
 $\sigma_\theta^p(s,a;\theta')^2$ & $\Var_{p_\theta(\cdot | s,a)}[V_{\theta'}^*(S')]$ & Transition/value-function variance in task $\M_\theta$\\
$\devr{s,a}{\theta,\theta'}$ & $|{r}_\theta(s,a) - {r}_{\theta'}(s,a)|$ & Reward-gaps between tasks $\theta$ and $\theta'$ \\
$\devp{s,a}{\theta,\theta'}{}$ & $|({p}_\theta(s,a) - {p}_{\theta'}(s,a))\tr V^*_\theta|$ & Transition-gaps between tasks $\theta$ and $\theta'$ \\
$\Delta$ & - &  Estimation error of the approximate models (Assumption \ref{ass:model-err})\\
$\epsilon, \delta$ & - & Accuracy and confidence level for Algorithm \ref{alg:policy-transfer}\\
$\hat{r}_t, \hat{p}_t, \hat{\sigma}_t^r, \hat{\sigma}_t^p$ & See Algorithm \ref{alg:policy-transfer} & Empirical models after $t$ steps \\
$C_{t,\delta}^x(s,a)$ & See Lemma \ref{lemma:conf} & Bernstein confidence intervals for $x \in \{r,p,\sigma_r,\sigma_p\}$\\
$\bar{\Theta}_t$ & See Algorithm \ref{alg:policy-transfer} & Confidence set a t time $t$\\
$\Psi_{s,a}^r(\theta,\theta')$ & See Definition \ref{def:psi} & Reward information for discriminating $\theta,\theta'$\\
$\Psi_{s,a}^p(\theta,\theta')$ & See Definition \ref{def:psi} & Transition information for discriminating $\theta,\theta'$\\
$\Psi_t(s,a)$ & $\max \left\{  \Psi_t^r(s,a), \Psi_t^p(s,a)\right\}$ & Index of $s,a$ at time $t$ (see Algorithm \ref{alg:policy-transfer})\\
$O$ & - & Mean-observation matrix containing the flattened MDP models\\
$\hat{O}_h, \hat{T}_h$ & - & Estimated observation and task-transition matrices after $h$ tasks\\
$\tilde{r}_{h,\theta},\tilde{p}_{h,\theta},\tilde{\sigma}^r_{h,\theta},\tilde{\sigma}^p_{h,\theta}$ & - & Estimated models after $h$ tasks\\
$\tilde{\Theta}_h$ & - & Initial set of models for running PTUM on the $h$-th task\\
$\rho_x$ or $ \rho_x(\Theta,T)$ & See Appendix \ref{app:proof4} & Constants in the analysis of the spectral learning algorithm\\
\bottomrule
\end{tabular}
\caption{The notation adopted in this paper.}
\label{tab:notation}
\end{table*}

\newpage

\section{Analysis of the PTUM Algorithm}\label{app:proof3}

\subsection{Definitions and Assumptions}

The analysis is carried out under the following two assumptions.

\begin{assumption}\label{ass:ptum1}
Algorithm \ref{alg:policy-transfer} always enters the transfer mode. That is, the model uncertainty is such that $\Delta < \frac{\epsilon(1-\gamma)}{4(1+\gamma)}$.
\end{assumption}

\begin{assumption}\label{ass:ptum2}
The sample budget $n$ is large enough to allow Algorithm \ref{alg:policy-transfer} to identify an $\epsilon$-optimal policy.
\end{assumption}

These two assumptions allow us to analyze only the core part of PTUM (i.e., the transfer mode), thus excluding trivial cases in which the chosen $(\epsilon,\delta)$-PAC algorithm is called.  In fact, if Assumption \ref{ass:ptum1} does not hold, the sample complexity for computing an $\epsilon$-optimal policy is equivalent to the one of the chosen algorithm. Similarly, if Assumption \ref{ass:ptum2} does not hold, the sample complexity is $n$ (the samples collected by the generative model) plus the sample complexity of the chosen algorithm.

We define the event $E:=\{\forall t = 1,...,n: \theta^* \in \bar{\Theta}_t \}$ under which the true model is never eliminated from the active model set. This event will be used extensively throughout the whole analysis.

\subsection{Concentration Inequalities}

\begin{lemma}[Bernstein's inequality \citep{boucheron2003concentration}]\label{lemma:bernstein}
Let $X$ be a random variable such that $|X| \leq c$ almost surely, $X_1,\dots,X_n$ $n$ i.i.d. samples of $X$, and $\delta > 0$. Then, with probability at least $1-\delta$,
\begin{align*}
    \left|\expec{X} - \frac{1}{n}\sum_{i=1}^n X_i\right| \leq \sqrt{\frac{2\Var[X]\log\frac{2}{\delta}}{n}} + \frac{c\log\frac{2}{\delta}}{3n}.
\end{align*}
\end{lemma}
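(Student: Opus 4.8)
The plan is to prove this two-sided deviation bound by the classical Cramér–Chernoff (exponential Markov) method, which is exactly the technique underlying the concentration inequalities of \citet{boucheron2003concentration}. I would first reduce the two-sided statement to a one-sided tail bound: applying the one-sided bound separately to the empirical mean of $X$ and of $-X$, each at confidence level $\delta/2$, and taking a union bound yields the absolute-value statement at confidence $\delta$. This reduction is precisely what produces the $\log\frac{2}{\delta}$ term rather than $\log\frac{1}{\delta}$. It therefore suffices to control $\prob{\frac{1}{n}\sum_{i=1}^n X_i - \expec{X} \geq t}$.

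To this end, introduce the centered variables $Y_i := X_i - \expec{X}$, which are i.i.d., mean zero, have variance $\Var[X]$, and are bounded almost surely. By Markov's inequality applied to the exponential moment, for any $\lambda > 0$,
\[
\prob{\textstyle\sum_{i=1}^n Y_i \geq ns} \leq e^{-\lambda n s}\,\bigl(\expec{e^{\lambda Y_1}}\bigr)^{n}.
\]
The crux of the argument is the moment generating function estimate (Bernstein's lemma): using the almost-sure bound to control the higher moments via $\expec{|Y_1|^k} \leq \tfrac{1}{2} k!\,\Var[X]\,c^{k-2}$ and summing the resulting exponential series, one obtains, for $0 \le \lambda < 3/c$,
\[
\expec{e^{\lambda Y_1}} \leq \exp\!\left(\frac{\lambda^2 \Var[X]/2}{1 - \lambda c/3}\right).
\]
I expect this MGF bound to be the main technical obstacle, since it is where the variance and the boundedness constant must be combined in the correct way; the exact constant in the linear term of the final bound is inherited from this step, and everything downstream is optimization and algebra carried out in the standard reference.

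With the MGF bound in hand, substituting it into the Chernoff inequality and minimizing the exponent over $\lambda \in (0,\,3/c)$ gives the standard Bernstein tail
\[
\prob{\frac{1}{n}\sum_{i=1}^n Y_i \geq t} \leq \exp\!\left(-\frac{n t^2}{2\bigl(\Var[X] + c t/3\bigr)}\right).
\]
Finally, I would invert this bound: setting the right-hand side equal to $\delta/2$ and solving the resulting quadratic in $t$, then decoupling the variance and boundedness contributions through $\sqrt{a+b} \leq \sqrt{a} + \sqrt{b}$, produces a deviation of the form $\sqrt{\frac{2\Var[X]\log(2/\delta)}{n}} + \frac{c\log(2/\delta)}{3n}$. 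Combining this with the symmetric bound obtained by applying the same argument to $-X$, through the union bound from the first paragraph, yields the claimed two-sided inequality with probability at least $1-\delta$.
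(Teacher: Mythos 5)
The paper never proves this lemma --- it imports it verbatim from \citet{boucheron2003concentration} --- so your derivation must be measured against the classical argument, whose skeleton (union bound over the two tails at level $\delta/2$, Chernoff bound, Bernstein-type MGF estimate, inversion) you have correctly identified. However, two of your quantitative steps fail, and they fail precisely at the constants the lemma asserts. First, the MGF bound $\expec{e^{\lambda Y_1}} \leq \exp\bigl(\frac{\lambda^2 \Var[X]/2}{1-\lambda c/3}\bigr)$ does not follow from the moment estimate you state: summing the series $1+\sum_{k\geq 2}\lambda^k \expec{|Y_1|^k}/k!$ under $\expec{|Y_1|^k}\leq \frac{1}{2}k!\,\Var[X]\,c^{k-2}$ gives $\exp\bigl(\frac{\lambda^2\Var[X]/2}{1-\lambda c}\bigr)$ for $\lambda<1/c$, i.e.\ Bernstein scale $c$, not $c/3$. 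Obtaining the scale $c/3$ requires either Bennett's route (one-sided boundedness $Y_1\leq c$ combined with $e^u-1-u\leq \frac{u^2}{2(1-u/3)}$) or the sharper moment condition $\expec{|Y_1|^k}\leq \frac{1}{2}k!\,\Var[X]\,(c/3)^{k-2}$, which rests on the inequality $3^{k-2}\leq k!/2$ and on the \emph{centered} variable being bounded by $c$. This exposes a related hypothesis subtlety you glossed over: under $|X|\leq c$ one only has $|X-\expec{X}|\leq 2c$, so this route honestly yields scale $2c/3$; the stated constant is the one for range-$c$ variables (e.g.\ $X\in[0,c]$), which is how the paper actually applies the lemma (rewards in $[0,1]$, values in $[0,1/(1-\gamma)]$).

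Second, your inversion step does not produce $\frac{c\log(2/\delta)}{3n}$. Setting $\exp\bigl(-\frac{nt^2}{2(\Var[X]+ct/3)}\bigr)=\delta/2$ and solving the quadratic gives $t=A+\sqrt{A^2+B}$ with $A=\frac{c\log(2/\delta)}{3n}$ and $B=\frac{2\Var[X]\log(2/\delta)}{n}$, and your decoupling $\sqrt{A^2+B}\leq A+\sqrt{B}$ yields $t\leq \sqrt{B}+2A$: the linear term doubles to $\frac{2c\log(2/\delta)}{3n}$. To get the stated constant you must bypass the weakened exponential tail entirely and use the direct quantile form of the Chernoff bound: as in Boucheron--Lugosi--Massart, a sum whose log-MGF is bounded by $\frac{\lambda^2 v}{2(1-b\lambda)}$ satisfies $\prob{S\geq \sqrt{2vL}+bL}\leq e^{-L}$, so plugging $t=\sqrt{2vL}+bL$ into the exponent directly (with $v=n\Var[X]$, $b=c/3$, $L=\log(2/\delta)$) gives exactly the claimed deviation. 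Each slippage costs only a constant factor, but this lemma \emph{is} a statement about those constants, and they propagate verbatim into the confidence intervals $C_{t,\delta}$ of Lemma~\ref{lemma:conf}; as written, your argument proves a strictly weaker inequality than the one stated.
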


\begin{lemma}[Empirical Bernstein's inequality \citep{maurer2009empirical}]\label{lemma:emp-bernstein}
Let $X$ be a random variable such that $|X| \leq c$ almost surely, $X_1,\dots,X_n$ $n$ i.i.d. samples of $X$, and $\delta > 0$. Then, with probability at least $1-\delta$,
\begin{align*}
    \left|\expec{X} - \frac{1}{n}\sum_{i=1}^n X_i\right| \leq \sqrt{\frac{2\mathrm{\hat{\mathbb{V}}ar}[X]\log\frac{4}{\delta}}{n}} + \frac{7c\log\frac{4}{\delta}}{3(n-1)},
\end{align*}
where $\mathrm{\hat{\mathbb{V}}ar}[X]$ denotes the empirical variance of $X$ using $n$ samples.
\end{lemma}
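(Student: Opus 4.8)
The plan is to recognize this as the empirical Bernstein bound of Maurer and Pontil and to derive it by marrying the \emph{classical} Bernstein inequality of Lemma~\ref{lemma:bernstein}, which controls the mean deviation through the \emph{true} variance $V := \Var[X]$, with a one-sided concentration inequality that replaces $\sqrt{V}$ by the empirical standard deviation $\sqrt{\hat V}$ (writing $\hat V := \mathrm{\hat{\mathbb{V}}ar}[X]$). I would establish two events, each of probability at least $1-\delta/2$, and take a union bound. Invoking Lemma~\ref{lemma:bernstein} at confidence level $\delta/2$ gives
\begin{align*}
\left| \expec{X} - \frac{1}{n}\sum_{i=1}^n X_i \right| \leq \sqrt{\frac{2 V \log(4/\delta)}{n}} + \frac{c\log(4/\delta)}{3n},
\end{align*}
and the second, the \emph{variance-transfer} step, reads
\begin{align*}
\sqrt{V} \leq \sqrt{\hat V} + c\sqrt{\frac{2\log(4/\delta)}{n-1}}.
\end{align*}
Substituting the latter into the former removes the unobservable true variance from the bound.

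The crux, and the step I expect to be the main obstacle, is the variance-transfer inequality. I would prove it by treating the empirical standard deviation $\Phi := \sqrt{\hat V}$ as a function of the i.i.d. sample and invoking McDiarmid's bounded-difference inequality: since the samples lie in a range of length $2c$, altering a single $X_i$ changes $\Phi$ by $O(c/\sqrt{n(n-1)})$, so the squared differences sum to $O(c^2/(n-1))$ and McDiarmid yields a lower-tail bound $\Phi \geq \expec{\Phi} - O(c\sqrt{\log(1/\delta)/(n-1)})$. The delicate point is that this concentrates $\Phi$ about $\expec{\Phi}$, whereas Jensen's inequality gives $\expec{\Phi} = \expec{\sqrt{\hat V}} \leq \sqrt{\expec{\hat V}} = \sqrt{V}$, i.e.\ the mean offset points the \emph{wrong way} for an upper bound on $\sqrt{V}$. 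I would close this gap by exploiting that the sample variance is unbiased, $\expec{\hat V} = V$: this pins $\expec{\Phi}$ to within $O(c/\sqrt{n-1})$ of $\sqrt{V}$ (via $\expec{\Phi} = \sqrt{V - \Var[\Phi]}$ together with an Efron--Stein bound on $\Var[\Phi]$), and with the bounded-difference constants of Maurer--Pontil the mean offset and the McDiarmid fluctuation combine into the single clean term above.

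Finally, substituting the variance-transfer bound and using $\sqrt{a+b} \le \sqrt a + \sqrt b$,
\begin{align*}
\sqrt{\frac{2 V \log(4/\delta)}{n}} \leq \sqrt{\frac{2 \hat V \log(4/\delta)}{n}} + \frac{2c\log(4/\delta)}{\sqrt{n(n-1)}}.
\end{align*}
Adding the residual $\frac{c\log(4/\delta)}{3n}$ from the Bernstein bound and using $\frac{1}{n} \le \frac{1}{n-1}$ and $\frac{1}{\sqrt{n(n-1)}} \le \frac{1}{n-1}$ collapses the lower-order contributions to $\big(\tfrac13 + 2\big)\frac{c\log(4/\delta)}{n-1} = \frac{7c\log(4/\delta)}{3(n-1)}$, matching the claimed constant. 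The absolute value is already supplied by the two-sided form of Lemma~\ref{lemma:bernstein}, and the union over the two confidence-$\delta/2$ events produces the overall $1-\delta$ guarantee and the $\log(4/\delta)$ factors.
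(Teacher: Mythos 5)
Your overall skeleton --- classical Bernstein at level $\delta/2$, a variance-transfer inequality $\sqrt{\Var[X]} \leq \sqrt{\hat V} + c\sqrt{2\log(4/\delta)/(n-1)}$ at level $\delta/2$, then $\sqrt{a+b}\le\sqrt a+\sqrt b$ and the bookkeeping $(\tfrac13+2)=\tfrac73$ --- is exactly how \citet{maurer2009empirical} assemble their Theorem~4, and note that the paper itself offers no proof of this lemma (it is cited), so the only question is whether your derivation of the variance-transfer step stands on its own. It does not: the McDiarmid argument rests on an incorrect bounded-difference computation. Writing $\Phi=\sqrt{\hat V}=\tfrac{1}{\sqrt{n-1}}\|X-\bar X\mathbf{1}\|_2$, changing a single coordinate changes $\Phi$ by up to $\Theta(c/\sqrt n)$, not $O(c/\sqrt{n(n-1)})$. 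Concretely, take all $n$ samples equal to $0$, so $\Phi=0$, and move one sample to $c$; then
\begin{align*}
\hat V \;=\; \frac{1}{n-1}\left[\left(c-\tfrac{c}{n}\right)^2 + (n-1)\left(\tfrac{c}{n}\right)^2\right] \;=\; \frac{c^2}{n},
\end{align*}
so $\Phi$ jumps by $c/\sqrt n$. Hence the squared bounded differences sum to $\Theta(c^2)$, not $O(c^2/(n-1))$, and McDiarmid yields only a fluctuation of order $c\sqrt{\log(1/\delta)}$ --- constant in $n$ --- which is useless for the $1/\sqrt{n-1}$ rate you need. The same defect invalidates the Efron--Stein patch: with the correct worst-case differences it gives $\Var[\Phi]=O(c^2)$, so the identity $\expec{\Phi}=\sqrt{\Var[X]-\Var[\Phi]}$ no longer pins $\expec{\Phi}$ to within $O(c/\sqrt{n-1})$ of the true standard deviation.

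The missing idea is that worst-case bounded-difference tools are intrinsically too weak here; the concentration of the sample standard deviation at rate $1/\sqrt{n-1}$ comes from the \emph{self-bounding} structure of the sample variance, exploited via the entropy method. This is precisely the content of Theorem~10 of \citet{maurer2009empirical}: after rescaling so that $Z$ proportional to $(n-1)\hat V$ is a self-bounding function, the lower-tail inequality $\prob{Z \leq \expec{Z} - t} \leq \exp\left(-t^2/(2\expec{Z})\right)$ converts, upon taking square roots, into $\sqrt{\expec{Z}} \leq \sqrt{Z} + \sqrt{2\log(1/\delta)}$ with probability $1-\delta$, which is your variance-transfer inequality with the correct rate (and with the range of $X$, rather than $|X|\le c$ per se, as the scale factor). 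If you replace your McDiarmid/Efron--Stein step by an invocation or reproduction of that self-bounding argument, the rest of your derivation --- the union bound, the substitution, and the collapse of the lower-order terms into $\tfrac{7c\log(4/\delta)}{3(n-1)}$ --- goes through as you wrote it.
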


\subsection{Lemmas}

We begin by showing that the true model is never eliminated from the confidence sets of Algorithm \ref{alg:policy-transfer} with high probability.

\begin{lemma}[Valid confidence sets]\label{lemma:conf}
Let $\delta > 0$ and, for $N_{t}(s,a) > 1$,
\begin{align*}
	C^{r}_{t, \delta}(s,a) = \sqrt{\frac{2\hat{\sigma}^{r}_{t}(s,a)^2\log{\frac{8SAn(|\Theta|+1)}{\delta}}}{N_t(s,a)}} + \frac{7\log{\frac{8SAn(|\Theta|+1)}{\delta}}}{3(N_t(s,a)-1)} + \Delta_{\mathrm{max}}^r,
\end{align*}
\begin{align*}
	C^{p}_{t,\delta}(s,a; \theta') = \sqrt{\frac{2\hat{\sigma}^{p}_{t}(s,a;\theta')^2\log{\frac{8SAn(|\Theta|+1)}{\delta}}}{N_t(s,a)}} + \frac{7\log{\frac{8SAn(|\Theta|+1)}{\delta}}}{3(N_t(s,a) -1)(1-\gamma)} + \Delta_{\mathrm{max}}^p,
\end{align*}
\begin{align*}
	C^{\sigma^r}_{t,\delta}(s,a) = \sqrt{\frac{2\log{\frac{4SAn(|\Theta|+1)}{\delta}}}{N_t(s,a)-1}} + \Delta_{\mathrm{max}}^{\sigma_r},
\end{align*}
\begin{align*}
	C^{\sigma^p}_{t,\delta}(s,a) = \frac{1}{1-\gamma}\sqrt{\frac{2\log{\frac{4SAn(|\Theta|+1)}{\delta}}}{N_t(s,a)-1}} + \Delta_{\mathrm{max}}^{\sigma_p}.
\end{align*}
Set these confidence intervals to infinity if $N_t(x,a) \leq 1$. Then, the event $E:=\{\forall t = 1,...,n: \theta^* \in \bar{\Theta}_t \}$ holds with probability at least $1-\delta$.
\end{lemma}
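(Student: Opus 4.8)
The plan is to exhibit a single high-probability event on which $\theta=\theta^*$ satisfies all four defining conditions \eqref{eq:single-conf-r}--\eqref{eq:single-conf-sp} at every time $t=1,\dots,n$, for every $s,a$ and every $\theta'\in\Theta$. Since $\theta^*$ belongs to the initial active set and a model survives in $\bar{\Theta}_t$ exactly when it was active before and still meets these conditions, an immediate induction then gives $\theta^*\in\bar{\Theta}_t$ for all $t$, which is the event $E$. The first step is, for each of the four conditions, a triangle-inequality split of the deviation between the empirical quantity and the \emph{approximate}-model quantity into a statistical part (empirical vs.\ the \emph{true} model $\theta^*$) and a bias part (true $\theta^*$ vs.\ approximate). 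The bias part is bounded directly by Assumption~\ref{ass:model-err}: e.g.\ $|r_{\theta^*}(s,a)-\tilde r_{\theta^*}(s,a)|\le\Delta_{\mathrm{max}}^r$ and $|(p_{\theta^*}(s,a)-\tilde p_{\theta^*}(s,a))\tr\tilde V^*_{\theta'}|\le\Delta_{\mathrm{max}}^p$, which is precisely why the additive $\Delta_{\mathrm{max}}$ terms appear in each confidence radius.

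Next I would control the statistical parts by concentration. For the reward and transition conditions I apply the empirical Bernstein inequality (Lemma~\ref{lemma:emp-bernstein}) to $\hat r_t(s,a)$ and to $\hat p_t(s,a)\tr\tilde V^*_{\theta'}=\tfrac{1}{N_t(s,a)}\sum_{s''\in\mathcal{D}^p_{s,a}}\tilde V^*_{\theta'}(s'')$, an average of i.i.d.\ draws from $p_{\theta^*}(\cdot|s,a)$ with mean $p_{\theta^*}(s,a)\tr\tilde V^*_{\theta'}$. The reward samples lie in $[0,1]$ (so $c=1$) while the value samples lie in $[0,\tfrac{1}{1-\gamma}]$ (so $c=\tfrac{1}{1-\gamma}$), which is the source of the $\tfrac{1}{1-\gamma}$ factor in the range term of $C^{p}_{t,\delta}$; the empirical variances that appear in the bounds are exactly $\hat\sigma^r_t(s,a)^2$ and $\hat\sigma^p_t(s,a;\theta')^2$ from lines 9--10. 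For the two standard-deviation conditions I invoke the standard concentration of the empirical standard deviation for bounded variables (a bounded-differences argument), yielding $|\hat\sigma^r_t-\sigma^r_{\theta^*}|\le\sqrt{2\log(2/\delta')/(N_t(s,a)-1)}$ and the analogous bound with range $\tfrac{1}{1-\gamma}$ for $\hat\sigma^p_t$.

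The remaining work is bookkeeping of the confidence budget. There are $nSA$ reward events, $nSA|\Theta|$ transition events, $nSA$ reward-variance events and $nSA|\Theta|$ value-variance events, i.e.\ $2nSA(|\Theta|+1)$ events in total. Allocating failure probability $\delta'=\tfrac{\delta}{2nSA(|\Theta|+1)}$ to each and summing gives total failure at most $\delta$ by a union bound. Substituting this $\delta'$ makes $\log(4/\delta')=\log\bigl(8SAn(|\Theta|+1)/\delta\bigr)$ for the empirical-Bernstein events, matching the radii $C^{r}$ and $C^{p}$, and $\log(2/\delta')=\log\bigl(4SAn(|\Theta|+1)/\delta\bigr)$ for the variance events, matching $C^{\sigma^r}$ and $C^{\sigma^p}$; this is exactly how the stated constants arise.

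The step I expect to demand the most care is the adaptivity of the data: the sampling rule at line 19 selects $(S_t,A_t)$ from the history, so $N_t(s,a)$ is a random, history-dependent count rather than a fixed sample size, whereas Lemmas~\ref{lemma:bernstein}--\ref{lemma:emp-bernstein} are stated for a fixed number of i.i.d.\ draws. I would resolve this by fixing $(s,a)$, viewing its successive generative-model queries as a fixed i.i.d.\ stream (each query returns a fresh independent sample), applying the relevant inequality for \emph{every} possible count $N\in\{1,\dots,n\}$, and taking a union over $N$; since $N_t(s,a)\le t\le n$, this extra union is precisely the factor $n$ already present in every log argument, so the per-count bounds transfer to the random $N_t(s,a)$ at no additional cost. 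Intersecting all the good events and combining the statistical bounds with the bias bounds then shows every condition holds for $\theta^*$ at all $t,s,a,\theta'$, so $\theta^*$ is never eliminated and $E$ holds with probability at least $1-\delta$.
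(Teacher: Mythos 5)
Your proposal is correct and follows essentially the same route as the paper's proof: a triangle-inequality split of each deviation into a statistical part plus a bias part bounded via Assumption~\ref{ass:model-err}, empirical Bernstein (Lemma~\ref{lemma:emp-bernstein}) for the reward and transition means with ranges $1$ and $\tfrac{1}{1-\gamma}$, the Maurer--Pontil concentration bound for the empirical standard deviations, and a union bound over the $2SAn(|\Theta|+1)$ events that reproduces exactly the stated constants. Your explicit treatment of the adaptive counts $N_t(s,a)$ (applying the inequalities for every possible count and absorbing the union into the factor $n$) is the same mechanism the paper invokes implicitly when it takes the union bound ``over the maximum number of samples $n$.''
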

\begin{proof}
Take any step $t\geq 1$, any state-action pair $(s,a) \in \mathcal{S}\times\mathcal{A}$, any model $\theta'\in\Theta$, and let $\delta'>0$. We need to show that the conditions of \eqref{eq:single-conf-r}-\eqref{eq:single-conf-sp} hold. First notice that these conditions trivially hold if $N_t(s,a)\leq 1$. Thus, suppose that $N_t(s,a) > 1$ so that the confidence intervals are well-defined. Using the triangle inequality we have that
\begin{align*}
	| \hat{r}_t(s,a) - \tilde{r}_{\theta^*}(s,a) | \leq |\hat{r}_t(s,a) - r_{\theta^*}(s,a)| +  \Delta_{\mathrm{max}}^r
\end{align*}
\begin{align*}
	| (\tilde{p}_{\theta^*}(s,a) - \hat{p}_{t}(s,a))\tr \tilde{V}^{*}_{\theta'} | \leq | (p_{\theta^*}(s,a) - \hat{p}_{t}(s,a))\tr \tilde{V}^{*}_{\theta'} | + \Delta_{\mathrm{max}}^p
\end{align*}
\begin{align*}
	|\hat{\sigma}^{r}_{t}(s,a) - \tilde{\sigma}^{r}_{\theta^*}(s,a) | \leq |\hat{\sigma}^{r}_{t}(s,a) - \sigma^{r}_{\theta^*}(s,a) | + \Delta_{\mathrm{max}}^{\sigma_r},
\end{align*}
\begin{align*}
	|\hat{\sigma}^{p}_{t}(s,a;\theta') - \hat{\sigma}^{p}_{\theta^*}(s,a;\theta') | \leq |\hat{\sigma}^{p}_{t}(s,a;\theta') - \sigma^{p}_{\theta^*}(s,a;\theta') | + \Delta_{\mathrm{max}}^{\sigma_p}.
\end{align*}
Using Lemma \ref{lemma:emp-bernstein}, we have that, with probability at least $1-\delta'$,
\begin{align*}
    |\hat{r}_t(s,a) - r_{\theta^*}(s,a)| \leq \sqrt{\frac{2\hat{\sigma}_t^r(s,a)^2\log\frac{4}{\delta'}}{N_t(s,a)}}+\frac{7\log\frac{4}{\delta'}}{3(N_{t}(s,a)-1)}.
\end{align*}
Similarly, for any $\theta' \in \Theta$, we have that, with probability at least $1-\delta'$,
\begin{align*}
    |(p_{\theta^*}(s,a)-\hat{p}_{t}(s,a))\tr \tilde{V}_{\theta'}^*| \leq \sqrt{\frac{2\hat{\sigma}_t^p(s,a;\tilde{V}_{\theta'}^*)^2\log\frac{4}{\delta'}}{N_t(s,a)}}+\frac{7\log\frac{4}{\delta'}}{3(N_{t}(s,a)-1)(1-\gamma)}.
\end{align*}
From Theorem 10 of \cite{maurer2009empirical},
\begin{align*}
    |\hat{\sigma}_t^r(s,a) - \sigma_{\theta^*}^r(s,a)| \leq \sqrt{\frac{2\log\frac{2}{\delta'}}{N_{t}(s,a)-1}}
\end{align*}
and
\begin{align*}
    |\hat{\sigma}_t^p(s,a;\theta') - \sigma_{\theta^*}^p(s,a;\theta')| \leq \frac{1}{1-\gamma}\sqrt{\frac{2\log\frac{2}{\delta'}}{N_{t}(s,a)-1}}
\end{align*}
hold with probability at least $1-\delta'$, respectively.

Taking union bounds over all state action pairs and over the maximum number of samples $n$, these four inequalities hold at the same time with probability at least $1 - 2SA(|\Theta|+1)n\delta'$. The result follows after setting $\delta = 2SAn(|\Theta|+1)\delta'$ and rearranging.

\end{proof}

Next we bound the number of samples required from some state-action pair in order to eliminate a model from the confidence set.

\begin{lemma}[Model elimination]\label{lemma:model-elim}
Let $\theta \in \Theta$, $(s,a) \in \mathcal{S} \times \mathcal{A}$, $\Delta := \max\{\Delta_{\mathrm{max}}^r, \Delta_{\mathrm{max}}^p, \Delta_{\mathrm{max}}^{\sigma_r}, \Delta_{\mathrm{max}}^{\sigma_p} \}$, and define
\begin{align*}
\bar{n}_{\theta}^{r}(s,a) := \min_{\theta''\in\bar{\Theta}_t}\max \left \{ \frac{ \tilde{\sigma}^{r}_{\theta''}(s,a)^2}{[\adevr{s,a}{\theta^*,\theta} - 4\Delta]_+^2}, \frac{1}{[\adevr{s,a}{\theta^*,\theta} - 4\Delta]_+} \right \} ,
\end{align*}
\begin{align*}
\bar{n}_{\theta}^p(s,a) := \min_{\theta'\in\Theta,\theta''\in\bar{\Theta}_t}\max \left \{ \frac{ \tilde{\sigma}^{p}_{\theta''}(s,a;\theta')^2}{[\adevp{s,a}{\theta^*,\theta}{\tilde{V}^*_{\theta'}} - 4\Delta]_+^2}, \frac{1/(1-\gamma)}{[\adevp{s,a}{\theta^*,\theta}{\tilde{V}^*_{\theta'}} - 4\Delta]_+} \right \}.
\end{align*}
Then, under event E, if $N_{t}(s,a) > \bar{n}_{\theta}(s,a) := 32 \log{\frac{8SAn(1+|\Theta|)}{\delta}} \min \{ \bar{n}_{\theta}^{r}(s,a), \bar{n}_{\theta}^{p}(s,a) \}$, we have that $\theta \notin \bar{\Theta}_{t}$.
\end{lemma}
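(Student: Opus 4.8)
The plan is to show that once $N_t(s,a)$ exceeds the stated threshold $\bar{n}_\theta(s,a)$, model $\theta$ necessarily violates at least one of the confidence conditions \eqref{eq:single-conf-r}-\eqref{eq:single-conf-sp} and is therefore eliminated. The argument is a triangle-inequality comparison working under event $E$, where $\theta^*$ stays in the confidence set, so the empirical quantities $\hat{r}_t, \hat{p}_t, \hat{\sigma}_t$ are pinned to within a confidence radius of the \emph{approximate} model $\tilde{\M}_{\theta^*}$. The key intuition is that if $\theta$ differs enough from $\theta^*$ in either reward or transition (as measured by $\adevr{s,a}{\theta^*,\theta}$ or $\adevp{s,a}{\theta^*,\theta}{\tilde V^*_{\theta'}}$), then the empirical model cannot simultaneously be close to both $\tilde{r}_{\theta^*}$ and $\tilde{r}_\theta$ (resp.\ the transition analogues), forcing $\theta$ out.

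First I would focus on the reward branch and fix the minimizing $\theta''$ in the definition of $\bar{n}_\theta^r$. Suppose, for contradiction, that $\theta \in \bar{\Theta}_t$, so \eqref{eq:single-conf-r} holds for $\theta$; under $E$, \eqref{eq:single-conf-r} also holds for $\theta^*$. Adding these two and applying the triangle inequality to the left-hand sides gives
\begin{align*}
|\tilde{r}_\theta(s,a) - \tilde{r}_{\theta^*}(s,a)| \leq C^r_{t,\delta}(s,a) + \tilde{C}^r_{t,\delta}(s,a),
\end{align*}
where the two confidence radii are evaluated using the appropriate empirical variances. The left side is exactly $\adevr{s,a}{\theta^*,\theta}$. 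I would then absorb the additive model-error terms $\Delta^r_{\max} \leq \Delta$ appearing in each $C^r$ (there are essentially $4\Delta$ worth of slack once the variance-estimate deviations are folded in via \eqref{eq:single-conf-sr}), moving them to the left to produce the shifted gap $[\adevr{s,a}{\theta^*,\theta} - 4\Delta]_+$. The surviving statistical part of each confidence radius has the Bernstein form $\sqrt{\hat\sigma^2 \log(\cdot)/N_t} + \log(\cdot)/N_t$; replacing the empirical variance $\hat\sigma^r_t$ by the model variance $\tilde\sigma^r_{\theta''}$ (again using \eqref{eq:single-conf-sr} under $E$) and bounding both terms by the dominant one yields a contradiction precisely when $N_t(s,a)$ exceeds $32 \log(\cdot)\, \bar{n}^r_\theta(s,a)$, where the $\max$ of the two terms in $\bar{n}^r_\theta$ matches the two Bernstein regimes (variance-dominated vs.\ range-dominated).

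The transition branch is entirely parallel, working with \eqref{eq:single-conf-p} and \eqref{eq:single-conf-sp}, with the extra $1/(1-\gamma)$ factors tracking the range of $\tilde V^*_{\theta'}$, and the minimization over $\theta' \in \Theta$ reflecting that eliminating $\theta$ only requires violating the condition for \emph{some} test vector $\tilde V^*_{\theta'}$. Since violating either branch suffices for elimination, taking the minimum $\bar{n}_\theta(s,a) = 32\log(\cdot)\min\{\bar{n}^r_\theta, \bar{n}^p_\theta\}$ gives the claim. The main obstacle I anticipate is bookkeeping the constants cleanly: one must verify that the empirical variance $\hat\sigma_t$, the model variance $\tilde\sigma_{\theta''}$, and the true variance $\sigma_{\theta^*}$ are all interchangeable up to $\Delta$ (absorbed into the $4\Delta$ shift) and that the loose numerical factor $32$ genuinely dominates both Bernstein terms after the substitutions — this requires carefully summing the two confidence radii, handling the $\log(\cdot)/N_t$ lower-order term so it does not degrade the rate, and checking that the $[\cdot]_+$ truncation does not create a vacuous bound when the shifted gap vanishes.
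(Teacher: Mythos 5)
Your proposal is correct and follows essentially the same route as the paper's proof: a contradiction argument that triangulates $\adevr{s,a}{\theta^*,\theta}$ (resp.\ the transition analogue) through the empirical model using the confidence conditions for both $\theta$ and $\theta^*$, substitutes the empirical variance by the model variance of an active $\theta''$ via the $\sigma$-conditions, absorbs the model-error terms into the $4\Delta$ shift, and balances the two Bernstein regimes against the two terms in the $\max$, with the minimum over the reward and transition branches (and over $\theta'$) reflecting that violating any single condition suffices for elimination. The only cosmetic difference is that the paper uses a single common radius $C^r_{t,\delta}(s,a)$ for both models (giving $2C^r_{t,\delta}$) rather than two separate radii, which does not change the argument.
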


\begin{proof}
We split the proof into two parts, dealing with rewards and transitions separately. We then combine these results to obtain the final statement.
\paragraph{Elimination by rewards}
Assuming ${\theta} \in \bar{\Theta}_{t}$, we must have, for all state-action pairs and all $\theta''\in\bar{\Theta}_t$,
\begin{align*}
	\adevr{s,a}{\theta^*,\theta} &\leq |\tilde{r}_{\theta}(s,a) - \hat{r}_{t}(s,a)| + |\tilde{r}_{\theta^{*}}(s,a) - \hat{r}_{t}(s,a)| \leq 2 C^{r}_{t,\delta}(s,a) \\  &\le 2 \sqrt{\frac{ 2\hat{\sigma}^{r}_{t}(s,a)^2 \log{\frac{8SAn(|\Theta| + 1)}{\delta}}}{N_{t}(s,a)}} + \frac{14 \log{\frac{8SAn(|\Theta|+1)}{\delta}}}{3(N_{t}(s,a)-1)} + 2\Delta_{\mathrm{max}}^r \\ &\leq 2\sqrt{\frac{2\tilde{\sigma}_{\theta''}^r(s,a)^2\log\frac{8SAn(|\Theta|+1)}{\delta}}{N_t(s,a)}}+\frac{4\log\frac{8SAn(|\Theta|+1)}{\delta}}{(N_{t}(s,a)-1)}+\frac{14\log\frac{8SAn(|\Theta|+1)}{\delta}}{3(N_{t}(s,a)-1)} + 2\Delta_{\mathrm{max}}^r + 2\Delta_{\mathrm{max}}^{\sigma_r} \\ &\le 2 \sqrt{\frac{ 2\tilde{\sigma}^{r}_{\theta''}(s,a)^2 \log{\frac{8SAn(|\Theta| + 1)}{\delta}}}{N_{t}(s,a)}} + \frac{26 \log{\frac{8SAn(|\Theta|+1)}{\delta}}}{3(N_{t}(s,a)-1)} + 4\Delta
\end{align*}
where we applied the triangle inequality and used Lemma \ref{lemma:conf} to upper bound the empirical variance by the variance of a model $\theta''$ in the confidence set. We note that, if the model uncertainty is too high and the denominators of $\bar{n}_\theta(s,a)$ are zero, it is not possible to eliminate $\theta$ from the rewards of this state-action pair. If this is not the case, for $N_{t}(s,a) \ge \bar{n}_{\theta}(s,a)$ we have that
\begin{align*}
	2 \sqrt{\frac{ 2\tilde{\sigma}^{r}_{\theta''}(s,a)^2 \log{\frac{8SAn(|\Theta| + 1)}{\delta}}}{N_{t}(s,a)}} < \frac{	\adevr{s,a}{\theta^*,\theta} - 4\Delta}{2}
\end{align*}
and
\begin{align*}
\frac{26 \log{\frac{8SAn(|\Theta|+1)}{\delta}}}{3(N_{t}(s,a)-1)} < \frac{	\adevr{s,a}{\theta^*,\theta} - 4\Delta}{2}.
\end{align*}
Plugging these two inequalities in the first upper bound leads to the contradiction $\adevr{s,a}{\theta^*,\theta} < \adevr{s,a}{\theta^*,\theta}$, hence it must be that $\theta \notin \bar{\Theta}_{t}$.

\paragraph{Elimination by transition}
The proof proceeds analogously to the previous case. Let $\theta'\in\Theta$ and $\theta''\in\bar{\Theta}_t$, then
\begin{align*}
	\adevp{s,a}{\theta^*,\theta}{\tilde{V}^*_{\theta'}} &\le |(\hat{p}_{t}(s,a) - \tilde{p}_{\theta^{*}}(s,a))\tr \tilde{V}^{*}_{\theta'}| + |(\hat{p}_{t}(s,a) - \tilde{p}_{\theta}(s,a))\tr \tilde{V}^{*}_{\theta'}|  \le 2 C^{p}_{t,\delta}(s,a; \theta') \\  &\le 2 \sqrt{\frac{ 2\hat{\sigma}^{p}_{t}(s,a;\theta') \log{\frac{8SAn(|\Theta| + 1)}{\delta}}}{N_{t}(s,a)}} + \frac{14 \log{\frac{8SAn(|\Theta|+1)}{\delta}}}{3(N_{t}(s,a)-1)(1-\gamma)} + 2\Delta_{\mathrm{max}}^p \\ &\le 2 \sqrt{\frac{ 2\tilde{\sigma}^{p}_{\theta''}(s,a;\theta') \log{\frac{8SAn(|\Theta| + 1)}{\delta}}}{N_{t}(s,a)}} + \frac{26 \log{\frac{8SAn(|\Theta|+1)}{\delta}}}{3(N_{t}(s,a)-1)(1-\gamma)} + 4\Delta,
\end{align*}
where once again we applied the triangle inequality and Lemma \ref{lemma:conf} to upper bound the empirical variance. Hence, applying the same reasoning as before we obtain a contradiction, which in turns implies that $\theta \notin \bar{\Theta}_{t}$.

We finally note that if $\bar{n}_\theta(s,a) = +\infty$, i.e., the approximate models are too inaccurate, it is not possible to eliminate $\theta$ using this state-action pair.
\end{proof}

The following is a known result which bounds the deviation in value function between different MDPs.

\begin{lemma}[Simulation lemma]\label{lemma:simulation}
Let $\theta,\theta' \in \Theta$, $s\in\mathcal{S}$, and $\pi$ be any policy. Denote by $\nu_\theta^\pi(s',a';s)$ the discounted state-action visitation frequencies \cite{sutton2000policy} of $\pi$ in MDP $\theta$ starting from $s$. Then, the following hold:
\begin{align*}
    |V_{\theta}^\pi(s) - V_{\theta'}^{\pi}(s)| \leq \sum_{s',a'} \nu_{\theta'}^{\pi}(s',a';s)\left[ |r_\theta(s',a') - r_{\theta'}(s',a')| + \gamma |(p_{\theta}(s',a')-p_{\theta'}(s',a'))\tr V_{\theta}^\pi|\right].
\end{align*}
\begin{align*}
    |V_{\theta'}^*(s) - V_{\theta}^*(s)| \leq \max_{\pi \in \{\pi^*_\theta, \pi^*_{\theta'}\}} \sum_{s',a'} \nu_{\theta'}^{\pi}(s',a';s)\left[ |r_\theta(s',a') - r_{\theta'}(s',a')| + \gamma |(p_{\theta}(s',a')-p_{\theta'}(s',a'))\tr V_{\theta}^*|\right].
\end{align*}
\end{lemma}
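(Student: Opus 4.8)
The final statement to prove is the Simulation Lemma (Lemma~\ref{lemma:simulation}), which bounds the deviation in value functions between two MDPs $\theta$ and $\theta'$ sharing the same state-action space but differing in rewards and transitions.

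The plan is to establish the first inequality (same policy $\pi$ in both MDPs) via a telescoping/fixed-point argument, and then leverage it to derive the second inequality (optimal value functions) through a clever choice of policy. For the first bound, I would write $V_\theta^\pi$ and $V_{\theta'}^\pi$ using their Bellman equations. Since $V_\theta^\pi(s) = r_\theta(s,\pi(s)) + \gamma\, p_\theta(s,\pi(s))\tr V_\theta^\pi$, and similarly for $\theta'$, I would subtract the two equations to obtain a recursive relation for the difference $V_\theta^\pi - V_{\theta'}^\pi$. The key algebraic trick is to add and subtract a cross term, writing the transition discrepancy in the form $\gamma(p_\theta - p_{\theta'})\tr V_\theta^\pi + \gamma\, p_{\theta'}\tr(V_\theta^\pi - V_{\theta'}^\pi)$. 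This isolates a ``source'' term (the per-step reward and transition mismatch evaluated against $V_\theta^\pi$) from a ``propagation'' term governed by the transition kernel $p_{\theta'}$ of the second MDP.

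The next step is to unroll this recursion. Because the propagation operator is $\gamma\, p_{\theta'}$ acting along trajectories of $\pi$ in MDP $\theta'$, iterating the fixed-point relation produces exactly a sum over the discounted state-action visitation frequencies $\nu_{\theta'}^\pi(s',a';s)$ weighting the per-step mismatch terms. Applying the triangle inequality to the reward and transition components then yields the stated bound. I would note that the absolute value $|(p_\theta(s',a') - p_{\theta'}(s',a'))\tr V_\theta^\pi|$ is the correct quantity appearing because $\nu_{\theta'}^\pi \geq 0$, so the triangle inequality passes cleanly through the nonnegative weights.

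For the second inequality, the standard technique is to bound $V_{\theta'}^*(s) - V_\theta^*(s)$ from both sides using the first inequality together with the optimality of each value function. Specifically, $V_{\theta'}^*(s) - V_\theta^*(s) = V_{\theta'}^{\pi^*_{\theta'}}(s) - V_\theta^{\pi^*_\theta}(s) \leq V_{\theta'}^{\pi^*_{\theta'}}(s) - V_\theta^{\pi^*_{\theta'}}(s)$, since $\pi^*_\theta$ is optimal in $\theta$; applying the first bound with $\pi = \pi^*_{\theta'}$ controls this. Symmetrically, using $\pi = \pi^*_\theta$ bounds the difference in the other direction. Taking the maximum over the two policies $\pi \in \{\pi^*_\theta, \pi^*_{\theta'}\}$ covers both signs and gives the claimed result. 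The main obstacle—though it is more of a bookkeeping subtlety than a deep difficulty—is handling the asymmetry introduced by the choice of which value function ($V_\theta^\pi$ versus $V_{\theta'}^\pi$) appears inside the transition-gap term after unrolling; one must consistently expand around the kernel $p_{\theta'}$ so that the visitation measure is $\nu_{\theta'}^\pi$ while the value function inside the gap is $V_\theta^\pi$, matching the stated form exactly.
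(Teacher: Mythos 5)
Your proof of the first inequality is correct and is the standard argument: subtracting the two Bellman equations, inserting the cross term $\gamma\, p_{\theta'}(s,\pi(s))\tr V_\theta^\pi$, and unrolling the resulting affine recursion driven by the nonnegative operator $\gamma P_{\theta'}^\pi$ produces exactly the discounted visitation weights $\nu_{\theta'}^\pi$, after which the triangle inequality passes through the nonnegative weights to give the stated bound. (For reference, the paper itself does not prove this lemma; it defers to Lemma 3 of Zanette and Brunskill (2019) and Lemma 2 of Azar et al. (2013), so a self-contained argument is a reasonable thing to attempt.)

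However, your derivation of the second inequality has a genuine gap in one of the two directions. The direction $V_\theta^*(s) - V_{\theta'}^*(s)$ is fine: bounding it by $V_\theta^{\pi^*_\theta}(s) - V_{\theta'}^{\pi^*_\theta}(s)$ and applying the first inequality with $\pi = \pi^*_\theta$ puts $V_\theta^{\pi^*_\theta} = V_\theta^*$ inside the transition-gap term, matching the statement. But for the opposite direction you bound $V_{\theta'}^*(s) - V_\theta^*(s) \leq V_{\theta'}^{\pi^*_{\theta'}}(s) - V_\theta^{\pi^*_{\theta'}}(s)$ and then invoke the first inequality with $\pi = \pi^*_{\theta'}$; this produces $|(p_\theta(s',a')-p_{\theta'}(s',a'))\tr V_\theta^{\pi^*_{\theta'}}|$ inside the sum, whereas the lemma requires $|(p_\theta(s',a')-p_{\theta'}(s',a'))\tr V_\theta^*|$. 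These are not interchangeable: although $V_\theta^{\pi^*_{\theta'}} \leq V_\theta^*$ pointwise, the vector $p_\theta(s',a')-p_{\theta'}(s',a')$ has mixed signs, so under the absolute value neither bound dominates the other, and your argument establishes a different inequality from the one claimed. The fix is to not use the first inequality as a black box for this direction. Writing $a_s = \pi^*_{\theta'}(s)$, use the per-state optimality $V_\theta^*(s) \geq r_\theta(s,a_s) + \gamma\, p_\theta(s,a_s)\tr V_\theta^*$ together with the Bellman equation $V_{\theta'}^*(s) = r_{\theta'}(s,a_s) + \gamma\, p_{\theta'}(s,a_s)\tr V_{\theta'}^*$, and insert the cross term $\gamma\, p_{\theta'}(s,a_s)\tr V_\theta^*$ to obtain
\begin{align*}
V_{\theta'}^*(s) - V_\theta^*(s) \leq |r_\theta(s,a_s) - r_{\theta'}(s,a_s)| + \gamma\, |(p_\theta(s,a_s)-p_{\theta'}(s,a_s))\tr V_\theta^*| + \gamma\, p_{\theta'}(s,a_s)\tr \left(V_{\theta'}^* - V_\theta^*\right).
\end{align*}
Since the map $x \mapsto g + \gamma P_{\theta'}^{\pi^*_{\theta'}} x$ is monotone and contractive, iterating this inequality and letting the remainder term vanish yields the claimed bound with visitation measure $\nu_{\theta'}^{\pi^*_{\theta'}}$ and $V_\theta^*$ inside the gap; combined with the direction you handled correctly, taking the maximum over $\pi \in \{\pi^*_\theta, \pi^*_{\theta'}\}$ gives the stated result.
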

\begin{proof}
See, e.g., Lemma 3 of \citep{zanette2019almost} for the first inequality and Lemma 2 of \citep{azar2013minimax} or Lemma 2 of \citep{zanette2019almost} for the second one.\footnote{Note that, although the inequalities of, e.g., \citet{azar2013minimax} and \citet{zanette2019almost} relate the value functions of a fixed MDP with those of its empirical counterpart, they actually hold for any two MDPs.}
\end{proof}

\begin{corollary}[Value-function error decomposition]\label{cor:v-error-decomposition}
Let $\theta,\theta' \in \Theta$ and $s\in\mathcal{S}$. Then,
\begin{align*}
    |V_{\theta'}^*(s) - V_{\theta'}^{\pi_\theta^*}(s)| \leq 2\max_{\pi \in \{\pi^*_\theta, \pi^*_{\theta'}\}} \sum_{s',a'} \nu_{\theta'}^{\pi}(s',a';s)\left[ |r_\theta(s',a') - r_{\theta'}(s',a')| + \gamma |(p_{\theta}(s',a')-p_{\theta'}(s',a'))^TV_{\theta}^*|\right].
\end{align*}
\end{corollary}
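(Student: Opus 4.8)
The plan is to derive the corollary directly from the second inequality of the Simulation Lemma (Lemma~\ref{lemma:simulation}) by inserting an intermediate term and applying the triangle inequality. The quantity of interest, $|V_{\theta'}^*(s) - V_{\theta'}^{\pi_\theta^*}(s)|$, measures how far the optimal policy of $\theta$ is from optimal when evaluated in $\theta'$. The natural intermediate quantity to introduce is $V_{\theta}^*(s)$, the optimal value of $\theta$ in its own MDP, since $\pi_\theta^*$ is optimal there. First I would write
\begin{align*}
|V_{\theta'}^*(s) - V_{\theta'}^{\pi_\theta^*}(s)| \leq |V_{\theta'}^*(s) - V_{\theta}^*(s)| + |V_{\theta}^*(s) - V_{\theta'}^{\pi_\theta^*}(s)|.
\end{align*}

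For the first term on the right, I would apply the second inequality of Lemma~\ref{lemma:simulation} directly with the roles of $\theta,\theta'$ chosen so that the bound is expressed in terms of $\nu_{\theta'}^\pi$ and $V_\theta^*$, matching the form in the target statement. For the second term, the key observation is that $V_{\theta}^*(s) = V_{\theta}^{\pi_\theta^*}(s)$, because $\pi_\theta^*$ is by definition optimal in $\theta$. Hence $|V_{\theta}^*(s) - V_{\theta'}^{\pi_\theta^*}(s)| = |V_{\theta}^{\pi_\theta^*}(s) - V_{\theta'}^{\pi_\theta^*}(s)|$, which is now a comparison of the \emph{same} policy $\pi_\theta^*$ across the two MDPs. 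This is exactly the setting of the first inequality of Lemma~\ref{lemma:simulation}, applied with $\pi = \pi_\theta^*$, yielding a bound in terms of $\nu_{\theta'}^{\pi_\theta^*}$ and $V_\theta^{\pi_\theta^*} = V_\theta^*$.

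I would then combine the two bounds. The first inequality contributes a term indexed by $\pi = \pi_\theta^*$, while the second contributes a maximum over $\pi \in \{\pi_\theta^*, \pi_{\theta'}^*\}$; since $\pi_\theta^*$ is one of the two policies in that maximum, both contributions are dominated by the same $\max_{\pi \in \{\pi_\theta^*, \pi_{\theta'}^*\}} \sum_{s',a'} \nu_{\theta'}^{\pi}(s',a';s)[\cdots]$ expression. Adding them produces the factor of $2$ in the statement, giving precisely the claimed bound.

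The only delicate point is bookkeeping the orientation of the two MDPs so that every term in both invocations of Lemma~\ref{lemma:simulation} is written against the same visitation measure $\nu_{\theta'}^\pi$ and the same value vector $V_\theta^*$ (rather than, say, $V_{\theta'}^*$); the footnote in Lemma~\ref{lemma:simulation} guarantees the inequalities hold for arbitrary pairs of MDPs, so I have the freedom to assign the labels consistently. I expect this label-matching—ensuring the $r_\theta - r_{\theta'}$ and $(p_\theta - p_{\theta'})\tr V_\theta^*$ integrands coincide exactly between the two applications—to be the main (and essentially only) obstacle, and it is routine rather than conceptual.
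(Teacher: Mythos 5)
Your proposal is correct and matches the paper's own proof essentially step for step: the same triangle inequality through the intermediate term $V_\theta^*(s)$, the second inequality of Lemma~\ref{lemma:simulation} for the first piece, the first inequality with $\pi = \pi_\theta^*$ (using $V_\theta^* = V_\theta^{\pi_\theta^*}$) for the second piece, and domination of both by the common maximum over $\{\pi_\theta^*, \pi_{\theta'}^*\}$ to produce the factor of $2$. No gaps; the label-matching concern you flag is handled exactly as you anticipate.
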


\begin{proof}
Using the triangle inequality,
\begin{align*}
    |V_{\theta'}^*(s) - V_{\theta'}^{\pi_\theta^*}(s)| \leq \underbrace{|V_{\theta'}^*(s) - V_{\theta}^*(s)|}_{(a)} + \underbrace{|V_{\theta}^*(s) - V_{\theta'}^{\pi_\theta^*}(s)|}_{(b)}.
\end{align*}
We can bound (a) using the second inequality in Lemma \ref{lemma:simulation} as
\begin{align*}
    |V_{\theta'}^*(s) - V_{\theta}^*(s)| \leq \max_{\pi \in \{\pi^*_\theta, \pi^*_{\theta'}\}} \sum_{s',a'} \nu_{\theta'}^{\pi}(s',a';s)\left[ |r_\theta(s',a') - r_{\theta'}(s',a')| + \gamma |(p_{\theta}(s',a')-p_{\theta'}(s',a'))^TV_{\theta}^*|\right].
\end{align*}
Similarly, we can use the first inequality in Lemma \ref{lemma:simulation} to bound (b) by noticing that $V^*_\theta = V^{\pi^*_\theta}_\theta$. We have
\begin{align*}
    |V_{\theta}^*(s) - V_{\theta'}^{\pi_\theta^*}(s)| &\leq \sum_{s',a'} \nu_{\theta'}^{\pi_\theta^*}(s',a';s)\left[ |r_\theta(s',a') - r_{\theta'}(s',a')| + \gamma |(p_{\theta}(s',a')-p_{\theta'}(s',a'))^TV_{\theta}^*|\right]] \\ &\leq \max_{\pi \in \{\pi^*_\theta, \pi^*_{\theta'}\}} \sum_{s',a'} \nu_{\theta'}^{\pi}(s',a';s)\left[ |r_\theta(s',a') - r_{\theta'}(s',a')| + \gamma |(p_{\theta}(s',a')-p_{\theta'}(s',a'))^TV_{\theta}^*|\right].
\end{align*}
Combining the two displays above concludes the proof.
\end{proof}

The following lemma ensures that, if the algorithm did not stop at a certain time $t$, certain models belong to the confidence set.

\begin{lemma}[Stopping condition]\label{lemma:theta-eps}
Let $\tau$ be the random stopping time of Algorithm \ref{alg:policy-transfer} and
\begin{align*}
\Theta_\epsilon := \left\{\theta\in\Theta\ \Big| \ \| \tilde{r}_\theta - \tilde{r}_{\theta^{*}} \| > \kappa_\epsilon \vee  \| (\tilde{p}_\theta - \tilde{p}_{\theta^{*}})\tr \tilde{V}^*_{\theta^*} \| > \frac{\kappa_\epsilon}{\gamma} \right\},
\end{align*}
where $\kappa_\epsilon := \frac{(1-\gamma)\epsilon}{4}-\frac{\Delta(1+\gamma)}{2}$. Then, under event $E$, for all $t < \tau$, there exists at least one model $\theta\in\Theta_\epsilon$ such that $\theta\in\bar{\Theta}_t$.
\end{lemma}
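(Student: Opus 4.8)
The plan is to argue by contradiction through the Step~3 stopping rule of Algorithm~\ref{alg:policy-transfer}. Concretely, I would establish the contrapositive: if at some step $t$ the confidence set contains no model from $\Theta_\epsilon$, i.e.\ $\bar{\Theta}_t \cap \Theta_\epsilon = \emptyset$, then the stopping test is already satisfied, forcing $\tau \leq t$ and thus contradicting $t < \tau$. The witness for the existential in Step~3 will be the optimal policy $\tilde{\pi}_{\theta^*}^*$ of the true model. This is legitimate because, under event $E$, Lemma~\ref{lemma:conf} guarantees $\theta^* \in \bar{\Theta}_t$, so $\theta^*$ is a valid active model to plug into the stopping condition.

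Next I would convert the assumption $\bar{\Theta}_t \cap \Theta_\epsilon = \emptyset$ into pointwise ($\ell_\infty$) bounds. For every surviving $\theta' \in \bar{\Theta}_t$, negating membership in $\Theta_\epsilon$ yields simultaneously $\| \tilde{r}_{\theta'} - \tilde{r}_{\theta^*} \| \leq \kappa_\epsilon$ and $\| (\tilde{p}_{\theta'} - \tilde{p}_{\theta^*})\tr \tilde{V}^*_{\theta^*} \| \leq \kappa_\epsilon / \gamma$. I would then invoke the value-function error decomposition of Corollary~\ref{cor:v-error-decomposition}, applied to the two \emph{approximate} MDPs $\tilde{\M}_{\theta^*}$ and $\tilde{\M}_{\theta'}$ (the underlying simulation lemma holds for any pair of MDPs). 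This bounds $|\tilde{V}_{\theta'}^*(s) - \tilde{V}_{\theta'}^{\tilde{\pi}_{\theta^*}^*}(s)|$ by twice a discounted sum over visitation frequencies of the per-state-action reward gap $|\tilde{r}_{\theta^*} - \tilde{r}_{\theta'}|$ plus $\gamma$ times the transition gap $|(\tilde{p}_{\theta^*} - \tilde{p}_{\theta'})\tr \tilde{V}^*_{\theta^*}|$. Crucially, the transition term is measured against exactly $\tilde{V}^*_{\theta^*}$, matching the quantity defining $\Theta_\epsilon$.

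The calculation then closes cleanly. Substituting the two pointwise bounds, each bracketed term is at most $\kappa_\epsilon + \gamma(\kappa_\epsilon/\gamma) = 2\kappa_\epsilon$, and since the discounted visitation frequencies sum to $1/(1-\gamma)$, I obtain $|\tilde{V}_{\theta'}^*(s) - \tilde{V}_{\theta'}^{\tilde{\pi}_{\theta^*}^*}(s)| \leq 4\kappa_\epsilon/(1-\gamma)$ uniformly over $s$. Plugging in $\kappa_\epsilon = \frac{(1-\gamma)\epsilon}{4} - \frac{\Delta(1+\gamma)}{2}$ gives exactly $4\kappa_\epsilon/(1-\gamma) = \epsilon - \frac{2\Delta(1+\gamma)}{1-\gamma}$, which is precisely the slack in the Step~3 test. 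Hence $\tilde{V}_{\theta'}^{\tilde{\pi}_{\theta^*}^*} \geq \tilde{V}_{\theta'}^* - \epsilon + \frac{2\Delta(1+\gamma)}{1-\gamma}$ for all $\theta' \in \bar{\Theta}_t$, so the algorithm would stop at $t$, the desired contradiction.

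The genuinely delicate point—rather than any heavy computation—is bookkeeping the value function against which deviations are measured: both the definition of $\Theta_\epsilon$ and the transition gaps emerging from the simulation lemma must use the true model's value $\tilde{V}^*_{\theta^*}$ (not $\tilde{V}^*_{\theta'}$), so that the pointwise bounds transfer verbatim, and the constant $\kappa_\epsilon$ must be calibrated so the resulting bound coincides exactly with the stopping slack. I would also verify $\kappa_\epsilon > 0$, which follows from the transfer-mode Assumption~\ref{ass:ptum1} and ensures the conditions defining $\Theta_\epsilon$ are non-vacuous.
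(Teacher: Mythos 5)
Your proposal is correct and follows essentially the same route as the paper's proof: both argue by contradiction using $\tilde{\pi}^*_{\theta^*}$ as the Step~3 witness (valid since $\theta^*\in\bar{\Theta}_t$ under event $E$), apply Corollary~\ref{cor:v-error-decomposition} to the approximate models with the transition gaps measured against $\tilde{V}^*_{\theta^*}$, and close with the same computation $4\kappa_\epsilon/(1-\gamma) = \epsilon - 2\Delta(1+\gamma)/(1-\gamma)$.
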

\begin{proof}
We note that, for all $t < \tau$, under event $E$, it must be that
\begin{align*}
\exists \theta \in \bar{\Theta}_t, s \in \mathcal{S} :  \tilde{V}_{\theta}^{\tilde{\pi}^*_{\theta^*}}(s) < \tilde{V}_{\theta}^{*}(s) - \epsilon + 2\Delta\frac{(1+\gamma)}{1-\gamma},
\end{align*}
otherwise the algorithm would stop before $\tau$. This implies that $|\tilde{V}_{\theta}^*(s) - \tilde{V}_{\theta}^{\tilde{\pi}^*_{\theta^*}}(s)| > \epsilon - 2\Delta\frac{(1+\gamma)}{1-\gamma}$ holds as well and, using Corollary \ref{cor:v-error-decomposition},
\begin{align}\label{eq:v-error}
    2\max_{\pi \in \{ \tilde{\pi}^*_{\theta^*}, \tilde{\pi}^*_{\theta}\}} \sum_{s',a'} \nu_{\theta}^{\pi}(s',a';s)\left[ |\tilde{r}_\theta(s',a') - \tilde{r}_{\theta^{*}}(s',a')| + \gamma |(\tilde{p}_{\theta}(s',a')-\tilde{p}_{\theta^{*}}(s',a'))\tr \tilde{V}^{*}_{\theta^{*}}|\right] > \epsilon - 2\Delta\frac{(1+\gamma)}{1-\gamma}
\end{align}
holds for some $\theta\in\bar{\Theta}_t$ and $s\in\mathcal{S}$. Assume that all models in $\Theta_\epsilon$ have been eliminated. Then, using that $\nu$ sums up to $1/(1-\gamma)$ and that all models must be sufficiently close to $\theta^*,$ the left-hand side of this inequality can be upper bounded by $\epsilon - 2\Delta\frac{(1+\gamma)}{1-\gamma}$. Hence, we obtain a contradiction and it must be that $\theta\in\tilde{\Theta}_t$ for some $\theta\in\Theta_\epsilon$.
\end{proof}

\begin{lemma}[Positive index]\label{lemma:idx-positive}
Let $\tau$ be the random stopping time of Algorithm \ref{alg:policy-transfer}, then, under event $E$,
\begin{align*}
	\forall t < \tau: \Psi_{t}(S_{t}, A_{t}) > 0 
\end{align*}
\end{lemma}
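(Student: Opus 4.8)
The plan is to prove the contrapositive: assuming $\Psi_t(S_t, A_t) = 0$ for some $t < \tau$, I will derive that the stopping condition must actually hold at time $t$, contradicting $t < \tau$. Since $(S_t, A_t)$ is chosen in Step 4 to maximize the index over all state-action pairs, $\Psi_t(S_t, A_t) = 0$ forces $\Psi_t(s,a) = 0$ for \emph{every} $s,a$. Unfolding the definition (lines 16--18 together with Definition~\ref{def:psi}), this means $\Psi_{s,a}(\theta, \theta') = 0$ for all $s,a$ and all pairs $\theta, \theta' \in \bar{\Theta}_t$, which in turn (since $\Psi_{s,a}$ is a max of two nonnegative min-terms) requires $\adevr{s,a}{\theta,\theta'} \le 8\Delta$ and $\adevp{s,a}{\theta,\theta'}{\tilde V^*_{\theta'}} \le 8\Delta$ for every such $s,a,\theta,\theta'$ (after the $[\,\cdot - 8\Delta]_+$ truncation collapses to zero).

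Next I would translate these pointwise closeness bounds on rewards and transitions into a bound on the value-function discrepancies between active models. The key tool is Corollary~\ref{cor:v-error-decomposition}: for any two active models $\theta, \theta' \in \bar{\Theta}_t$ and any starting state $s$, it bounds $|\tilde V^*_{\theta'}(s) - \tilde V_{\theta'}^{\tilde\pi^*_\theta}(s)|$ by twice a $\nu$-weighted sum of the per-$(s',a')$ reward and (discounted) transition gaps. Plugging in the uniform bounds $\adevr{s',a'}{\theta,\theta'} \le 8\Delta$ and $\gamma\,\adevp{\cdots}{} \le 8\gamma\Delta$ and using that the visitation frequencies $\nu$ sum to $1/(1-\gamma)$, the whole sum collapses to something of order $\frac{16\Delta(1+\gamma)}{1-\gamma}$ (mirroring the arithmetic in the proof of Lemma~\ref{lemma:theta-eps}, just with $8\Delta$ in place of the role of $\kappa_\epsilon$). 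The point is that every active model's optimal policy is within this $\Delta$-controlled margin of optimal in every other active model.

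I would then invoke Assumption~\ref{ass:ptum1} (equivalently the transfer-mode condition $\Delta < \frac{\epsilon(1-\gamma)}{4(1+\gamma)}$): it guarantees $\frac{16\Delta(1+\gamma)}{1-\gamma}$ is strictly smaller than the slack $\epsilon - \frac{2\Delta(1+\gamma)}{1-\gamma}$ appearing in the Step~3 stopping test. Hence picking any $\theta \in \bar{\Theta}_t$, its policy $\tilde\pi^*_\theta$ satisfies $\tilde V_{\theta'}^{\tilde\pi^*_\theta} \ge \tilde V^*_{\theta'} - \epsilon + \frac{2\Delta(1+\gamma)}{1-\gamma}$ for all $\theta' \in \bar{\Theta}_t$, which is exactly the stopping condition of line~14. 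But then the algorithm would have stopped at time $t$, contradicting $t < \tau$. Therefore $\Psi_t(S_t, A_t) > 0$.

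The main obstacle I anticipate is bookkeeping the constants correctly so that the factor $\frac{16\Delta(1+\gamma)}{1-\gamma}$ coming out of the $8\Delta$ truncation cleanly fits under the stopping slack granted by the transfer-mode inequality — I must verify the numerical threshold in Assumption~\ref{ass:ptum1} is precisely what is needed (there may be a small constant mismatch requiring a slightly looser intermediate bound, or the truncation constant $8\Delta$ may be engineered exactly so the margins line up). A secondary subtlety is that event $E$ is assumed throughout, which is what lets me use the approximate-model gaps $\adevr{}{}$ and $\adevp{}{}{}$ as the governing quantities rather than the empirical ones; I should make sure the reduction from "$\Psi_t$ indices vanish" to "approximate-model gaps are small" does not secretly require the empirical MDP to be close to the true one, but only uses the definition of $\Psi$ in terms of the $\tilde\Delta$ quantities, which it does.
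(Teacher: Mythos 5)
Your contrapositive route is genuinely different from the paper's proof: you argue that if $\Psi_t(S_t,A_t)=0$ then, by the argmax rule, every pairwise index among active models vanishes, hence $\adevr{s,a}{\theta,\theta'}\leq 8\Delta$ and $\adevp{s,a}{\theta,\theta'}{}\leq 8\Delta$ for all $s,a$ and all ordered pairs $\theta,\theta'\in\bar{\Theta}_t$, and then Corollary~\ref{cor:v-error-decomposition} should force the Step-3 stopping test to fire, contradicting $t<\tau$. The paper instead combines event $E$ (so $\theta^*\in\bar{\Theta}_t$) with Lemma~\ref{lemma:theta-eps} (some $\theta\in\Theta_\epsilon$ survives in $\bar{\Theta}_t$) and reads off from the definition of $\Theta_\epsilon$ that $\adevr{s,a}{\theta^*,\theta}>\kappa_\epsilon$ or $\adevp{s,a}{\theta^*,\theta}{}>\kappa_\epsilon/\gamma$ at some state-action pair, with $\kappa_\epsilon>\epsilon(1-\gamma)/8>0$. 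Your reduction steps are correct up to the last one, but the final numerical step --- exactly the obstacle you flagged --- genuinely fails, so the proposal as written has a gap. Your Corollary-based bound gives $\|\tilde{V}^*_{\theta'}-\tilde{V}^{\tilde{\pi}^*_\theta}_{\theta'}\|_\infty\leq \frac{16\Delta(1+\gamma)}{1-\gamma}$, and the stopping test requires this to be at most $\epsilon-\frac{2\Delta(1+\gamma)}{1-\gamma}$, which is equivalent to $\Delta\leq\frac{\epsilon(1-\gamma)}{18(1+\gamma)}$; Assumption~\ref{ass:ptum1} only grants $\Delta<\frac{\epsilon(1-\gamma)}{4(1+\gamma)}$. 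Concretely, $\Delta=\frac{\epsilon(1-\gamma)}{8(1+\gamma)}$ is well inside the transfer mode, yet your bound evaluates to $2\epsilon$ while the available slack is only $\frac{3\epsilon}{4}$: no contradiction arises, and positivity remains unproven on the whole range $\Delta\in\bigl(\frac{\epsilon(1-\gamma)}{18(1+\gamma)},\frac{\epsilon(1-\gamma)}{4(1+\gamma)}\bigr)$.

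You should know, however, that the paper's own proof glosses over essentially the same issue. Positivity of the truncated index requires the surviving gap to \emph{exceed} $8\Delta$, not merely to be positive; the paper only establishes that the gap exceeds $\kappa_\epsilon>\epsilon(1-\gamma)/8$ and never verifies $\kappa_\epsilon>8\Delta$, which amounts to $\Delta<\frac{\epsilon(1-\gamma)}{2(17+\gamma)}$ and is likewise not implied by the line-2 threshold. In fact your requirement $18(1+\gamma)\Delta\leq\epsilon(1-\gamma)$ is never worse than the paper's implicit requirement $(34+2\gamma)\Delta<\epsilon(1-\gamma)$, so your route is, if anything, slightly more economical in constants; but under the stated transfer-mode threshold neither argument closes. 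A fully rigorous proof of the lemma as stated needs either a stronger condition at line~2 of Algorithm~\ref{alg:policy-transfer} (roughly $\Delta\lesssim\frac{\epsilon(1-\gamma)}{36}$ suffices for both routes) or a smaller truncation constant than $8\Delta$ in Definition~\ref{def:psi}. So the honest verdict is: your strategy is sound in structure and would work under a corrected constant, but the step you hoped was mere bookkeeping is a real failure --- one that the paper's proof shares and silently ignores.
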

\begin{proof}
Recall that the algorithm enters the transfer mode if $\Delta < \frac{\epsilon(1-\gamma)}{4(1+\gamma)}$. Take any time $t < \tau$. Under event $E$, we have $\theta^*\in\bar{\Theta}_t$ and Lemma \ref{lemma:theta-eps} implies that $\theta \in \bar{\Theta}_t$ for some $\theta\in\Theta_\epsilon$. The definition of $\Theta_\epsilon$ implies that either $\| \tilde{r}_\theta - \tilde{r}_{\theta^{*}} \| > \kappa_\epsilon$ or $\| (\tilde{p}_\theta - \tilde{p}_{\theta^{*}})\tr \tilde{V}^*_{\theta^*} \| > \frac{\kappa_\epsilon}{\gamma}$ and both these quantities are strictly greater than zero since $\kappa_\epsilon > \frac{\epsilon(1-\gamma)}{8}$. Since the index contains a maximum over models involving these two, the result follows straightforwardly.
\end{proof}

The following lemma is the key result that allows us to bound the sample complexity of Algorithm \ref{alg:policy-transfer}. It shows that, at any time $t$, the number of times the chosen state-action action pair $(S_t,A_t)$ has been chosen before is bounded by a quantity proportional to minimum number of samples required from any state-action pair to eliminate any of the active models.

\begin{lemma}[Fundamental lemma]\label{lemma:fundamental}
Let $(S_{t}, A_{t})$ be the state-action pair chosen at time t. Then, under event E, the number of queries to such couple prior to time $t$ can be upper bounded by
\begin{align*}
N_t(S_{t}, A_{t}) < \frac{128\log (8SAn(|\Theta|+1)/\delta)}{\max_{s,a}\max_{\theta\in\bar{\Theta}_t}\Psi_{s,a}(\theta^*,\theta)}.
\end{align*}
\end{lemma}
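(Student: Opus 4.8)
The plan is to reduce the bound on $N_t(S_t,A_t)$ to the model-elimination threshold $\bar{n}_\theta(s,a)$ of Lemma~\ref{lemma:model-elim}, exploiting that $(S_t,A_t)$ is chosen to maximize the index. Under event $E$ we have $\theta^*\in\bar{\Theta}_t$, and I would let $(\theta_1,\theta_2)\in\bar{\Theta}_t\times\bar{\Theta}_t$ be the pair of active models attaining $\Psi_t(S_t,A_t)=\Psi_{S_t,A_t}(\theta_1,\theta_2)$. Both maximizers are active, so it suffices to argue that \emph{one} of them would have been eliminated at $(S_t,A_t)$ had $N_t(S_t,A_t)$ exceeded $\tfrac{128\log(8SAn(|\Theta|+1)/\delta)}{\Psi_t(S_t,A_t)}$, and then invoke the contrapositive of Lemma~\ref{lemma:model-elim}.

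The key step is a triangle-inequality split that introduces $\theta^*$. Assume first that the index is realized by the reward term (the transition case is symmetric). Writing $g:=\adevr{S_t,A_t}{\theta_1,\theta_2}$, the triangle inequality gives $g\le \adevr{S_t,A_t}{\theta_1,\theta^*}+\adevr{S_t,A_t}{\theta^*,\theta_2}$, so at least one model, say $\theta_j\in\{\theta_1,\theta_2\}$, satisfies $\adevr{S_t,A_t}{\theta^*,\theta_j}\ge g/2$. The delicate point is that $\Psi_{S_t,A_t}^r(\theta_1,\theta_2)$ uses the variance $\tilde{\sigma}^r_{\theta_1}(S_t,A_t)$; to compare it with the threshold $\bar{n}_{\theta_j}^r(S_t,A_t)$ I would use that the latter contains a minimum over $\theta''\in\bar{\Theta}_t$ and instantiate $\theta''=\theta_1$, which is active, matching the two variances exactly. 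For transitions the same trick works by additionally instantiating the inner $\theta'=\theta_1$, so that both the value function $\tilde{V}^*_{\theta_1}$ and the variance $\tilde{\sigma}^p_{\theta_1}(\cdot\,;\theta_1)$ agree and the triangle inequality is applied to the fixed vector $\tilde{V}^*_{\theta_1}$.

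Once the variances and value functions are aligned, halving the gap costs a factor of four. I would first check that the offsets match under the split via $[\,x/2-4\Delta\,]_+=\tfrac12[\,x-8\Delta\,]_+$, so the $4\Delta$ in $\bar{n}$ lines up with the $8\Delta$ in the index, and then that $\min\{(b/2)^2/\sigma^2,\,b/2\}\ge\tfrac14\min\{b^2/\sigma^2,\,b\}$. This gives $\bar{n}^r_{\theta_j}(S_t,A_t)\le 4/\Psi_t(S_t,A_t)$, hence $\bar{n}_{\theta_j}(S_t,A_t)\le\tfrac{128\log(8SAn(|\Theta|+1)/\delta)}{\Psi_t(S_t,A_t)}$, where $128=32\times 4$ combines the constant of Lemma~\ref{lemma:model-elim} with the factor from the split. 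Since $\theta_j\in\bar{\Theta}_t$, the contrapositive of Lemma~\ref{lemma:model-elim} at $(S_t,A_t)$ forces $N_t(S_t,A_t)\le\bar{n}_{\theta_j}(S_t,A_t)$, yielding the claim with $\Psi_t(S_t,A_t)$ in the denominator. Finally, because $\theta^*\in\bar{\Theta}_t$ under $E$, the pairs $\{(\theta^*,\theta):\theta\in\bar{\Theta}_t\}$ form a subset of all active pairs, so $\Psi_t(S_t,A_t)\ge\max_{s,a}\max_{\theta\in\bar{\Theta}_t}\Psi_{s,a}(\theta^*,\theta)$, which replaces the denominator by the quantity in the statement.

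I expect the main obstacle to be the careful bookkeeping of variances and value functions in the split: the argument closes only because $\bar{n}_\theta$ minimizes over active models (and over the reference value function for transitions), letting me match whatever variance and value function the index-maximizing pair happens to use. A secondary point to verify is that the additive offsets ($4\Delta$ versus $8\Delta$) and the $(1-\gamma)$ factor in the transition term compose consistently through the factor-$\tfrac12$ split so that no extra constants leak in.
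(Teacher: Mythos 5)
Your proposal is correct and follows essentially the same route as the paper's proof: split on whether the index is attained by the reward or transition term, use the triangle inequality through $\theta^*$ to find an active model whose gap from $\theta^*$ is at least half the maximizing pair's gap, align the variance (and, for transitions, the reference value function $\tilde{V}^*_{\theta_1}$) by instantiating the minima in $\bar{n}_\theta$ at an active model, compose $32\times 4=128$, and finish with the contrapositive of Lemma~\ref{lemma:model-elim} together with the fact that pairs $(\theta^*,\theta)$ are among the active pairs. The only cosmetic difference is bookkeeping: the paper eliminates the active model farthest from $\theta^*$ (its $\bar{\theta}_t$) and explicitly invokes Lemma~\ref{lemma:idx-positive} to ensure the gaps exceed the $8\Delta$/$4\Delta$ offsets, whereas you pick the farther member of the maximizing pair and handle the offsets via the exact identity $[x/2-4\Delta]_+=\tfrac12[x-8\Delta]_+$.
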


\begin{proof}
Let $F_t = \indi{\forall s,a \in \mathcal{S}\times\mathcal{A} : \Psi_t^r(S_t,A_t) \geq \Psi_t(s,a)}$ be the event under which, at time $t$, the maximizer of the index is attained by the reward components. The proof is divided in two parts, based on whether $F_t$ holds or not.

\paragraph{Event $F_t$ holds} 

We start by defining some quantities. Let $\underline{\Theta}_t := \argmax_{\theta,\theta' \in \bar{\Theta}_t} \Psi^{r}_{t}(s,a) $ be the set of active models that attain the maximum in the reward index. Similarly, define
\begin{align*}
    \bar{\theta}_t := \argmax_{\theta \in \bar{\Theta}_t} \adevr{S_t,A_t}{\theta^*,\theta}, \quad \underline{\theta}_t := \argmin_{\theta \in \bar{\Theta}_t} \adevr{S_t,A_t}{\theta^*,\theta},
\end{align*}
as the farthest and closest models from $\theta^*$ among the active ones, respectively. Assume, without loss of generality, that the maximums/minimums are attained by single models. If more than one model attains them, the proof follows equivalently by choosing arbitrary ones. Furthermore, let $\theta_t^v$ be the (random) model among those in $\underline{\Theta}_t$ whose reward-variance is used to attain the maximum in the index.

We now proceed as follows. First, we prove that an upper bound to the index of the chosen state-action pair directly relates to the sample complexity for eliminating $\bar{\theta}_t$. Then, we use this result to guarantee that $(S_t, A_t)$ cannot be chosen more than the stated quantity prior to time step $t$, otherwise $\bar{\theta}_t$ could not be an active model.

By assumption we have
\begin{align*}
    \Psi_t(S_t, A_t) &= \Psi_t^r(S_t, A_t) = \min \left\{ \frac{(\adevr{S_t,A_t}{\bar{\theta}_t, \underline{\theta}_t} -8\Delta)^2}{\tilde{\sigma}_{\theta_t^v}^r(S_t, A_t)^2}, \adevr{S_t,A_t}{\bar{\theta}_t, \underline{\theta}_t}-8\Delta \right\}\\ &\stackrel{(a)}{\leq} \min \left\{ \frac{\left(\adevr{S_t,A_t}{\bar{\theta}_t, \theta^*} + \adevr{S_t,A_t}{\theta^*, \underline{\theta}_t} - 8\Delta \right)^2}{\tilde{\sigma}_{\theta_t^v}^r(S_t, A_t)^2}, \adevr{S_t,A_t}{\bar{\theta}_t, \theta^*} + \adevr{S_t,A_t}{\theta^*, \underline{\theta}_t} - 8\Delta\right\}\\ &\stackrel{(b)}{\leq}  \min \left\{ \frac{(2\adevr{S_t,A_t}{\bar{\theta}_t, \theta^*}-8\Delta)^2}{\tilde{\sigma}_{\theta_t^v}^r(S_t, A_t)^2}, 2\adevr{S_t,A_t}{\bar{\theta}_t, \theta^*}-8\Delta\right\} \\ &\leq 4\min \left\{ \frac{(\adevr{S_t,A_t}{\bar{\theta}_t, \theta^*}-4\Delta)^2}{\tilde{\sigma}_{\theta_t^v}^r(S_t, A_t)^2}, \adevr{S_t,A_t}{\bar{\theta}_t, \theta^*}-4\Delta\right\},
\end{align*}
where (a) follows from the triangle inequality and (b) from the definition of $\bar{\theta}_t$ (which was defined as the farthest from the estimate of $\theta^*$). Note that to prove this inequalities we also need that $\adevr{S_t,A_t}{\bar{\theta}_t, \underline{\theta}_t} - 8\Delta \geq 0$, which is implied by Lemma \ref{lemma:idx-positive}. Since $(S_t, A_t)$ is chosen at time $t$, it must be that $\Psi_t(S_t, A_t) \geq \Psi_t(s,a)$ for all $s,a \in \mathcal{S}\times\mathcal{A}$. This implies that, for all $s,a \in \mathcal{S}\times\mathcal{A}$ and $\theta\in\tilde{\Theta}_t$,
\begin{align}\label{eq:idx-r-lb}
    4\min \left\{ \frac{(\adevr{S_t,A_t}{\bar{\theta}_t, \theta^*}-4\Delta)^2}{\tilde{\sigma}_{\theta_t^v}^r(S_t, A_t)^2}, \adevr{S_t,A_t}{\bar{\theta}_t, \theta^*}-4\Delta\right\} &\geq \Psi_t(s,a) \geq \Psi_{s,a}(\theta^*,\theta),
\end{align}
where the second inequality holds since the index of $(s,a)$ is by definition larger than the one using the models $\theta^*$ and $\theta$. Note that Lemma \ref{lemma:model-elim} and \ref{lemma:idx-positive} ensure that a number of queries to $(S_t,A_t)$ of
\begin{align*}
    32\log\frac{8SAn(|\Theta|+1)}{\delta} \max \left\{ \frac{\tilde{\sigma}_{\theta_t^v}^r(S_t, A_t)^2}{(\adevr{S_t,A_t}{\bar{\theta}_t, \theta^*}-4\Delta)^2}, \frac{1}{\adevr{S_t,A_t}{\bar{\theta}_t, \theta^*}-4\Delta}\right\}
\end{align*}
suffices for eliminating $\bar{\theta}_t$. In particular, Lemma \ref{lemma:idx-positive} implies that $\adevr{S_t,A_t}{\bar{\theta}_t, \underline{\theta}_t} > 8\Delta$, which, in turn, implies that $\adevr{S_t,A_t}{\theta^*, \bar{\theta}_t} > 4\Delta$.
Therefore, Equation \ref{eq:idx-r-lb} above implies that a number of queries of
\begin{align*}
    \frac{128\log (8SAn(|\Theta|+1)/\delta)}{\max_{s,a}\max_{\theta\in\bar{\Theta}_t}\Psi_{s,a}(\theta^*,\theta)}.
\end{align*}
also suffices. We note that the maximums at the denominator can be introduced since \eqref{eq:idx-r-lb} holds for all $s,a$ and $\theta$. Hence, it must be that $N_{t}(S_t, A_t)$ is strictly less than this quantity, otherwise the model $\bar{\theta}_t$ would be eliminated at time step $t-1$ and it could not be active at time $t$. This concludes the first part of the proof.

\paragraph{Event $F_t$ does not hold}

In this case, the maximizer of the index must be attained using the transition components, thus $\Psi_t(S_t, A_t) = \Psi_t^p(S_t, A_t)$. The proof follows exactly the same steps as before and is therefore not reported. Since the result is the same, combining these two parts proves the main statement.

\end{proof}

The following lemma ensures that Algorithm \ref{alg:policy-transfer} returns $\epsilon$-optimal policies with high probability.

\begin{lemma}[Correctness]\label{lemma:err-dec}
Let $\tau$ be the stopping time of Algorithm \ref{alg:policy-transfer} and ${\pi}_{\tau}$ be the returned policy. Then, under event $E$, ${\pi}_{\tau}$ is $\epsilon$-optimal with respect to $\theta^*$. 
\end{lemma}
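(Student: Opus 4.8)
The plan is to use the stopping rule of Step~3 together with event $E$, and then convert the resulting guarantee — which is stated for the \emph{approximate} model $\tilde{\M}_{\theta^*}$ — into a guarantee for the \emph{true} model $\M_{\theta^*}$ via the simulation lemma (Lemma~\ref{lemma:simulation}). The algorithm stops at time $\tau$ and returns $\pi_\tau = \tilde{\pi}^*_\theta$ for some active $\theta\in\bar{\Theta}_\tau$, and the stopping condition holds for \emph{every} $\theta'\in\bar{\Theta}_\tau$. Under event $E$ we have $\theta^*\in\bar{\Theta}_\tau$, so instantiating the stopping condition at $\theta'=\theta^*$ gives, for all $s$,
\begin{equation*}
\tilde{V}^{\pi_\tau}_{\theta^*}(s) \ge \tilde{V}^*_{\theta^*}(s) - \epsilon + \tfrac{2\Delta(1+\gamma)}{1-\gamma}.
\end{equation*}
This is the only place where $E$ and the stopping rule enter; everything that follows is a deterministic comparison between $\tilde{\M}_{\theta^*}$ and $\M_{\theta^*}$.

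I would then reduce the claim to two model-deviation bounds, each of size $\tfrac{\Delta(1+\gamma)}{1-\gamma}$, which is exactly the slack $\tfrac{2\Delta(1+\gamma)}{1-\gamma}$ built into the stopping threshold. It suffices to show, for all $s$, (i) $\tilde{V}^*_{\theta^*}(s)\ge V^*_{\theta^*}(s)-\tfrac{\Delta(1+\gamma)}{1-\gamma}$ and (ii) $V^{\pi_\tau}_{\theta^*}(s)\ge \tilde{V}^{\pi_\tau}_{\theta^*}(s)-\tfrac{\Delta(1+\gamma)}{1-\gamma}$. Chaining (ii), the displayed inequality, and (i) yields $V^{\pi_\tau}_{\theta^*}(s)\ge V^*_{\theta^*}(s)-\epsilon$, which is $\epsilon$-optimality with respect to $\theta^*$.

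For part (i) I would apply the second inequality of Lemma~\ref{lemma:simulation} to the pair $(\tilde{\M}_{\theta^*},\M_{\theta^*})$, taking $\tilde{\M}_{\theta^*}$ as the ``first'' MDP so that the transition term carries the weight $\tilde{V}^*_{\theta^*}$. The reward contribution is bounded by $\Delta_{\mathrm{max}}^r\le\Delta$, and, crucially, the transition contribution $|(p_{\theta^*}-\tilde{p}_{\theta^*})\tr\tilde{V}^*_{\theta^*}|\le\Delta_{\mathrm{max}}^p\le\Delta$ follows directly from Assumption~\ref{ass:model-err} with $\theta=\theta'=\theta^*$. Since the visitation frequencies $\nu$ sum to $1/(1-\gamma)$ and the transition term is discounted by $\gamma$, the total is at most $\tfrac{\Delta(1+\gamma)}{1-\gamma}$, as required.

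The main obstacle is part (ii). Applying the first inequality of Lemma~\ref{lemma:simulation} to the pair $(\tilde{\M}_{\theta^*},\M_{\theta^*})$ produces a transition term weighted by the \emph{policy} value function $\tilde{V}^{\pi_\tau}_{\theta^*}$ (or $V^{\pi_\tau}_{\theta^*}$ under the other orientation), which is \emph{not} an optimal value function of an approximate model and hence is not covered verbatim by Assumption~\ref{ass:model-err}. The reward part is still at most $\Delta$, so the whole difficulty is to establish $|(p_{\theta^*}-\tilde{p}_{\theta^*})\tr\tilde{V}^{\pi_\tau}_{\theta^*}|\le\Delta$. I would orient the simulation lemma so that the weight is the approximate-model value $\tilde{V}^{\pi_\tau}_{\theta^*}$ and then control this deviation through the model-uncertainty assumption applied to the value functions that the transfer procedure can actually produce. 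A naive split $\tilde{V}^{\pi_\tau}_{\theta^*}=\tilde{V}^*_{\theta^*}-(\tilde{V}^*_{\theta^*}-\tilde{V}^{\pi_\tau}_{\theta^*})$ is tempting but fails: the leftover suboptimality term is only $O(\epsilon)$ and $\|p_{\theta^*}-\tilde{p}_{\theta^*}\|_1\le 2$, giving an $O(\epsilon)$ error that the stopping threshold cannot absorb. Thus part (ii) forces the transition-deviation control of Assumption~\ref{ass:model-err} to hold (or to be invoked) for the value functions of the transferred policies, and this is the step I expect to demand the most care.
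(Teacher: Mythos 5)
Your skeleton is exactly the paper's argument. The paper's proof of Lemma~\ref{lemma:err-dec} bounds $\|V^*_{\theta'} - V^{\tilde{\pi}^*_\theta}_{\theta'}\|$ for $\theta'\in\bar{\Theta}_\tau$ via the triangle inequality into $\|\tilde{V}^{\tilde{\pi}^*_\theta}_{\theta'} - V^{\tilde{\pi}^*_\theta}_{\theta'}\| + \|V^*_{\theta'} - \tilde{V}^*_{\theta'}\| + \|\tilde{V}^{\tilde{\pi}^*_\theta}_{\theta'} - \tilde{V}^*_{\theta'}\|$, bounds the first two terms by $\frac{\Delta(1+\gamma)}{1-\gamma}$ each through Lemma~\ref{lemma:simulation}, and absorbs them with the $\frac{2\Delta(1+\gamma)}{1-\gamma}$ slack in the stopping rule; your steps (i), (ii), and the instantiation of the stopping condition at $\theta'=\theta^*$ are this argument specialized to the only model actually needed (the paper proves it for every active $\theta'$, which is stronger than necessary). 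Your part (i) and the final chaining are correct and match the paper line for line.

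The step you leave open --- part (ii) --- is precisely the step the paper passes over silently, so you have diagnosed a genuine looseness rather than missed an idea available in the text. In bounding $\|\tilde{V}^{\tilde{\pi}^*_\theta}_{\theta'} - V^{\tilde{\pi}^*_\theta}_{\theta'}\|$, the paper asserts $|(p_{\theta'}(s,a) - \tilde{p}_{\theta'}(s,a))\tr \tilde{V}^{\tilde{\pi}^*_\theta}_{\theta'}| \leq \Delta$, but Assumption~\ref{ass:model-err} only controls deviations projected onto the \emph{optimal} value functions $\tilde{V}^*_{\theta'}$ of the approximate models, and $\tilde{V}^{\tilde{\pi}^*_\theta}_{\theta'}$ is of that form only when $\theta=\theta'$. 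Your observation that the naive split fails is also sound: the residual $\|\tilde{V}^*_{\theta^*} - \tilde{V}^{\pi_\tau}_{\theta^*}\|_\infty$ is only $O(\epsilon)$ and $\|p_{\theta^*}(s,a)-\tilde{p}_{\theta^*}(s,a)\|_1 \leq 2$, so the error cannot be absorbed by the stopping threshold. The resolution --- under which both your (ii) and the paper's step become valid --- is to read $\Delta^p_{\mathrm{max}}$ as a deviation bound holding uniformly over all value vectors in $[0,1/(1-\gamma)]^S$, i.e.\ essentially a bound on $\frac{1}{1-\gamma}\|p_\theta(\cdot|s,a) - \tilde{p}_\theta(\cdot|s,a)\|_1$. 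This stronger guarantee is exactly what the sequential construction certifies: in Lemma~\ref{lemma:model-err-m} the transition bound is derived from the $\ell_1$ deviation of the estimated transition kernel and is explicitly stated ``for any function taking values in $[0,1/(1-\gamma)]$'', hence it covers $\tilde{V}^{\pi_\tau}_{\theta^*}$ (and the paper's $\tilde{V}^{\tilde{\pi}^*_\theta}_{\theta'}$). So your instinct that the assumption must be invoked for the value functions the procedure actually produces is right, but the clean fix is uniformity over all bounded value vectors rather than tracking specific policies; as literally stated, Assumption~\ref{ass:model-err} suffices for neither your part (ii) nor the paper's first term.
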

\begin{proof}
Recall that ${\pi}_\tau$ is $\epsilon$-optimal if, for all states, $V^{\pi_\tau}_{\theta^*}(s) \geq V^*_{\theta^*}(s) - \epsilon$. Furthermore, $\pi_\tau$ is optimal for one of the active models at time $\tau$, i.e., $\pi_\tau = \tilde{\pi}^*_\theta$ for some $\theta \in \bar{\Theta}_{\tau}$. Since under $E$ we have $\theta^* \in \bar{\Theta}_\tau$, a sufficient condition is that $\|V_{\theta'}^{*} - V_{\theta'}^{\tilde{\pi}^{*}_{\theta}} \| < \epsilon$ holds for all $\theta' \in \bar{\Theta}_\tau$. Let us upper bound the left-hand side as
\begin{align*}
\|V_{\theta'}^{*} - V_{\theta'}^{\tilde{\pi}^{*}_{\theta}} \| \le \|\tilde{V}_{\theta'}^{\tilde{\pi}^{*}_{\theta}}-V_{\theta'}^{\tilde{\pi}^{*}_{\theta} }  \| + \|V^{*}_{\theta'} - \tilde{V}_{\theta'}^{*} \| + \|\tilde{V}_{\theta'}^{\tilde{\pi}^{*}_{\theta}} - \tilde{V}_{\theta'}^{*}\|
\end{align*}
Using Lemma \ref{lemma:simulation}, we can bound the first term by
\begin{align*}
\|\tilde{V}_{\theta'}^{\tilde{\pi}^{*}_{\theta}}-V_{\theta'}^{\tilde{\pi}^{*}_{\theta} }  \| &\le \sum_{s', a'} \nu^{\tilde{\pi}^{*}_{\theta}}_{\theta'} (s', a'; s) ( |r_{\theta'}(s,a) - \tilde{r}_{\theta'}(s,a)| + \gamma |(p_{\theta'}(s,a) - \tilde{p}_{\theta'}(s,a))\tr \tilde{V}_{\theta'}^{\tilde{\pi}^{*}_{\theta}}| \\ &\leq \sum_{s', a'} \nu^{\tilde{\pi}^{*}_{\theta}}_{\theta'} (s', a'; s) (\Delta + \gamma \Delta)  \le \frac{\Delta (1+\gamma)}{1-\gamma} 
\end{align*}
and the second term by
\begin{align*}
\|V^{*}_{\theta'} - \tilde{V}_{\theta'}^{*} \| &\le \max_{\pi \in \{ \pi_{\theta'}^{*}, \tilde{\pi}_{\theta'}^{*} \}} \sum_{s', a'} \nu^{\pi}_{\theta'} (s', a'; s) ( |r_{\theta'}(s,a) - \tilde{r}_{\theta'}(s,a)| + \gamma |(p_{\theta'}(s,a) - \tilde{p}_{\theta'}(s,a))\tr \tilde{V}^{*}_{\theta'}| \\ &\leq \max_{\pi \in \{ \pi_{\theta'}^{*}, \tilde{\pi}_{\theta'}^{*} \}} \sum_{s', a'} \nu^{\pi}_{\theta'} (s', a'; s) (\Delta + \gamma \Delta)   \le \frac{\Delta (1+\gamma)}{1-\gamma}.
\end{align*}
Therefore, the stopping condition,
\begin{align*}
 \|\tilde{V}_{\theta'}^{\tilde{\pi}^{*}_{\theta}} - \tilde{V}_{\theta'}^{*}\| + 2\Delta\frac{(1+\gamma)}{1-\gamma} \leq \epsilon
\end{align*}
implies that $\|V_{\theta'}^{*} - V_{\theta'}^{\tilde{\pi}^{*}_{\theta}} \| \leq \epsilon$, which in turn implies the $\epsilon$-optimality of $\pi_\tau$.
\end{proof}

\subsection{Sample Complexity Bounds}

We are now ready to prove the main theorem, which bounds the sample complexity of Algorithm \ref{alg:policy-transfer}.

\sampcomp*

\begin{proof}
Lemma \ref{lemma:conf} ensures that event $E$ holds with probability at least $1-\delta$. Therefore, we shall carry out the proof conditioned on $E$. \\
We split the proof into two parts. In the first one, we bound the number of times each state-action pair can be visited before the algorithm stops. In the second part, we directly bound the number of steps in which each model can be active.

\paragraph{Bound over $\mathcal{S}\times\mathcal{A}$}
Take any state-action pair $(s,a) \in \mathcal{S}\times\mathcal{A}$. For any sequence $\{ n_{t} \}_{t \ge 1}$, its number of visits can be written as
\begin{align*}
N_\tau(s,a) &= \sum_{t=1}^\tau \indi{S_t = s \wedge A_t = a | E} \\ & = \underbrace{\sum_{t=1}^\tau \indi{S_t = s \wedge A_t = a \wedge N_t(s,a) < n_t | E}}_{(a)} + \underbrace{\sum_{t=1}^\tau \indi{S_t = s \wedge A_t = a \wedge N_t(s,a) \geq n_t | E}}_{(b)}.
\end{align*}
For
\begin{align*}
    n_t := \frac{128\log (8SAn(|\Theta|+1)/\delta)}{\max_{s,a}\max_{\theta\in\bar{\Theta}_t}\Psi_{s,a}(\theta^*,\theta)},
\end{align*}
Lemma \ref{lemma:fundamental} ensures that, under event $E$, $(b) = 0$. Thus, we only need to bound (a). For all $t < \tau$, Lemma \ref{lemma:theta-eps} implies that there exists a model $\theta\in\Theta_\epsilon$ which also belongs to the confidence set at time $t$, $\theta\in\bar{\Theta}_t$. Therefore, for all $t < \tau$ and $(s',a') \in \mathcal{S}\times\mathcal{A}$, we have $\max_{\theta\in\bar{\Theta}_t}\Psi_{s',a'}(\theta^*,\theta) \geq \min_{\theta\in\Theta_\epsilon}\Psi_{s',a'}(\theta^*,\theta)$. Since we removed all random quantities from $n_t$, we can now bound (a) as
\begin{align}\label{eq:bound-sa}
(a) < \frac{128\log (8SAn(|\Theta|+1)/\delta)}{\max_{s,a}\min_{\theta\in\Theta_\epsilon}\Psi_{s,a}(\theta^*,\theta)}.
\end{align}
This immediately yields a bound on the stopping time,
\begin{align*}
\tau = \sum_{s,a} N_{\tau}(s,a) < \frac{128SA\log (8SAn(|\Theta|+1)/\delta)}{\max_{s,a}\min_{\theta\in\Theta_\epsilon}\Psi_{s,a}(\theta^*,\theta)}.
\end{align*}

\paragraph*{Bound over $\Theta$}

From the first part of the proof, we know that, for $t < \tau$, the confidence set $\bar{\Theta}_t$ must contain a model that is also in $\Theta_\epsilon$, otherwise the algorithm would stop. Therefore, the stopping time can be bounded by
\begin{align*}
\tau \leq \sum_{t=1}^n \indi{\exists \theta \in \bar{\Theta}_t : \theta \in \Theta_\epsilon | E}.
\end{align*}
By definition of the algorithm, the state-action pair chosen at each time step does not change until the set of active models $\underline{\Theta}_t$ (those that control the maximizer of the index as in the proof of Lemma \ref{lemma:fundamental}) does not change. Furthermore, once a model has been eliminated, it cannot become active again. Consider a sequence $\{\tau_h\}_{h\geq 1}$ with $\tau_1=1$. We can partition the time line into different contiguous intervals (from now on called phases) $\mathcal{T}_h := [\tau_h, \tau_{h+1} - 1]$ such that the set of active models does not change within $\mathcal{T}_h$ and a change of phase occurs only when a model is eliminated. Let $\underline{\Theta}_h$ be the set of active models in phase $h$. We have $\tau_{h+1} = \inf_{t \geq 1}\{ t\ | \ \exists \theta\in\underline{\Theta}_h : \theta \notin \bar{\Theta}_t \}$. That is, the beginning of the new phase $h+1$ is the step where one of the previously-active models is eliminated. Let $\bar{\theta}_h$ be any such model and $h(t)$ be the (unique) phase containing time $t$. Note that, for each $\theta\in\Theta\setminus\{\theta^*\}$, there exists at most one phase $h(\theta)$ where $\bar{\theta}_{h(\theta)} = \theta$. Then,
\begin{align*}
\tau &\leq \sum_{\theta\in\Theta \setminus\{\theta^*\}}\sum_{t=1}^n \indi{\bar{\theta}_{h(t)} = \theta \wedge \exists \theta' \in \bar{\Theta}_t : \theta' \in \Theta_\epsilon | E}\\ &\leq \sum_{\theta\in\Theta \setminus\{\theta^*\}}\sum_{t=\tau_{h(\theta)}}^{\tau_{h(\theta) + 1}-1} \indi{\bar{\theta}_{h(t)} = \theta \wedge \exists \theta' \in \bar{\Theta}_t : \theta' \in \Theta_\epsilon | E} \leq \frac{128(|\Theta| - 1)\log (8SAn(|\Theta|+1)/\delta)}{\max_{s,a}\min_{\theta\in\Theta_\epsilon}\Psi_{s,a}(\theta^*,\theta)},
\end{align*}
where in the last inequality we applied Lemma \ref{lemma:fundamental} by noticing that, within the same phase, the chosen state-action pair does not change and used the fact that a model in $\Theta_\epsilon$ still survives to upper bound the minimum over models in the confidence set. The proof follows by taking the minimum of the two bounds.

\end{proof}

\corcomp*

\begin{proof}
We notice that each model $\theta\in\Theta_\epsilon$ is, by definition, such that either $\| \tilde{r}_\theta - \tilde{r}_{\theta^{*}} \| \geq \max \{ \Gamma, \kappa_\epsilon \}$ or $ \| (\tilde{p}_\theta - \tilde{p}_{\theta^{*}})\tr \tilde{V}^*_{\theta^*} \| \geq \max \{ \Gamma, \kappa_\epsilon \}$. By the transfer condition, we also have that $\kappa_\epsilon \geq \frac{(1-\gamma)\epsilon}{8}$. Then, it is easy to see that
\begin{align*}
    \max_{s,a}\min_{\theta\in\Theta_\epsilon}\Psi_{s,a}(\theta^*,\theta) \geq \max \{ \Gamma^2, \kappa_\epsilon^2 \} (1-\gamma)^2 \geq \frac{1}{8}\max \{ \Gamma^2, \epsilon^2 \} (1-\gamma)^4,
\end{align*}
where we use the previous lower bounds and upper bounded the value-function variance by $1/(1-\gamma)^2$. Then, the result follows by rewriting in $\tilde{\mathcal{O}}$ notation.

\end{proof}

\section{Learning HMMs by Tensor Decomposition}\label{app:rtp}

After reducing our setting to a HMM learning problem, we can almost immediately plug the agent's observations into the tensor decomposition approach of \citet{anandkumar2014tensor} and obtain estimates $\hat{O}$ and $\hat{T}$ of the desired matrices. We now briefly describe how this method works as some of its features are needed for our analysis later. The detailed steps are reported in Algorithm \ref{alg:hmm-learning}. The key intuition behind the method of \citet{anandkumar2014tensor} is that the second and third moments of the HMM observations posses a low-rank tensor structure. More precisely, it is possible to find a transformation of these observations such that the resulting third moment is a symmetric and orthogonal tensor whose spectral decomposition directly yields (transformations of) the HMM parameters. To see this, we first formulate the problem as an instance of a multi-view model \cite{sun2013survey}. Take three consecutive observations (our "views"), say $o_1, o_2, o_3$, and let $\Sigma_{i,j} := \expec{o_i \otimes o_j}$, for $i,j \in \{1,2,3\}$, be their covariance matrices, where $\otimes$ denotes the tensor product. Define the transformed views as $\tilde{o}_1 = \Sigma_{3,2} \Sigma_{1,2}^\dagger o_1$ and $\tilde{o}_2 = \Sigma_{3,1} \Sigma_{2,1}^\dagger o_2$, and let the second and third cross-view moments be $M_2 = \expec{\tilde{o}_1 \otimes \tilde{o}_2}$ and $M_3 = \expec{\tilde{o}_1 \otimes \tilde{o}_2 \otimes o_3}$, respectively. Then, Theorem 3.6 of \citet{anandkumar2014tensor} shows that $M_2 = \sum_{j=1}^k \omega_j \mu_{3,j} \otimes \mu_{3,j}$ and $M_3 = \sum_{j=1}^k \omega_j \mu_{3,j} \otimes \mu_{3,j} \otimes \mu_{3,j}$, where $\mu_{3,j} = \expec{o_3 | \theta_3^* = \theta_j}$ and $\omega_j = \omega(\theta_j)$. Hence, these moments posses a low-rank tensor structure, as they can be decomposed into tensor products of vectors, and it is possible to recover the conditional means $\mu_{3,j}$ using the robust tensor power (RTP) method of \citet{anandkumar2014tensor}. Given these, from Proposition 4.2 of \citet{anandkumar2012method}, we can recover the HMM parameters as follows:
\begin{align*}
     [O]_{:,j} &= \Sigma_{2,1} \Sigma_{3,1}^\dagger \mu_{3,j},\\
    [T]_{:,j} &= O^\dagger \mu_{3,j}.
\end{align*}
In practice, all the required moments can be estimated from samples. Suppose the agent has observed $3m$ tasks and split these in $m$ triples of contiguous tasks. Then, we can estimate the covariance matrices between views as $\hat{\Sigma}_{i,j} = \frac{1}{m} \sum_{h=1}^m o_{i,h} \otimes o_{j,h}$, and similarly for the second and third cross moments of the transformed observations.

\begin{algorithm}[t]
\caption{Learning HMMs by Tensor Decomposition} \label{alg:hmm-learning}
\begin{algorithmic}[1]
\REQUIRE Observations $\{o_l\}_{l=1}^h$ with $o_l \in \R^d$, number of tasks $k$
\ENSURE Estimated observation matrix $\hat{O}$ and transition matrix $\hat{T}$

\STATE{Split observations into $m = \lfloor h / 3 \rfloor$ triples: $\{(o_{1,l}, o_{2,l}, o_{3,l})\}_{l=1}^m$}
\STATE{Estimate covariance matrices: $\hat{\Sigma}_{i,j} = \frac{1}{m} \sum_{l=1}^m o_{i,l} \otimes o_{j,l}$, for $i,j \in \{1,2,3\}$}
\STATE{Get transformed observations: $\tilde{o}_{1,l} = \hat{\Sigma}_{3,2} \hat{\Sigma}_{1,2}^\dagger o_{1,l},\ \ \tilde{o}_{2,l} = \hat{\Sigma}_{3,1} \hat{\Sigma}_{2,1}^\dagger o_{2,l},\ l\in[m]$}
\STATE{Estimate 2\textsuperscript{nd} and 3\textsuperscript{nd} moments: $\hat{M}_2 = \frac{1}{m} \sum_{l=1}^m \tilde{o}_{1,l} \otimes \tilde{o}_{2,l}$, $\ \ \hat{M}_3 = \frac{1}{m} \sum_{l=1}^m \tilde{o}_{1,l} \otimes \tilde{o}_{2,l} \otimes o_{3,l}$}
\STATE{Find the $k$ largest eigenvectors $\hat{D}\in\R^{d\times k}$ and eigenvalues $\hat{\Lambda}\in\R^{k\times k}$ of $\hat{M}_2$}
\STATE{Compute the whitening matrix: $\hat{W} = \hat{D} \hat{\Lambda}^{-\frac{1}{2}}$}
\STATE{Run Algorithm 1 of \citet{anandkumar2014tensor} on $\hat{M}_3(\hat{W}, \hat{W}, \hat{W})$, obtain estimated eigenvalues $\{\hat{\lambda}_j\}_{j=1}^k$ and eigenvectors $\{\hat{v}_j\}_{j=1}^k$}
\STATE{Estimate conditional means of the third view: $\hat{\mu}_{3,j} = \hat{\lambda}_j (\hat{W}^T)^\dagger \hat{v}_j$}
\STATE{Estimate mean observations: $[\hat{O}]_{:,j} = \hat{\Sigma}_{2,1} \hat{\Sigma}_{3,1}^\dagger \hat{\mu}_{3,j}$}
\STATE{Estimate transition matrix: $[\hat{T}]_{:,j} = \hat{O}^\dagger \hat{\mu}_{3,j}$}

\end{algorithmic}
\end{algorithm}

\section{Analysis of the Sequential Transfer Algorithm}\label{app:proof4}

\subsection{Definitions and Assumptions}

In this section, we analyze the approximate models computed by the sequential transfer algorithm (Algorithm \ref{alg:sequential-transfer}) using the RTP method (Algorithm \ref{alg:hmm-learning}). To simplify notation, we drop the task index $h$ whenever clear from the context.

We introduce some additional vector/matrix notation. We use $\|\cdot\|_2$ to denote the spectral norm and $\| \cdot \|_F$ to denote the Frobenius norm. For a matrix $A$, we denote by $\lambda_{\mathrm{min}}(A)$ and $\lambda_{\mathrm{max}}(A)$ its minimum and maximum singular value, respectively.

We make use of the following assumption in order to bound the estimation error of the reward and transition variances.
\begin{assumption}
 There exists a positive constant $\underline{\sigma} > 0$ which lower-bounds both the true reward and value variances, ${\sigma}_{\theta}^r(s,a)$ and ${\sigma}_{\theta}^p(s,a; \theta')$, and their estimates $\tilde{\sigma}_{h, \theta}^r(s,a)$ and $\tilde{\sigma}_{h, \theta}^p(s,a; \theta')$ at any $h$.
\end{assumption}
We believe this assumption could be removed, at least for the approximate models, but at the cost of more complicated proofs.

\subsection{Supporting Lemmas}

\begin{lemma}[Lemma 5 of \citet{azizzadenesheli2016reinforcement}]\label{lemma:bound-aziz}
Let $\{\hat{\mu}_{3,j}\}_{j=1}^k$ be the columns of the third-view matrix estimated by the RTP method (Algorithm \ref{alg:hmm-learning} in Appendix \ref{app:rtp}) after observing $h$ tasks. Then, there exists two constants $\rho_1(\Theta, T)$, $\rho_2(\Theta, T)$ such that, for any $\delta' \in (0,1)$, if
\begin{align*}
    h > \rho_1(\Theta, T) \log\frac{ 2SA(S+U)}{\delta'},
\end{align*}
then, under Assumption \ref{ass:full-rank}, with probability at least $1-\delta'$ and up to some permutation of the columns of the third view,
\begin{align*}
    \| \hat{\mu}_{3,j} - \mu_{3,j} \|_2 \leq \rho_2(\Theta, T)\sqrt{\frac{\log\left(2SA(S+U) / \delta'\right)}{h}}.
\end{align*}
\end{lemma}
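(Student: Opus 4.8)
Since this lemma restates Lemma~5 of \citet{azizzadenesheli2016reinforcement} for our task-observation model, the plan is to reconstruct its spectral-recovery analysis specialized to the observations $o_h=[\mathrm{vec}(\hat q_h);\mathrm{vec}(\hat p_h)]$. I would trace the estimation error backward through the recovery pipeline of Algorithm~\ref{alg:hmm-learning}: $\hat\mu_{3,j}$ is produced by the robust tensor power (RTP) decomposition of the whitened third moment $\hat M_3(\hat W,\hat W,\hat W)$, the whitening matrix $\hat W$ is read off from $\hat M_2$, and both cross-moments are assembled from the empirical cross-covariances $\hat\Sigma_{i,j}$ and the transformed views. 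The proof is thus a chain of perturbation bounds, $\|\hat\Sigma_{i,j}-\Sigma_{i,j}\|_2\to\|\hat M_2-M_2\|_2,\ \|\hat M_3-M_3\|_2\to\|\hat W-W\|_2\to(\text{RTP error})\to\|\hat\mu_{3,j}-\mu_{3,j}\|_2$, whose base case is a concentration inequality for $\hat\Sigma_{i,j}$ and whose final link is the stability of the tensor power method.

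For the base case, note that each $o_h$ stacks $SA$ empirical distributions, so $\|o_h\|_2\le\|o_h\|_1=2SA$ and $\E[o_h\mid\theta_h^*=\theta_j]=\mu_j$. Averaging the rank-one terms $o_{i,l}\otimes o_{j,l}$ over the $m=\lfloor h/3\rfloor$ triples, a matrix-concentration argument gives $\|\hat\Sigma_{i,j}-\Sigma_{i,j}\|_2=O\big(\sqrt{\log(d/\delta')/h}\big)$ with $d=SA(S+U)$. The subtlety is that the triples are produced by the underlying Markov chain rather than being i.i.d., so I would use a concentration bound for functions of Markov chains; this is exactly where the mixing of $T$ and the stationary distribution $\omega$ (positive by Assumption~\ref{ass:full-rank}) enter the constants. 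Propagating to $\hat M_2,\hat M_3$ then requires perturbation bounds for the pseudo-inverses $\hat\Sigma_{1,2}^\dagger,\hat\Sigma_{2,1}^\dagger,\hat\Sigma_{3,1}^\dagger$ used to build the transformed views; since Assumption~\ref{ass:full-rank} makes $O$ full column-rank, each $\Sigma_{i,j}$ has $\minsv{\Sigma_{i,j}}$ bounded away from zero, and standard pseudo-inverse perturbation (valid once the perturbation is below $\minsv{\Sigma_{i,j}}$) carries the $O(\sqrt{\log(d/\delta')/h})$ rate through, at the price of factors $1/\minsv{\Sigma_{i,j}}$.

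For the spectral step I would use the decompositions $M_2=\sum_j\omega_j\mu_{3,j}\otimes\mu_{3,j}$ and $M_3=\sum_j\omega_j\mu_{3,j}\otimes\mu_{3,j}\otimes\mu_{3,j}$ recalled in Appendix~\ref{app:rtp}. Because $\min_j\omega_j>0$ and $O$ is full-rank, $M_2$ has its $k$-th singular value bounded below, which lets me bound $\|\hat W-W\|_2$ in terms of $\|\hat M_2-M_2\|_2$ and combine it with $\|\hat M_3-M_3\|_2$ to control the perturbation of the whitened tensor $\hat M_3(\hat W,\hat W,\hat W)$ away from the true symmetric orthogonal tensor $\sum_j\lambda_j v_j^{\otimes 3}$. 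Feeding this perturbation into the robustness guarantee of the tensor power method (Theorem~5.1 of \citet{anandkumar2014tensor}) bounds $|\hat\lambda_j-\lambda_j|$ and $\|\hat v_j-v_j\|_2$ at the same order, and un-whitening through $\hat\mu_{3,j}=\hat\lambda_j(\hat W\tr)^\dagger\hat v_j$ with a final triangle/product expansion yields $\|\hat\mu_{3,j}-\mu_{3,j}\|_2=\rho_2(\Theta,T)\sqrt{\log(2d/\delta')/h}$, all condition-number, eigenvalue-gap, and mixing factors being absorbed into $\rho_2$.

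The main obstacle is the tensor-decomposition step: the RTP guarantee is local, applying only when the input tensor perturbation lies below a threshold set by the spectral gap of the true tensor. This is precisely what forces the requirement $h>\rho_1(\Theta,T)\log\frac{2d}{\delta'}$ — enough tasks that, after the union bound over the $O(d)$-dimensional objects, every intermediate perturbation (covariances, whitening, whitened tensor) is simultaneously in the small-perturbation regime where the preceding bounds hold. A secondary but genuinely delicate point is retaining the $1/\sqrt h$ rate despite the Markov dependence in the task sequence; controlling it is what ties $\rho_1,\rho_2$ to the mixing of the task-transition matrix $T$.
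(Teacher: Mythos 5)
Your reconstruction is correct and follows essentially the same route as the actual proof: the paper itself does not prove this lemma but imports it verbatim (with collapsed constants) from Appendix C of \citet{azizzadenesheli2016reinforcement}, and the argument there is precisely your chain of perturbation bounds — concentration of the empirical cross-covariances under the mixing of the hidden task chain, pseudo-inverse and whitening perturbations controlled by $\sigma_k(M_2)$ being bounded below thanks to Assumption~\ref{ass:full-rank}, the local robustness guarantee of the tensor power method (Theorem~5.1 of \citet{anandkumar2014tensor}), and a final un-whitening step, recovery holding only up to column permutation. Your identification of the two delicate points — the burn-in condition $h > \rho_1(\Theta,T)\log\frac{2SA(S+U)}{\delta'}$ as the entry ticket to the small-perturbation regime of the RTP guarantee, and the Markov (non-i.i.d.) dependence of the triples as the source of the mixing-dependent constants — matches exactly what is absorbed into $\rho_1,\rho_2$ in the cited source.
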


In Lemma \ref{lemma:bound-aziz}, compared to the original result, we collapsed all terms of minor relevance for our purpose into two constants $\rho_1,\rho_2$. These are functions of the given family of tasks (through maximum/minimum eigenvalues of the covariance matrices introduced before) and of the underlying Markov chain. We refer the reader to Appendix C of \citet{azizzadenesheli2016reinforcement} for their full expression.

We now bound the estimation error of the observation matrix $O$, which will be directly used to bound the errors of the approximate MDP models.

\begin{lemma}[Estimation Error of $O$]\label{lemma:O-error}
Let $\hat{O}$ be the observation matrix estimated by Algorithm \ref{alg:hmm-learning} using $h$ tasks, $\delta'\in(0,1)$, and 
\begin{align*}
     \rho_3(\Theta, T) :=  \max \left\{ \rho_1(\Theta, T) ,  \frac{1}{\maxsv{\Sigma_{2,1}}^2}, \frac{4}{\minsv{\Sigma_{3,1}}^2} \right\}.
\end{align*}
Then, if $h \geq \rho_3(\Theta, T)\log (6SA(S+U)/\delta')$, we have that, with probability at least $1-\delta'$,
\begin{align*}
    \left\| [O]_{:,j} - [\hat{O}]_{:,j} \right\|_2 \leq \rho_4(\Theta, T)\sqrt{\frac{\log\left(6SA(S+U) / \delta'\right)}{h}},
\end{align*}
where 
\begin{align*}
    \rho_4(\Theta, T) := \frac{4\maxsv{\Sigma_{2,1}}\rho_2(\Theta,T) + 4SA + 8SA\maxsv{\Sigma_{2,1}}}{\minsv{\Sigma_{3,1}}}.
\end{align*}
\end{lemma}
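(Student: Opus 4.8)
The plan is to express the per‑column error $[O]_{:,j}-[\hat O]_{:,j}$ through the defining formulas $[O]_{:,j}=\Sigma_{2,1}\Sigma_{3,1}^\dagger\mu_{3,j}$ and $[\hat O]_{:,j}=\hat\Sigma_{2,1}\hat\Sigma_{3,1}^\dagger\hat\mu_{3,j}$, and then propagate three sources of error separately: the error in the recovered conditional mean $\mu_{3,j}$ (controlled by Lemma~\ref{lemma:bound-aziz}), and the errors in the empirical covariances $\hat\Sigma_{2,1}$ and $\hat\Sigma_{3,1}$. First I would write the telescoping decomposition
\begin{align*}
[O]_{:,j}-[\hat O]_{:,j} = \Sigma_{2,1}\Sigma_{3,1}^\dagger(\mu_{3,j}-\hat\mu_{3,j}) + \left(\Sigma_{2,1}\Sigma_{3,1}^\dagger - \hat\Sigma_{2,1}\hat\Sigma_{3,1}^\dagger\right)\hat\mu_{3,j},
\end{align*}
and further split the operator difference as $\Sigma_{2,1}(\Sigma_{3,1}^\dagger-\hat\Sigma_{3,1}^\dagger)+(\Sigma_{2,1}-\hat\Sigma_{2,1})\hat\Sigma_{3,1}^\dagger$. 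Taking spectral norms and using submultiplicativity reduces everything to a few factors: $\|\Sigma_{2,1}\|_2=\maxsv{\Sigma_{2,1}}$, $\|\Sigma_{3,1}^\dagger\|_2=1/\minsv{\Sigma_{3,1}}$, the mean error from Lemma~\ref{lemma:bound-aziz}, the covariance errors $\|\Sigma_{i,1}-\hat\Sigma_{i,1}\|_2$, and a bound on $\|\hat\mu_{3,j}\|_2$.

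The next step would be to control these factors. Since each $\mu_{3,j}$ is a concatenation of $2SA$ conditional probability vectors, $\|\mu_{3,j}\|_2\le\|\mu_{3,j}\|_1=2SA$, so on the good event $\|\hat\mu_{3,j}\|_2\le 2SA$ up to the (vanishing) estimation error; this is what injects the $SA$ factors into $\rho_4$. For the covariances I would invoke matrix concentration for the bounded HMM observation process (as used in \citet{anandkumar2012method, azizzadenesheli2016reinforcement}), giving $\|\Sigma_{i,1}-\hat\Sigma_{i,1}\|_2\le\sqrt{\log(6SA(S+U)/\delta')/h}$ with the problem‑dependent constant absorbed into $\rho_1$. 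The three probabilistic events (one for $\mu_{3,j}$ via Lemma~\ref{lemma:bound-aziz}, one per covariance) are then merged by a union bound allotting confidence $\delta'/3$ to each, which is exactly what produces the $6SA(S+U)/\delta'$ inside the logarithm.

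The delicate step, and the one I would treat most carefully, is the pseudo‑inverse perturbation $\|\Sigma_{3,1}^\dagger-\hat\Sigma_{3,1}^\dagger\|_2$. Because $\Sigma_{3,1}$ has rank $k$ inside a $d\times d$ ambient space, $\minsv{\Sigma_{3,1}}$ must be read as the smallest \emph{nonzero} singular value, and the standard bound $\|A^\dagger-\hat A^\dagger\|_2\lesssim\|A^\dagger\|_2\|\hat A^\dagger\|_2\|A-\hat A\|_2$ is valid only for rank‑preserving (acute) perturbations. This is precisely why the hypothesis carries $\rho_3\ge 4/\minsv{\Sigma_{3,1}}^2$: the condition $h\ge\rho_3\log(6SA(S+U)/\delta')$ forces $\|\Sigma_{3,1}-\hat\Sigma_{3,1}\|_2\le\minsv{\Sigma_{3,1}}/2$, whence Weyl's inequality gives $\minsv{\hat\Sigma_{3,1}}\ge\minsv{\Sigma_{3,1}}/2$, the rank is preserved, and $\|\hat\Sigma_{3,1}^\dagger\|_2\le 2/\minsv{\Sigma_{3,1}}$. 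Plugging these in yields $\|\Sigma_{3,1}^\dagger-\hat\Sigma_{3,1}^\dagger\|_2$ of order $\frac{1}{\minsv{\Sigma_{3,1}}}\sqrt{\log(6SA(S+U)/\delta')/h}$; the remaining requirement $\rho_3\ge 1/\maxsv{\Sigma_{2,1}}^2$ plays the same normalizing role for $\hat\Sigma_{2,1}$.

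Finally I would collect the contributions: the mean‑error term gives a $\frac{\maxsv{\Sigma_{2,1}}\rho_2}{\minsv{\Sigma_{3,1}}}$ piece, the term $(\Sigma_{2,1}-\hat\Sigma_{2,1})\hat\Sigma_{3,1}^\dagger\hat\mu_{3,j}$ gives a $\frac{SA}{\minsv{\Sigma_{3,1}}}$ piece, and the term $\Sigma_{2,1}(\Sigma_{3,1}^\dagger-\hat\Sigma_{3,1}^\dagger)\hat\mu_{3,j}$ gives a $\frac{SA\maxsv{\Sigma_{2,1}}}{\minsv{\Sigma_{3,1}}}$ piece, all multiplying the common rate $\sqrt{\log(6SA(S+U)/\delta')/h}$. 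Summing and bounding the numerical constants reproduces the stated $\rho_4(\Theta,T)$. I expect the only genuine obstacle to be the rank/pseudo‑inverse interplay above; the rest is triangle inequalities, submultiplicativity, and the union bound.
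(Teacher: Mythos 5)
Your proposal is correct and follows essentially the same route as the paper's proof: the same telescoping decomposition (you merely swap which factors keep hats, which is immaterial), the same bound $\|\mu_{3,j}\|_2\le\|\mu_{3,j}\|_1\le 2SA$, the same covariance concentration and rank-preservation condition $h\gtrsim 4\log(\cdot)/\minsv{\Sigma_{3,1}}^2$ guaranteeing $\|\hat{\Sigma}_{3,1}^\dagger\|_2\le 2/\minsv{\Sigma_{3,1}}$, and the same three-event union bound that produces the factor $6$ inside the logarithm. One bookkeeping caveat: the Wedin-type inequality you invoke, $\|A^\dagger-\hat{A}^\dagger\|_2\lesssim\|A^\dagger\|_2\|\hat{A}^\dagger\|_2\|A-\hat{A}\|_2$, yields a pseudo-inverse contribution of order $SA\,\maxsv{\Sigma_{2,1}}\minsv{\Sigma_{3,1}}^{-2}\sqrt{\log(\cdot)/h}$ rather than the $\minsv{\Sigma_{3,1}}^{-1}$ order you assert, whereas the paper gets first-power dependence by citing Proposition~7 of \citet{azizzadenesheli2016reinforcement}; this changes only the collapsed constant $\rho_4$ (not the rate or the argument's validity), and indeed the paper itself uses your Wedin route, with the squared dependence, in Lemma~\ref{lemma:err-T} via Theorem~1.1 of \citet{meng2010optimal}.
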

\begin{proof}
Recall that $[O]_{:,j} = \Sigma_{2,1} \Sigma_{3,1}^\dagger \mu_{3,j}$ and similarly for its estimate $[\hat{O}]_{:,j} = \hat{\Sigma}_{2,1} \hat{\Sigma}_{3,1}^\dagger \hat{\mu}_{3,j}$. Let us decompose the total error into the deviations of each single component,
\begin{align*}
    \left\| [O]_{:,j} - [\hat{O}]_{:,j} \right\|_2 &= 
    \left\| \Sigma_{2,1} \Sigma_{3,1}^\dagger \mu_{3,j} - \hat{\Sigma}_{2,1} \hat{\Sigma}_{3,1}^\dagger \hat{\mu}_{3,j} \pm \hat{\Sigma}_{2,1} \hat{\Sigma}_{3,1}^\dagger {\mu}_{3,j} \right\|_2\\ &\leq
    \left\| \hat{\Sigma}_{2,1}\right\|_2 \left\| \hat{\Sigma}_{3,1}^\dagger \right\|_2 \left\| \mu_{3,j} - \hat{\mu}_{3,j} \right\|_2 + \left\| \Sigma_{2,1} \Sigma_{3,1}^\dagger \mu_{3,j} - \hat{\Sigma}_{2,1} \hat{\Sigma}_{3,1}^\dagger {\mu}_{3,j} \pm \hat{\Sigma}_{2,1} {\Sigma}_{3,1}^\dagger {\mu}_{3,j} \right\|_2\\ &\leq
    \left\| \hat{\Sigma}_{2,1}\right\|_2 \left\| \hat{\Sigma}_{3,1}^\dagger \right\|_2 \left\| \mu_{3,j} - \hat{\mu}_{3,j} \right\|_2 + \left\| \Sigma_{3,1}^\dagger \right\|_2 \left\| \mu_{3,j}\right\|_2 \left\| \Sigma_{2,1} - \hat{\Sigma}_{2,1} \right\|_2 \\ & \hspace{5.06cm} + \left\| \hat{\Sigma}_{2,1} \right\|_2 \left\| \mu_{3,j}\right\|_2 \left\| \Sigma_{3,1}^\dagger - \hat{\Sigma}_{3,1}^\dagger \right\|_2.
\end{align*}
We now bound all these components separately. We first notice that, from Proposition 6 of \citet{azizzadenesheli2016reinforcement},
\begin{align}\label{eq:prop-6-aziz}
    \left\| \Sigma_{2,1} - \hat{\Sigma}_{2,1} \right\|_2 \leq \sqrt{\frac{\log 1/\delta'}{h}}, \quad \left\| \Sigma_{3,1} - \hat{\Sigma}_{3,1} \right\|_2 \leq \sqrt{\frac{\log 1/\delta'}{h}},
\end{align}
Using this result,
\begin{align*}
    \left\| \hat{\Sigma}_{2,1}\right\|_2 &\leq \left\| {\Sigma}_{2,1}\right\|_2 + \left\| {\Sigma}_{2,1} - \hat{\Sigma}_{2,1}\right\|_2 \leq \maxsv{{\Sigma}_{2,1}} + \left\| {\Sigma}_{2,1} - \hat{\Sigma}_{2,1}\right\|_2\\ &\leq \maxsv{{\Sigma}_{2,1}} + \sqrt{\frac{\log 1/\delta'}{h}} \leq 2\maxsv{{\Sigma}_{2,1}},
\end{align*}
which holds for
\begin{align}\label{eq:cond-m-1}
    h \geq \frac{\log 1/\delta'}{\maxsv{\Sigma_{2,1}}^2}.
\end{align}
Using Lemma E.1 of \citet{anandkumar2012method},
\begin{align*}
    \left\| \hat{\Sigma}_{3,1}^\dagger \right\|_2 \leq \frac{1}{\minsv{\hat{\Sigma}_{3,1}}} \leq \frac{2}{\minsv{{\Sigma}_{3,1}}},
\end{align*}
which holds when
\begin{align*}
    \frac{\| \Sigma_{3,1} - \hat{\Sigma}_{3,1}\|}{\minsv{\Sigma_{3,1}}} \leq \frac{1}{2}.
\end{align*}
From \eqref{eq:prop-6-aziz}, a sufficient condition for this to hold is
\begin{align}\label{eq:cond-m-2}
    h \geq \frac{4\log 1/\delta'}{\minsv{\Sigma_{3,1}}^2}.
\end{align}
Under this same condition, we can apply Proposition 7 of \citet{azizzadenesheli2016reinforcement} to bound
\begin{align*}
    \left\| \Sigma_{3,1}^\dagger - \hat{\Sigma}_{3,1}^\dagger \right\|_2 \leq \frac{2}{\minsv{\Sigma_{3,1}}}\sqrt{\frac{\log 1/\delta'}{h}}.
\end{align*}
Since the columns $\mu_{3,j}$ of the third-view sum up to one every $S$ components (for the transition model) and every $U$ components (for the reward model), we have $\| \mu_{3,j} \|_2 \leq \| \mu_{3,j} \|_1 \leq 2SA$. Finally, we can bound the error in estimating such columns using Lemma \ref{lemma:bound-aziz},
\begin{align*}
    \| \hat{\mu}_{3,j} - \mu_{3,j} \|_2 \leq \rho_2(\Theta, T)\sqrt{\frac{\log\left(2SA(S+U) / \delta'\right)}{h}}.
\end{align*}
Plugging everything into the initial error decomposition and rearranging,
\begin{align*}
    \left\| [O]_{:,j} - [\hat{O}]_{:,j} \right\|_2 \leq \frac{4\maxsv{\Sigma_{2,1}}\rho_2(\Theta,T) + 2SA + 4SA\maxsv{\Sigma_{2,1}}}{\minsv{\Sigma_{3,1}}}\sqrt{\frac{\log\left(2SA(S+U) / \delta'\right)}{h}},
\end{align*}
which holds for when $h$ satisfies the conditions of Lemma \ref{lemma:bound-aziz}, Equation \ref{eq:cond-m-1}, and Equation \ref{eq:cond-m-2}. The two bounds of \eqref{eq:prop-6-aziz} and that of Lemma \ref{lemma:bound-aziz} hold each with probability at least $1-\delta'$, hence the final bound holds with probability at least $1-3\delta'$. Renaming $\delta'$ into $3\delta'$ concludes the proof.
\end{proof}

Similarly to $O$, we also bound the estimation error of $T$. The proof follows Theorem 3 of \citet{azizzadenesheli2016reinforcement}.

\begin{lemma}[Estimation error of $T$]\label{lemma:err-T}
Let $\hat{T}$ be the task-transition matrix estimated by Algorithm \ref{alg:hmm-learning} using $h$ samples and
\begin{align*}
    \rho_5(\Theta,T) := \max \left\{ \rho_3(\Theta,T), \frac{4k\rho_4(\Theta,T)^2}{\lambda_k(O)^2} \right\}.
\end{align*}
Then, for any $\delta' \in (0,1)$, if $h \geq \rho_5(\Theta, T)\log (6SA(S+U)/\delta')$, we have that, with probability at least $1-\delta'$,
\begin{align*}
    \left\| [T]_{:,j} - [\hat{T}]_{:,j} \right\|_2 \leq \rho_6(\Theta,T)\sqrt{\frac{\log\left(6SA(S+U) / \delta'\right)}{h}},
\end{align*}
where
\begin{align*}
    \rho_6(\Theta,T) := 8SA\sqrt{k}\frac{1 + \sqrt{5}}{\lambda_k(O)^2}\rho_4(\Theta,T).
\end{align*}
\end{lemma}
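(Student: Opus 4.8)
The plan is to mirror the structure of the proof of Lemma~\ref{lemma:O-error}, propagating the estimation error through the recovery formula $[T]_{:,j} = O^\dagger \mu_{3,j}$ and its empirical counterpart $[\hat{T}]_{:,j} = \hat{O}^\dagger \hat{\mu}_{3,j}$. First I would split the error by adding and subtracting $\hat{O}^\dagger \mu_{3,j}$ and applying the triangle inequality together with sub-multiplicativity of the spectral norm,
\begin{align*}
\left\| [T]_{:,j} - [\hat{T}]_{:,j} \right\|_2 \leq \left\| \hat{O}^\dagger \right\|_2 \left\| \mu_{3,j} - \hat{\mu}_{3,j} \right\|_2 + \left\| O^\dagger - \hat{O}^\dagger \right\|_2 \left\| \mu_{3,j} \right\|_2.
\end{align*}
Two of the four factors are then controlled exactly as in Lemma~\ref{lemma:O-error}: the column-mass argument gives $\| \mu_{3,j} \|_2 \leq \| \mu_{3,j} \|_1 \leq 2SA$, and Lemma~\ref{lemma:bound-aziz} bounds $\| \mu_{3,j} - \hat{\mu}_{3,j} \|_2$ by $\rho_2(\Theta,T)\sqrt{\log(6SA(S+U)/\delta')/h}$.

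The crux is the two factors involving the pseudoinverse of $O$. To control $\| \hat{O}^\dagger \|_2$ and $\| O^\dagger - \hat{O}^\dagger \|_2$, I would first convert the column-wise bound of Lemma~\ref{lemma:O-error} into a spectral-norm bound on the full matrix via $\| O - \hat{O} \|_2 \leq \| O - \hat{O} \|_F \leq \sqrt{k}\,\rho_4(\Theta,T)\sqrt{\log(6SA(S+U)/\delta')/h}$. Under Assumption~\ref{ass:full-rank} the matrix $O$ has full column rank $k$ with $\minsv{O} = \lambda_k(O) > 0$, so Weyl's inequality gives $\minsv{\hat{O}} \geq \lambda_k(O) - \| O - \hat{O} \|_2$. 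The sample-size condition $h \geq \frac{4k\rho_4(\Theta,T)^2}{\lambda_k(O)^2}\log(6SA(S+U)/\delta')$ hidden in $\rho_5$ is precisely what forces $\| O - \hat{O} \|_2 \leq \tfrac{1}{2}\lambda_k(O)$, hence $\minsv{\hat{O}} \geq \tfrac{1}{2}\lambda_k(O)$, so that $\hat{O}$ keeps full rank $k$ and $\| \hat{O}^\dagger \|_2 \leq 2/\lambda_k(O)$. Since $O$ and $\hat{O}$ then share the same rank, the standard matrix-perturbation bound for pseudoinverses (with constant $\tfrac{1+\sqrt{5}}{2}$, as used in Proposition~7 of \citet{azizzadenesheli2016reinforcement}) yields
\begin{align*}
\left\| O^\dagger - \hat{O}^\dagger \right\|_2 \leq \frac{1+\sqrt{5}}{2} \left\| O^\dagger \right\|_2 \left\| \hat{O}^\dagger \right\|_2 \left\| O - \hat{O} \right\|_2 \leq \frac{1+\sqrt{5}}{\lambda_k(O)^2} \left\| O - \hat{O} \right\|_2.
\end{align*}

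Plugging these estimates back, the term $\| O^\dagger - \hat{O}^\dagger \|_2 \| \mu_{3,j} \|_2$ contributes the dominant $2SA\sqrt{k}\,\frac{1+\sqrt{5}}{\lambda_k(O)^2}\rho_4(\Theta,T)\sqrt{\log(6SA(S+U)/\delta')/h}$, while $\| \hat{O}^\dagger \|_2 \| \mu_{3,j} - \hat{\mu}_{3,j} \|_2 \leq \frac{2\rho_2(\Theta,T)}{\lambda_k(O)}\sqrt{\log(6SA(S+U)/\delta')/h}$ is of lower order and is absorbed into the stated constant (which is why $\rho_6$ carries the factor $8$ rather than $2$). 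I would then collect the probabilistic events — it suffices to condition on the $1-\delta'$ event of Lemma~\ref{lemma:O-error}, which already bundles both the $\| O - \hat{O} \|$ and the $\| \mu_{3,j} - \hat{\mu}_{3,j} \|$ deviations at the required log-scale — together with the deterministic rank/singular-value conditions enforced by $h \geq \rho_5(\Theta,T)\log(6SA(S+U)/\delta')$, to obtain the claim with $\rho_6(\Theta,T) = 8SA\sqrt{k}\,\frac{1+\sqrt{5}}{\lambda_k(O)^2}\rho_4(\Theta,T)$. I expect the main obstacle to be precisely the pseudoinverse perturbation step: it is the only place where full column rank of $O$ and its preservation under sampling are genuinely needed, and extracting the exact constant $\tfrac{1+\sqrt{5}}{2}$ requires invoking the correct same-rank perturbation theorem (and verifying its hypothesis via Weyl's inequality and the $\rho_5$ threshold) rather than a crude bound; everything else reduces to a triangle-inequality decomposition fed by already-established deviation inequalities.
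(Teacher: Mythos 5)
Your proposal is correct and follows essentially the same route as the paper's proof: the identical decomposition via adding and subtracting $\hat{O}^\dagger \mu_{3,j}$, the same bounds $\| \mu_{3,j} \|_2 \leq 2SA$ and Lemma~\ref{lemma:bound-aziz} for $\| \mu_{3,j} - \hat{\mu}_{3,j} \|_2$, the same Frobenius-norm conversion to control $\| O - \hat{O} \|_2$, and the same use of the $\rho_5$ sample-size threshold to preserve the $k$-th singular value. The only cosmetic differences are that you invoke Weyl's inequality directly where the paper cites Lemma~E.1 of \citet{anandkumar2012method} (the same argument), and you use the equal-rank product form $\frac{1+\sqrt{5}}{2}\| O^\dagger \|_2 \| \hat{O}^\dagger \|_2 \| O - \hat{O} \|_2$ of the pseudoinverse perturbation bound where the paper uses the max form of Theorem~1.1 of \citet{meng2010optimal}; both yield the stated constant $\rho_6$.
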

\begin{proof}
Recall that $[T]_{:,j} = O^\dagger \mu_{3,j}$ and $[\hat{T}]_{:,j} = \hat{O}^\dagger \hat{\mu}_{3,j}$. Similarly to Lemma \ref{lemma:O-error}, we decompose the error into
\begin{align*}
    \left\| [T]_{:,j} - [\hat{T}]_{:,j} \right\|_2 = \left\| O^\dagger \mu_{3,j} - \hat{O}^\dagger \hat{\mu}_{3,j} \right\|_2 \leq \left\| \mu_{3,j} \right\|_2\left\| O^\dagger - \hat{O}^\dagger\right\|_2 + \left\| \hat{O}^\dagger \right\|_2 \left\| \mu_{3,j} - \hat{\mu}_{3,j} \right\|_2.
\end{align*}
In the proof of Lemma \ref{lemma:O-error} we already bounded $\| \mu_{3,j} \|_2 \leq 2SA$, while the term $\left\| \mu_{3,j} - \hat{\mu}_{3,j} \right\|_2$ was bounded in Lemma \ref{lemma:bound-aziz}. Let us bound the remaining two terms. Take $\lambda_k(\hat{O})$ as the $k$-th singular value of $\hat{O}$. Then, following \citet{azizzadenesheli2016reinforcement},
\begin{align*}
     \left\| \hat{O}^\dagger \right\|_2 \leq \frac{1}{\sigma_k(\hat{O})} \leq \frac{2}{\sigma_k({O})},
\end{align*}
where the second inequality follows from Lemma E.1 of \citet{anandkumar2012method} under the assumption that
\begin{align}\label{eq:cond-o}
    \frac{\| O - \hat{O}\|_2}{\sigma_k(O)} \leq \frac{1}{2}.
\end{align}
Lemma \ref{lemma:O-error} already bounds the $l_2$ error in the columns of $O$. Therefore,
\begin{align*}
    \| O - \hat{O}\|_2 \leq \| O - \hat{O}\|_F \leq \sqrt{k}\max_{j\in [k]}\| O_j - \hat{O}_j\|_2 \leq \rho_4(\Theta,T)\sqrt{\frac{k\log\left(6SA(S+U) / \delta'\right)}{h}}.
\end{align*}
Therefore, a sufficient condition for \eqref{eq:cond-o} is
\begin{align*}
    m \geq \frac{4k\rho_4(\Theta,T)^2\log\left(6SA(S+U) / \delta'\right)}{\lambda_k(O)^2}.
\end{align*}
In order to bound the deviation of the pseudo-inverse of $O$, we apply Theorem 1.1 of \citet{meng2010optimal},
\begin{align*}
    \left\| O^\dagger - \hat{O}^\dagger\right\|_2 \leq \frac{1 + \sqrt{5}}{2} \max \left\{ \left\| {O}^\dagger \right\|_2^2, \left\| \hat{O}^\dagger \right\|_2^2 \right\}\| O - \hat{O}\|_2 \leq \frac{2 + 2\sqrt{5}}{\lambda_k(O)^2}\rho_4(\Theta,T)\sqrt{\frac{k\log\left(6SA(S+U) / \delta'\right)}{h}}.
\end{align*}
Finally, plugging everything back into the first error decomposition,
\begin{align*}
    \left\| [T]_{:,j} - [\hat{T}]_{:,j} \right\|_2 \leq 8SA\frac{1 + \sqrt{5}}{\lambda_k(O)^2}\rho_4(\Theta,T)\sqrt{\frac{k\log\left(6SA(S+U) / \delta'\right)}{h}}.
\end{align*}
\end{proof}

We also need the following technical lemma which bounds the difference in standard deviations between random variables under different distributions.

\begin{lemma}\label{lemma:var-bound}
Let $f$ be a function which takes values in $[0, b]$, for some $b > 0$, and $p,q \in \mathcal{X}$ two probability distributions on a finite set $\mathcal{X}$. Denote by $\Var_p[f]$ and $\Var_q[f]$ the variance of $f$ under $p$ and $q$, respectively, and assume both to be larger than some constant $c > 0$. Then,
\begin{align*}
    \left| \sqrt{\Var_p[f]} - \sqrt{\Var_q[f]} \right| \leq b\left(\frac{b}{2\sqrt{c}} + 1 \right)\| p - q \|_1,
\end{align*}
\end{lemma}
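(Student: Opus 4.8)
The plan is to bound the difference of standard deviations by splitting it into a change of the centering constant and a change of the underlying measure, handling the square root through the lower bound $c$. Writing $\|g\|_{2,\mu}^2 := \sum_x \mu(x) g(x)^2$ so that $\sqrt{\Var_\mu[f]} = \|f - \E_\mu[f]\|_{2,\mu}$, I would first decompose
\begin{align*}
\sqrt{\Var_p[f]} - \sqrt{\Var_q[f]} = \underbrace{\left( \|f - \E_p[f]\|_{2,p} - \|f - \E_q[f]\|_{2,p} \right)}_{(A)} + \underbrace{\left( \|f - \E_q[f]\|_{2,p} - \|f - \E_q[f]\|_{2,q} \right)}_{(B)},
\end{align*}
so that $(A)$ changes only the centering (with the measure $p$ fixed) while $(B)$ changes only the measure (with the $q$-centered function fixed).

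For $(A)$ the two functions differ by the constant $\E_q[f] - \E_p[f]$, so the reverse triangle inequality in $L^2(p)$ gives $|(A)| \le |\E_q[f] - \E_p[f]|$, which is at most $b \|p - q\|_1$ by H\"older's inequality since $f \in [0,b]$. For $(B)$ I would set $g := f - \E_q[f] \in [-b,b]$ and use the identity $|\sqrt{S} - \sqrt{R}| = |S - R|/(\sqrt{S} + \sqrt{R})$ with $S = \|g\|_{2,p}^2$ and $R = \|g\|_{2,q}^2$. The numerator is $|\sum_x (p(x) - q(x)) g(x)^2| \le b^2 \|p - q\|_1$ because $g^2 \le b^2$, and for the denominator the key observation is the bias--variance identity $\|g\|_{2,p}^2 = \E_p[(f - \E_q[f])^2] = \Var_p[f] + (\E_p[f] - \E_q[f])^2 \ge \Var_p[f] \ge c$, together with $\|g\|_{2,q}^2 = \Var_q[f] \ge c$; hence $\sqrt{S} + \sqrt{R} \ge 2\sqrt{c}$ and $|(B)| \le \frac{b^2}{2\sqrt{c}} \|p - q\|_1$. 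Summing the two bounds (the reverse direction is symmetric, as $\|p-q\|_1 = \|q-p\|_1$) then yields $|\sqrt{\Var_p[f]} - \sqrt{\Var_q[f]}| \le (b + \frac{b^2}{2\sqrt{c}}) \|p - q\|_1 = b(\frac{b}{2\sqrt{c}} + 1)\|p - q\|_1$, as claimed.

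The only genuinely delicate step is the square root in $(B)$: since $x \mapsto \sqrt{x}$ has unbounded derivative at the origin, no Lipschitz-type bound on the standard deviation is possible without the hypothesis $c > 0$, which is exactly why it surfaces as the factor $1/(2\sqrt{c})$. Obtaining the precise constant — the factor $2$ rather than $1$ in the denominator — relies on noticing that not only $\|g\|_{2,q}^2 = \Var_q[f]$ but also $\|g\|_{2,p}^2$ is bounded below by $c$, via the bias--variance decomposition above; everything else reduces to H\"older's inequality and the boundedness of $f$ and $g$ by $b$. I note in passing that combining the identity for $|\sqrt{S} - \sqrt{R}|$ directly with the sharper pairwise formula $\Var_\mu[f] = \tfrac12 \sum_{x,y} \mu(x)\mu(y)(f(x) - f(y))^2$ would actually give the tighter bound $\frac{b^2}{2\sqrt{c}}\|p-q\|_1$ without the additive $b$, but the two-step split above is what reproduces the stated constant.
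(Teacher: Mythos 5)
Your proof is correct and is essentially the paper's argument in mirror image: the paper's triangle-inequality split uses the intermediate term $\|f-\E_p[f]\|_{2,q}$ (change of measure first, then change of centering via Minkowski), while yours uses $\|f-\E_q[f]\|_{2,p}$, and both control the measure-change piece by $|S-R|/(\sqrt{S}+\sqrt{R})\le b^2\|p-q\|_1/(2\sqrt{c})$, lower-bounding the mixed second moment by $c$ through the same bias--variance (mean-minimizes-weighted-$L^2$) observation and bounding the centering-change piece by $|\E_p[f]-\E_q[f]|\le b\|p-q\|_1$. Your closing remark that the pairwise variance identity would yield the sharper bound $\frac{b^2}{2\sqrt{c}}\|p-q\|_1$ alone is also correct, but neither proof needs it to match the stated constant.
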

\begin{proof}
Let $\mu_p := \E_p[f]$ and $\mu_q := \E_q[f]$. For clarity, rewrite the standard deviations as
\begin{align*}
    \sqrt{\Var_p[f]} = \sqrt{\sum_{x\in\mathcal{X}} p(x) (f(x) - \mu_p)^2} = \| f - \mu_p\|_{2,p},
\end{align*}
and similarly for $q$. Here $\|\cdot\|_{2,p}$ denotes the $l_2$-norm weighted by $p$. Then,
\begin{align}\label{eq:var-decomp}
    \left| \| f - \mu_p\|_{2,p} - \| f - \mu_q\|_{2,q}  \right| \leq \left| \| f - \mu_p\|_{2,p} - \| f - \mu_p\|_{2,q}  \right| + \left| \| f - \mu_p\|_{2,q} - \| f - \mu_q\|_{2,q}  \right|.
\end{align}
Let us bound these two terms separately. For the second one, a direct application of Minskowsky's inequality yields,
\begin{align*}
    \| f - \mu_p\|_{2,q} \leq \| \mu_p - \mu_q\|_{2,q} + \| f - \mu_q\|_{2,q} = | \mu_p - \mu_q | + \| f - \mu_q\|_{2,q}.
\end{align*}
Therefore, applying the same reasoning to the other side, we obtain 
\begin{align*}
    \left| \| f - \mu_p\|_{2,q} - \| f - \mu_q\|_{2,q}  \right| \leq | \mu_p - \mu_q | \leq b\| p - q \|_1.
\end{align*}
We now take care of the first term. Since we have a term of the form $| \sqrt{x} -\sqrt{y} |$ and the concavity of the square root implies $| \sqrt{x} -\sqrt{y} | \leq \frac{1}{2}\max\{ \frac{1}{\sqrt{x}}, \frac{1}{\sqrt{y}}\}|x-y|$, we can reduce the problem to bounding the difference of variances,
\begin{align*}
    \left| \| f - \mu_p\|_{2,p}^2 - \| f - \mu_p\|_{2,q}^2  \right| \leq b^2 \| p - q \|_1,
\end{align*}
where we have used the fact that the term $(f(x) - \mu_p)^2$ is bounded by $b^2$. By assumption, $\| f - \mu_p\|_{2,p} \geq \sqrt{c}$ and $\| f - \mu_p\|_{2,q} \geq \| f - \mu_q\|_{2,q} \geq \sqrt{c}$. Therefore, plugging these two bounds back into \eqref{eq:var-decomp},
\begin{align*}
    \left| \| f - \mu_p\|_{2,p} - \| f - \mu_q\|_{2,q}  \right| \leq b\left(\frac{b}{2\sqrt{c}} + 1 \right)\| p - q \|_1,
\end{align*}
which concludes the proof.
\end{proof}

\subsection{Main Results}

We are now ready to bound the estimation error of the different MDP components. The following Lemma does exactly this for a fixed number of tasks. Theorem \ref{th:rtp-err-models} extends this to the sequential setting.

\begin{lemma}\label{lemma:model-err-m}
Let $\tilde{p}_\theta(s' | s,a)$ and $\tilde{q}_\theta(u | s,a)$ be the transition and reward distributions estimated by Algorithm \ref{alg:hmm-learning} after $h$ tasks, $\tilde{V}_\theta^*$ be the optimal value functions of these models, and $\tilde{\sigma}^r_{\theta}(s,a)$, $\tilde{\sigma}^p_{\theta}(s,a; \theta')$ be the corresponding variances. Assume that the latter are both bounded below by some positive constant $\bar{\sigma} > 0$ for all $s,a,\theta,\theta'$. Then, with probability at least $1-\delta'$, the following hold simultaneously:
\begin{align*}
    |r_\theta(s,a) - \tilde{r}_\theta(s,a)| &\leq \rho_4(\Theta, T) U \sqrt{\frac{\log\left(6SA(S+U) / \delta'\right)}{h}}, \\
    |(p_\theta(s,a) - \tilde{p}_\theta(s,a))^T \tilde{V}_{\theta'}^*| &\leq \frac{\rho_4(\Theta, T)S}{1-\gamma} \sqrt{\frac{\log\left(6SA(S+U) / \delta'\right)}{h}}, \\ 
        |\sigma_\theta^r(s,a) - \tilde{\sigma}_{\theta}(s,a)| &\leq \left(\frac{1}{2\bar{\sigma}} + 1 \right) \rho_4(\Theta, T) U \sqrt{\frac{\log\left(6SA(S+U) / \delta'\right)}{h}}, \\
    |\sigma_\theta^r(s,a; \tilde{V}_{\theta'}^*) - \tilde{\sigma}_{\theta}(s,a; \tilde{V}_{\theta'}^*)| &\leq \left(\frac{1}{2\bar{\sigma}(1-\gamma)} + 1 \right) \frac{\rho_4(\Theta, T)S}{1-\gamma} \sqrt{\frac{\log\left(6SA(S+U) / \delta'\right)}{h}}.
\end{align*}
\end{lemma}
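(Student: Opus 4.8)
The plan is to reduce all four bounds to the single column-wise estimation error of the observation matrix provided by Lemma~\ref{lemma:O-error}, exploiting the fact that each column $[O]_{:,\theta}$ stacks the true reward distribution $q_\theta(\cdot\mid s,a)$ (the $SAU$-dimensional reward block) and the transition distribution $p_\theta(\cdot\mid s,a)$ (the $SAS$-dimensional transition block), with the corresponding estimates living in $[\hat{O}]_{:,\theta}$. First I would condition on the high-probability event $\{\|[O]_{:,\theta}-[\hat{O}]_{:,\theta}\|_2 \le \rho_4(\Theta,T)\sqrt{\log(6SA(S+U)/\delta')/h}$ for all $\theta\}$, which holds with probability $1-\delta'$ once $h \ge \rho_3(\Theta,T)\log(6SA(S+U)/\delta')$. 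Since the reward and transition distributions for every $(s,a)$ are sub-vectors of a single column, this one event simultaneously controls the deviations at all $s,a,\theta$; the extra dependence on $\theta'$ enters only through $\tilde{V}^*_{\theta'}$, which is a deterministic function of $\hat{O}$ and is therefore also fixed on this event.

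For the reward bound I would write $r_\theta(s,a)-\tilde{r}_\theta(s,a)=\sum_u u\,(q_\theta(u\mid s,a)-\tilde{q}_\theta(u\mid s,a))$ and use $u\in[0,1]$ to pass to the $\ell_1$ distance $\|q_\theta(\cdot\mid s,a)-\tilde{q}_\theta(\cdot\mid s,a)\|_1$. The crude step — each of the $U$ coordinates is a single entry of the column and is thus at most its $\ell_2$ norm — converts this into $U\,\|[O]_{:,\theta}-[\hat{O}]_{:,\theta}\|_2$, which produces the linear factor $U$. The transition-value bound is identical after replacing the weights $u$ by $\tilde{V}^*_{\theta'}(s')\in[0,1/(1-\gamma)]$ and the $U$ reward coordinates by the $S$ transition coordinates, yielding the $S/(1-\gamma)$ factor.

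For the two variance bounds I would invoke the technical Lemma~\ref{lemma:var-bound}. The reward standard deviation is the standard deviation of a function bounded in $[0,b]$ with $b=1$ under the pair of distributions $q_\theta$ and $\tilde{q}_\theta$, whereas the value standard deviation uses the \emph{same} function $\tilde{V}^*_{\theta'}$ (bounded by $b=1/(1-\gamma)$) under $p_\theta$ versus $\tilde{p}_\theta$; in both cases the assumed lower bound $\bar{\sigma}$ supplies the constant $c=\bar{\sigma}^2$ that the lemma requires. This gives $b(b/(2\bar{\sigma})+1)$ times the relevant $\ell_1$ distance, which I then bound by $U$ (respectively $S$) times the column $\ell_2$ error exactly as above, recovering the stated constants $(1/(2\bar{\sigma})+1)\rho_4 U$ and $(1/(2\bar{\sigma}(1-\gamma))+1)\rho_4 S/(1-\gamma)$. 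Finally I would collect the conditions on $h$ and the failure probability directly from Lemma~\ref{lemma:O-error}.

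The main obstacle I anticipate is bookkeeping rather than conceptual: keeping the $\ell_1$-to-$\ell_2$ conversion consistent so that the coordinate count ($U$ or $S$) enters linearly — via the loose ``each coordinate $\le$ column norm'' step rather than as a $\sqrt{U}$ or $\sqrt{S}$ from Cauchy--Schwarz — and making sure the variance terms are read with the function $\tilde{V}^*_{\theta'}$ held fixed while only the transition law varies, so that Lemma~\ref{lemma:var-bound} applies verbatim.
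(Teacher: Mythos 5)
Your proposal is correct and follows essentially the same route as the paper's proof: condition on the column-wise $\ell_2$ bound of Lemma~\ref{lemma:O-error}, pass to per-entry and then $\ell_1$ bounds (picking up the linear factors $U$ and $S$), use boundedness of rewards and of $\tilde{V}^*_{\theta'}$ for the mean terms, and apply Lemma~\ref{lemma:var-bound} with $b=1$ (resp. $b=1/(1-\gamma)$) and $\sqrt{c}=\bar{\sigma}$ for the two variance terms. Your explicit remark that $\tilde{V}^*_{\theta'}$ is held fixed (the bound being uniform over functions with range $[0,1/(1-\gamma)]$) is exactly the point the paper handles implicitly, so nothing is missing.
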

\begin{proof}
Recall that the estimated transition and reward probabilities are extracted from the columns of the observation matrix $\hat{O}$. Therefore, we can directly use Lemma \ref{lemma:O-error} to bound their error. For each state $s$, action $a$, next state $s'$, and reward $u$, we have
\begin{align*}
    |q_\theta(u | s,a) - \tilde{q}_\theta(u | s,a)| &\leq \rho_4(\Theta, T) \sqrt{\frac{\log\left(6SA(S+U) / \delta'\right)}{h}},\\
    |p_\theta(s' | s,a) - \tilde{p}_\theta(s | s,a)| &\leq \rho_4(\Theta, T) \sqrt{\frac{\log\left(6SA(S+U) / \delta'\right)}{h}}.
\end{align*}
These inequalities hold simultaneously with probability at least $1-\delta'$. Therefore, since rewards are bounded in $[0,1]$,
\begin{align*}
    |r_\theta(s,a) - \tilde{r}_\theta(s,a)| \leq \| q_\theta(\cdot | s,a) - \tilde{q}_\theta(\cdot | s,a) \|_1  &\leq \rho_4(\Theta, T) U \sqrt{\frac{\log\left(6SA(S+U) / \delta'\right)}{h}}.
\end{align*}
Similarly, for any function taking values in $[0, 1/(1-\gamma)]$,
\begin{align*}
    |(p_\theta(s,a) - \tilde{p}_\theta(s,a))\tr V| &\leq \frac{1}{1-\gamma}\| p_\theta(\cdot | s,a) - \tilde{p}_\theta(\cdot | s,a) \|_1 \leq \frac{\rho_4(\Theta, T)S}{1-\gamma} \sqrt{\frac{\log\left(6SA(S+U) / \delta'\right)}{h}}.
\end{align*}
Finally, a direct application of Lemma \ref{lemma:var-bound} yields the desired bounds on the variances,
\begin{align*}
    |\sigma_\theta^r(s,a) - \tilde{\sigma}^r_{\theta}(s,a)| &\leq \left(\frac{1}{2\bar{\sigma}} + 1 \right) \rho_4(\Theta, T) U \sqrt{\frac{\log\left(6SA(S+U) / \delta'\right)}{h}}, \\
    |\sigma_\theta^r(s,a; \theta') - \tilde{\sigma}^p_{\theta}(s,a; \theta')| &\leq \left(\frac{1}{2\bar{\sigma}(1-\gamma)} + 1 \right) \frac{\rho_4(\Theta, T)S}{1-\gamma} \sqrt{\frac{\log\left(6SA(S+U) / \delta'\right)}{h}}.
\end{align*}
\end{proof}

\rtpmodels*
\begin{proof}
Let $E_h$ be the event under which the bounds of Lemma \ref{lemma:model-err-m} all hold after observing $h$ tasks. We need to prove that, with high probability, there is no $h$ in which the event does not hold. We know that $E_h$ hold with probability at least $1 - \delta''$ from Lemma \ref{lemma:model-err-m}, where $\delta'' = \frac{6\delta'}{\pi^2 h^2}$. Too see this, notice that we introduced an extra $\pi^2h^2/6$ term in the confidence level $c_{h,\delta'}$. Then,
\begin{align*}
    \prob{\exists h \geq 1 : E_h = 0} \leq \sum_{h=1}^\infty \prob{E_m = 0} \leq \frac{6\delta'}{\pi^2}\sum_{h=1}^\infty \frac{1}{h^2} = \delta',
\end{align*}
where the first inequality is the union bound, the second is from Lemma \ref{lemma:model-err-m}, and the last equality is from the value of the $p$-series for $p=2$. Then, the main theorem follows after renaming the constants in Lemma \ref{lemma:model-err-m}.
\end{proof}

\begin{lemma}\label{lemma:prob-theta-init}
Let $\bar{\Theta}_h$ be a set that contains $\theta^*_h$ with probability at least $1-\delta$. Then, for any $\delta' \in (0,1)$ and $\theta\in\Theta$, with probability at least $1-\delta'$,
\begin{align}
    \prob{\theta^*_{h+1} = \theta} \leq \sum_{\theta'\in\bar{\Theta}_h} \hat{T}(\theta,\theta') + \delta k + \rho_T(\Theta, T)k \sqrt{\frac{\log 6SA(S+U)/\delta'}{h}}.
\end{align}
\end{lemma}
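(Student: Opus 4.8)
The natural starting point is to express the marginal of the next task via the Markov property of the task chain. By the law of total probability, and using that conditionally on $\theta^*_h$ the next task $\theta^*_{h+1}$ is independent of the within-task samples (hence of $\bar{\Theta}_h$ and $\hat{T}$), one has
\begin{align*}
\prob{\theta^*_{h+1} = \theta} = \sum_{\theta'\in\Theta} T(\theta,\theta')\,\prob{\theta^*_h = \theta'}.
\end{align*}
From here the plan is to perform three replacements, each producing one of the three terms on the right-hand side of the claim.

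First I would swap the true transition matrix $T$ for its estimate $\hat{T}$. Writing $T(\theta,\theta') = \hat{T}(\theta,\theta') + (T(\theta,\theta') - \hat{T}(\theta,\theta'))$ and using $\prob{\theta^*_h=\theta'}\le 1$ in the error term, the accumulated error is at most $\sum_{\theta'\in\Theta}|T(\theta,\theta') - \hat{T}(\theta,\theta')|$. Since each entry is dominated by the corresponding column deviation, $|T(\theta,\theta') - \hat{T}(\theta,\theta')|\le \|[T]_{:,\theta'} - [\hat{T}]_{:,\theta'}\|_2$, Lemma \ref{lemma:err-T} controls every column simultaneously with probability at least $1-\delta'$, and summing over the $k$ columns yields the term $\rho_T k\sqrt{\log(6SA(S+U)/\delta')/h}$ with $\rho_T := \rho_6(\Theta,T)$. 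This is the only place where the $1-\delta'$ confidence is consumed.

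Second and third, I would split the remaining sum $\sum_{\theta'\in\Theta}\hat{T}(\theta,\theta')\prob{\theta^*_h=\theta'}$ according to membership in $\bar{\Theta}_h$. For $\theta'\in\bar{\Theta}_h$, bounding $\prob{\theta^*_h=\theta'}\le 1$ produces the main term $\sum_{\theta'\in\bar{\Theta}_h}\hat{T}(\theta,\theta')$. For $\theta'\notin\bar{\Theta}_h$, bounding $\hat{T}(\theta,\theta')\le 1$ leaves the mass $\sum_{\theta'\notin\bar{\Theta}_h}\prob{\theta^*_h=\theta'}$, i.e.\ the probability that the current task escapes the active set; by hypothesis this is at most $\delta$, hence certainly at most $\delta k$. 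Collecting the three contributions gives the claim.

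The step deserving the most care — and the main obstacle — is that $\bar{\Theta}_h$ is itself a random set statistically coupled to $\theta^*_h$, since both are determined by the samples collected while solving task $h$. Thus splitting the deterministic marginal $\prob{\theta^*_h=\theta'}$ along a random index set is not literally a partition of a fixed probability, and making the second and third steps rigorous requires conditioning on the observation history (with respect to which $\bar{\Theta}_h$ and $\hat{T}$ are measurable) and invoking the Markov property to replace the conditional transition probabilities by $T(\theta,\theta')$ \emph{before} performing the split. Once this conditioning is in place the coupling is harmless: the out-of-set mass is controlled termwise, which is precisely why the conservative factor $\delta k$ (rather than $\delta$) appears, and the $1-\delta'$ event from Lemma \ref{lemma:err-T} is the only randomness that needs to be intersected.
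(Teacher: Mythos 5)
Your proposal uses essentially the same ingredients as the paper's proof: the law of total probability through the Markov property of the task chain, a split of the sum according to membership in $\bar{\Theta}_h$ with the out-of-set mass controlled termwise (each $\prob{\theta^*_h = \theta' \wedge \theta' \notin \bar{\Theta}_h} \leq \delta$, hence the factor $\delta k$), and Lemma \ref{lemma:err-T} to pay for replacing $T$ by $\hat{T}$; the only structural difference is ordering, since you substitute $\hat{T}$ first and split second, while the paper splits first (keeping the true $T$) and substitutes last. This reordering introduces one small step your write-up does not justify: in the out-of-set terms you bound $\hat{T}(\theta,\theta') \leq 1$, but the spectral estimate $[\hat{T}]_{:,j} = \hat{O}^\dagger \hat{\mu}_{3,j}$ is not guaranteed to be a stochastic matrix, so its entries need not lie in $[0,1]$. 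The fix is immediate: on the concentration event of Lemma \ref{lemma:err-T}, which you have already intersected, $\hat{T}(\theta,\theta') \leq 1 + \rho_6(\Theta,T)\sqrt{\log\left(6SA(S+U)/\delta'\right)/h}$, and since $\delta \leq 1$ the extra contribution is absorbed into the renamed constant $\rho_T$; alternatively, adopting the paper's ordering keeps the true $T(\theta,\theta') \leq 1$ in the out-of-set terms and avoids the issue entirely. Your remark on the coupling between the random set $\bar{\Theta}_h$ and $\theta^*_h$ is well taken --- the paper glosses over it, and the termwise bound you describe is exactly what makes the argument rigorous and what produces $\delta k$ rather than $\delta$.
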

\begin{proof}
We start by bounding the probability that the next task is $\theta$ as
\begin{align*}
    \prob{\theta^*_{h+1} = \theta} &= \sum_{\theta'\in\Theta} \prob{\theta^*_{h+1} = \theta | \theta^*_{h} = \theta'}\prob{\theta^*_{h} = \theta'} \leq \sum_{\theta'\in\bar{\Theta}_h} \prob{\theta^*_{h+1} = \theta | \theta^*_{h} = \theta'}\prob{\theta^*_{h} = \theta'} + \delta k \\ &\leq \sum_{\theta'\in\bar{\Theta}_h} \prob{\theta^*_{h+1} = \theta | \theta^*_{h} = \theta'} + \delta k = \sum_{\theta'\in\bar{\Theta}_h} T(\theta,\theta') + \delta k,
\end{align*}
where the first inequality follows from the condition on $\bar{\Theta}_h$, the second by bounding the probability by one, and the last equality by definition of $T$. The result follows by applying Lemma \ref{lemma:err-T} and renaming the constants.
\end{proof}

\preelim*
\begin{proof}
We have
\begin{align*}
    \prob{\theta^*_h \notin \tilde{\Theta}_h} \leq \prob{\theta^*_h \notin \tilde{\Theta}_h \wedge \theta^*_{h-1} \in \tilde{\Theta}_{h-1}} + \prob{\theta^*_{h-1} \notin \tilde{\Theta}_{h-1}} \leq \sum_{l=2}^h \prob{\theta^*_l \notin \tilde{\Theta}_l \wedge \theta^*_{l-1} \in \tilde{\Theta}_{l-1}},
\end{align*}
where we applied the first inequality recursively to obtain the second one. We can bound each term in the second sum by
\begin{align*}
    \prob{\theta^*_l \notin \tilde{\Theta}_l \wedge \theta^*_{l-1} \in \tilde{\Theta}_{l-1}} &\leq \prob{\theta^*_l \notin \tilde{\Theta}_l \wedge \theta^*_{l-1} \in \tilde{\Theta}_{l-1} \wedge \theta^*_{l-1} \in \bar{\Theta}_{l-1}} + \prob{\theta^*_{l-1} \notin \bar{\Theta}_{l-1}}\\ &\leq \prob{\theta^*_l \notin \tilde{\Theta}_l \wedge \theta^*_{l-1} \in \tilde{\Theta}_{l-1} \wedge \theta^*_{l-1} \in \bar{\Theta}_{l-1}} + \delta,
\end{align*}
where $\bar{\Theta}_{l-1}$ is the set of active models after the application of Algorithm \ref{alg:policy-transfer} to $\tilde{\Theta}_{l-1}$ and the second inequality follows from the fact this algorithm discards $\theta^*_{l-1}$ with probability at most $1-\delta$. In order to bound the first term,
\begin{align*}
    \prob{\theta^*_l \notin \tilde{\Theta}_l \wedge \theta^*_{l-1} \in \tilde{\Theta}_{l-1} \wedge \theta^*_{l-1} \in \bar{\Theta}_{l-1}} \leq \sum_{\theta \in \Theta} \prob{\theta^*_l = \theta \wedge \theta \notin \tilde{\Theta}_l \wedge \theta^*_{l-1} \in \tilde{\Theta}_{l-1} \wedge \theta^*_{l-1} \in \bar{\Theta}_{l-1}}.
\end{align*}
Let us split this probability based on whether the concentration event on $T$ of Lemma \ref{lemma:err-T} holds or not. Using $\frac{\delta'}{3km^2}$ as confidence value, this probability is trivially bounded by $k\frac{\delta'}{3km^2}$ when the event of Lemma \ref{lemma:err-T} does not hold. Assume this event holds. Then, Lemma \ref{lemma:prob-theta-init} implies that the probability of the next task being $\theta$ is bounded by $\frac{\delta'}{3km^2}$, and the overall sum is bounded by $k\frac{\delta'}{3km^2}$. Notice that we divided $\delta'$ by $3km^2$ w.r.t. the value used in Lemma \ref{lemma:err-T} and Lemma \ref{lemma:prob-theta-init}. Putting these together,
\begin{align*}
    \prob{\theta^*_l \notin \tilde{\Theta}_l \wedge \theta^*_{l-1} \in \tilde{\Theta}_{l-1}} \leq \delta + \frac{2\delta'}{3m^2}.
\end{align*}
Using the union bound,
\begin{align*}
    \prob{\exists h \in [m] : \theta^*_h \notin \tilde{\Theta}_h} \leq \sum_{h=2}^m \prob{\theta^*_h \notin \tilde{\Theta}_h} \leq \sum_{h=2}^m \sum_{l=2}^h \prob{\theta^*_l \notin \tilde{\Theta}_l \wedge \theta^*_{l-1} \in \tilde{\Theta}_{l-1}} \leq \left(\delta + \frac{2\delta'}{3m^2}\right) m^2.
\end{align*}
Therefore, the result holds with probability at least $1-\delta'$ by taking $\delta \leq \frac{\delta'}{3m^2}$.
\end{proof}

\newpage

\section{Additional Details on the Experiments} \label{app:exp}

In this section, we provide all the necessary details to allow reproducibility of our experiments. We also provide further results and ablation studies to better understand how the proposed algorithms behave in different settings and with different parameters.

\subsection{Experiments with PTUM}

For these experiments we used a variant of the $4$-rooms domain of \citet{sutton1999between}. Here the agent navigates a grid with two rooms divided by a vertical wall in the middle and connected by a door. There are four actions (up, down, right, left) and there is a probability of $0.1$ that the action chosen by the agent fails and a random one is taken instead. The agent always starts in the bottom-left corner and must reach certain goal states. The reward function (e.g., the goal locations) and the size of the grid vary in each experiment and will be specified later.

\paragraph*{Experiments of Figure \ref{fig:known-models} \textit{left}}
The agent acts in a 12x12 grid and it receives reward of 1 when reaching a goal state, and 0 otherwise. There are twelve possible tasks with goals and doors in different positions and whose models are fully known to the agent. The parameters for running PTUM are: $\gamma=0.99$, $\delta=0.01$, $\epsilon=0.1$, and $n=100000$. Both Rmax and MaxQInit switch state-action pairs from unknown to known after 10 samples. Online algorithms without a generative model are run for $100$ episodes of $100$ steps each. Since goal states are modelled as absorbing states with reward 1, these algorithms are able to retrieve significant information in each episode to make the comparison with PTUM (which uses a generative model) fair. In fact, it suffices to reach a goal to get information about it. For this reason, each sample retrieved by PTUM from the generative model is considered as a single step in an episode.

\paragraph*{Experiments of Figure \ref{fig:known-models} \textit{right}}
Here the agent acts in a 12x12 grid without any wall, and has to discriminate among $7$ possible tasks. Each task has $7$ goals whose position is shared. However, given a goal state, reward values can differ between tasks.  To be more precise, goals are placed in the corners or nearby them. The true task has the optimal goal giving a reward of 0.8, while all the other tasks have a different best-goal with a reward of 0.81.
We set $\epsilon=1$, $\gamma=0.9999$, $\delta=0.01$, and $n=100000$. Both RMax and MaxQInit switch state-action pairs from unknown to known after 240 samples. The online algorithms without a generative model are run for 1000 episodes of 100 steps each. Due to the fact that goal states are modeled as absorbing state with reward 0 and, since all tasks have goals in the same position, online algorithms are able to get at most one informative sample in each episode (i.e., one with non-zero reward). For this reason, each sample retrieved by PTUM from the generative model is reported as an entire episode in the plots, which makes comparison fair.

It is important to note that in both these experiments the parameter needed by Rmax to switch between unknown and known state-action pairs (which directly affects the learning speed) is set way below the one recommended by the theory (see \citet{brafman2002r}).

\paragraph*{Additional results}

In order to verify the main theoretical results from the sample-complexity analysis of PTUM, we report ablation studies for its main parameters. 

\begin{figure*}[t]
\centering
\includegraphics[height=4cm]{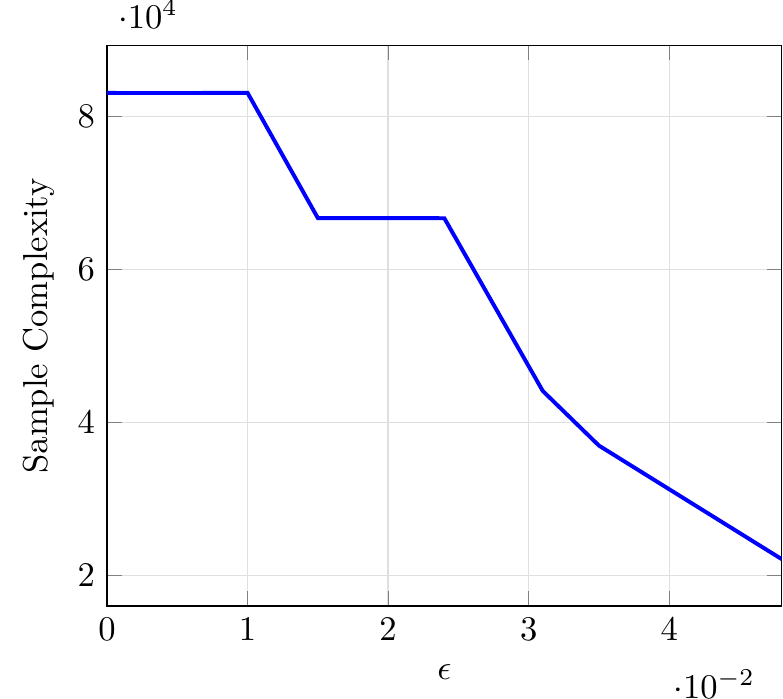}
\includegraphics[height=4cm]{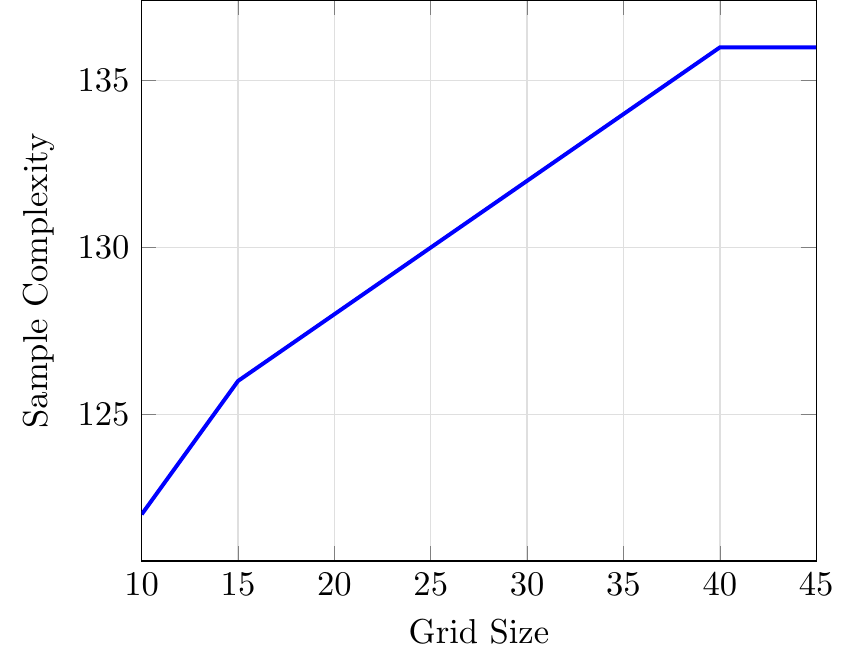}
\includegraphics[height=4cm]{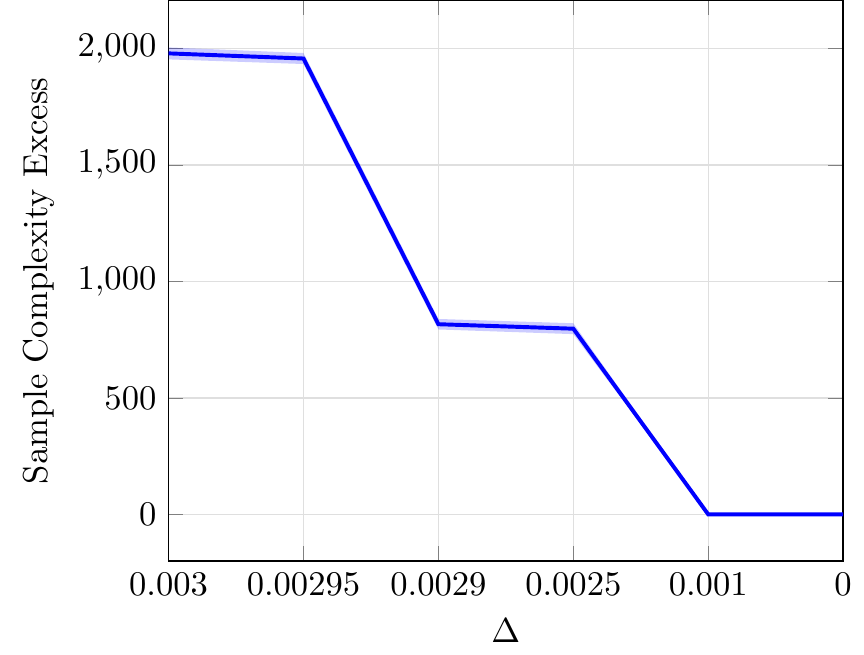}
\caption{Ablation studies for the main parameters of PTUM. (left) The sample complexity is a piece-wise constant and bounded  function of $\epsilon$. (middle) The sample complexity grows logarithmically with the grid size (i.e., the number of states). (right) The excess in sample complexity (w.r.t. the one with perfect models) decreases with the approximation error $\Delta$.}
\label{fig:ptum-ablation}
\end{figure*}

\paragraph*{Sample complexity vs accuracy}
The setting in this case is a grid of dimension 12x12 where a reward of 1 is given when reaching a goal state and 0 otherwise. 
The agent has knowledge of 12 possible perfect models, each of which has a single goal state located in the opposite corner from the starting one. The door position is different in each task. We set $\gamma$ to 0.99, $\delta$ to 0.01, $n$ to $1000000$, and test our algorithm for different values of $\epsilon$. 

Figure \ref{fig:ptum-ablation}(\textit{left}) shows how the sample complexity changes as a function of $\epsilon$. First, we notice that the function is piece-wise constant, which is a direct consequence of the finite set of models. In fact, varying $\epsilon$ changes the set of models $\Theta_\epsilon$ that must be discarded by the algorithm before stopping. Second, we notice that the function is bounded in $\epsilon$, i.e., we are allowed to set $\epsilon=0$ and the algorithm returns an optimal policy after discarding all the models different from the true one.

\paragraph*{Sample complexity vs number of states}
In this experiment, the agent faces grids of increasing sizes. The agent obtains reward 1 when ending up in a goal state and 0 otherwise. We consider three known models, each with the goal state in a corner different from the starting one and a different door position. We set $\epsilon$ to 0.0001, $\gamma$ to $0.99$, $\delta$ to $0.01$, and $n$ to $100000$. 

Figure \ref{fig:ptum-ablation}(\textit{middle}) shows how the sample complexity changes when the grid size (i.e., the number of states) increases. As expected, we obtain a logarithmic growth due to the union bounds used to form the confidence sets. As before, the function is piece-wise due to the finite number of models.

\paragraph*{Sample complexity vs model error}
A grid of fixed size 6x6 is considered. The agent has knowledge of 6 models, each of which has the goal state placed in the opposite corner w.r.t. the starting one. All models have reward 1 when reaching the goal state and 0 otherwise. Each task differs from the others in the door position. We set $\epsilon$ to $0.13$, $\gamma$ to $0.9$, $\delta$ to $0.1$, and $n$ to $1000000$. Here we study the sample complexity of PTUM with the exact set of models when varying the maximum uncertainty $\Delta$.

Figure \ref{fig:ptum-ablation}(\textit{right}) shows the excess in sample complexity w.r.t. the case with perfect models when the bound $\Delta$ on the approximation error decreases. Once again, we obtain a piece-wise constant function since, similarly to $\epsilon$, a higher error bound $\Delta$ changes the set of models that must be discarded by the algorithm and hence its sample complexity. Notably, we do not require $\Delta = 0$ to recover the "oracle" sample complexity.

\subsection{Sequential Transfer Experiments}

For these experiments we consider a grid-world similar to the objectworld domain of \citet{levine2011nonlinear}. Here we have an agent
navigating a $5\times 5$ grid where each cell is either empty or contains one item (among a set of possible items). Each item is associated to a different
reward value and can be picked up by the agent when performing a specific action (among up, down, left, right) in the state containing the
item. To keep the problem simple and Markovian, we suppose items immediately re-spawn after being picked up. That is, the agent can pick up
the same item indefinitely as far as it manages to return to the state containing it. 
The transition dynamics include a certain probability of failing the action as in the previous experiments. To
be more precise, in this setting failing actions regards exclusively the transition to other states, while the intended item is picked up with
another fixed probability. The agent sequentially faces $8$ different tasks as follows. Suppose the tasks are ordered in a list. Then, given the current task, with high probability the next task to be faced is the successor in the list, while there is a small probability of skipping one task and going two steps ahead or staying in the same task. The values of these probabilities (rewards, transitions, and task-transitions) will be specified in each experiment. For what concerns the estimation of the models, RTP is run with $100$ restarts for $100$ iterations. We use low-rank singular value decomposition to pre-process the empirical moments estimated by Algorithm \ref{alg:hmm-learning} as explained by \citet{anandkumar2014tensor}. This, empirically, seems to be somehow critical for the stability of the algorithm and for its computational efficiency. Regarding this last point, we note that after SVD is applied, Algorithm \ref{alg:hmm-learning} only needs to decompose a tensor of size $k\times k \times k$, where $k$ is the number of models ($k=8$ in this case).

\paragraph*{Experiments of Figure \ref{fig:sequential}}
As mentioned in the main paper, in this experiment only the reward changes between tasks, while the transition dynamics are fixed and have a failure probability of $0.1$. The failure probability of rewards is instead fixed to 0.012 between tasks. The possible objects that the agent can get have the following rewards: $\{0, 0.02, 0.04, 0.2, 0.22, 0.24, 0.5, 0.52, 0.54, 0.96, 0.98, 1\}$. Tasks are generated in the following way: first, task with indexes in $\{0, 2, 3, 5\}$ are randomly generated using only objects with values $\{0, 0.2, 0.5, 0.96\}$. Then tasks with index 1 and 7 are generated starting from model 0, in order for them to be $\epsilon$-optimal with it. The same is done for task with index 4 and 5 w.r.t. task with index 5. In particular, these modifications are carried out randomly, using objects with a slightly higher reward w.h.p. in each state-action pair. A small probability of using objects with smaller rewards is also present. We used this task-generation process in order to guarantee that there exist tasks which are hard to distinguish by PTUM, which makes the problem more challenging. If we simply generated tasks randomly as described before (without creating any ``similar" copies), the identification problem would become almost trivial even in the presence of estimated models.

The task-transitions succeed to the natural next task in the list with probability $0.97$, and fail in favor of the two adjacent tasks with probability $0.015$ each. A no-transfer uniform-sampling strategy is blindly run for $300$ tasks, querying $50$ samples per each state-action pair. This is done to make sure that the estimated models eventually become accurate enough to make PTUM enter the transfer mode. Once this $300$
tasks are over, the transfer begins and goes on for $100$ tasks. Once the model has been identified, a post-sample of $30$ queries is run in
each state-action pair in order to obtain a minimum accuracy level in the empirical models. When the identification of PTUM fails (i.e., the algorithm exceeds the budget $n$), no-transfer uniform sampling is run, and its complexity is reported in the plots.

For the computation of the model inaccuracy bounds (as prescribed by Theorem \ref{th:rtp-err-models}) we set $\rho = 0.135$ for all models. We chose this value so that the inaccuracies after the start-up phase are small enough to make the algorithm enter the transfer mode. Since we noticed that the estimated models become accurate rather quickly, we further decided to decay $\rho$ for the first $100$ tasks when PTUM enters the transfer mode from its initial value to $0.006$. This allows us to plot a faster and more clear transitory in the sample complexity, but in principle this step could be ignored as the model inaccuracies naturally decay by Theorem \ref{th:rtp-err-models}. For what concerns the computation of the initial sets of models for PTUM from the estimated task-transition matrix $\hat{T}$, we use the technique described in Theorem \ref{th:pre-elim} with $\eta = 0.087$ and $\rho_T = 0.001$. These values were chosen so that the algorithm is likely not to discard models whose predicted probability is above $0.005$. To add further robustness, we always keep the top-$3$ most probable next models (according to $\hat{T}$), even if they would be eliminated by the previous condition.

At each step, the agent is required to find a $\epsilon$-optimal policy with $\epsilon = 0.5$ and $\delta=0.01$. The discount factor $\gamma$ is set to 0.9. The expected returns of Figure \ref{fig:sequential} \textit{right} are estimated by running the obtained policies for $30$ episodes of $10$ steps each and averaging the results. Finally, the update constant for MaxQInit is fixed to 100.

\paragraph*{Changing both rewards and transition probabilities}

Since in our original domain only the positions of the items (i.e., the rewards) change across tasks, here we consider a variant where the
transition probabilities change as well. We now generate tasks as follows. The failure probability of the actions changes as well (ranging from
$0.1$ up to $0.45$), and we randomize the rewards as explained before. Again, tasks are generated to be $\epsilon$-optimal as described above,
and all the other parameters are set in the same way, except for the uniform sampling costant: no-transfer iterations query the generative
model for $150$ samples in each state-action pair, while the post-sampling (after identification) takes $100$ samples.

The results, shown in Figure \ref{fig:sequential-all}, are coherent with the ones of Figure \ref{fig:sequential} for the case where only rewards change. This is an interesting result since the number of parameters that must be estimated by the spectral learning algorithm doubled.

\begin{figure*}[t]
\centering
\includegraphics[height=4cm]{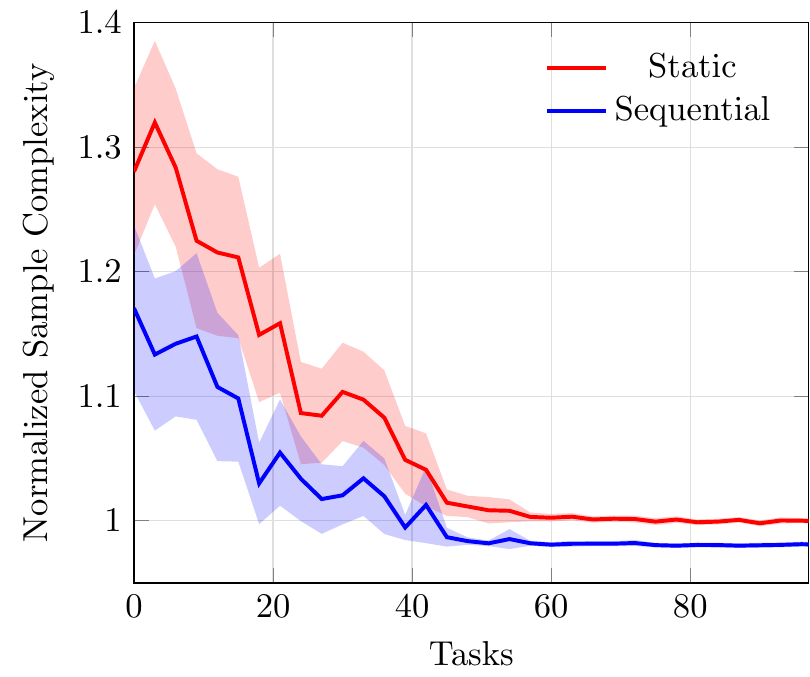}
\includegraphics[height=4cm]{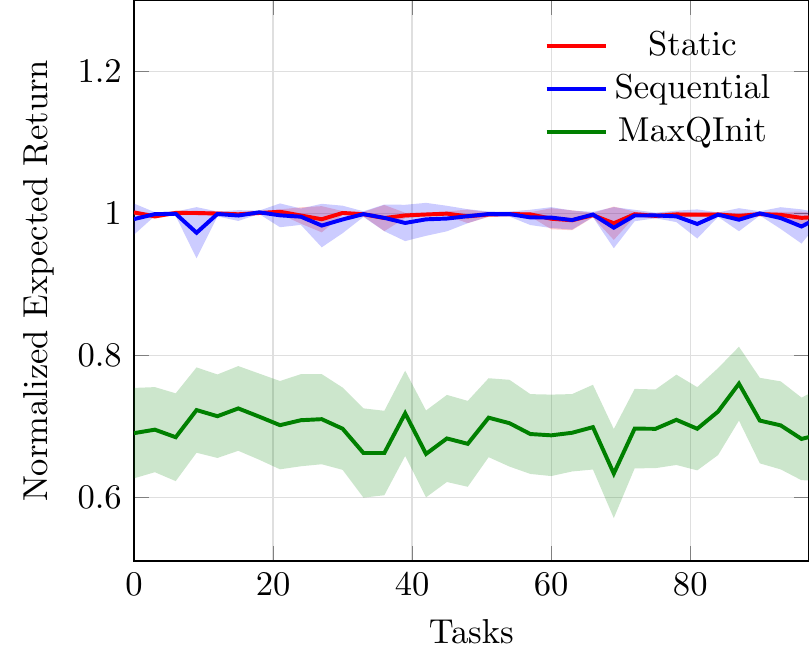}
\caption{Sequential transfer experiment when both rewards and transition probabilities change across tasks. (left) The sample complexity normalized by the one of PTUM with known models. (right) The expected return normalized by the optimal one for each specific task.}
\label{fig:sequential-all}
\end{figure*}

\paragraph*{Increasing the MDP stochasticity}

In the experiments reported above, both the rewards and the transition probabilities are almost deterministic. We now repeat these experiments
by increasing the stochasticity in both components. For what concerns the rewards, the possible objects are $\{0, 0.02, 0.2, 0.22, 0.5, 0.52,
0.96, 1\}$, and they are all used when generating the tasks. Moreover, the fact of generating $\epsilon$-optimal tasks is dropped here. The
failure probability of the reward is set to $0.3$. The transition failure probability is instead the same of the previous experiment, ranging from $0.1$ to $0.45$, depending on the task. Due to the higher stochasticity, no-transfer uniform sampling requires $250$ samples per each state-action pair, while transfer
post-sampling is set to $200$. All the other parameters are kept the same. 

Figure \ref{fig:sequential-stoch}(\textit{left}) shows the results. Once again, we have the sequential transfer algorithm outperforming its static counterpart. Compared to the previous experiments, here we notice only a small improvement of the sample complexity as a function of the number of tasks (i.e., rather flat curves). This is mostly due to the fact that now it is harder to have tasks that are very close to each other. This implies that the set of models that need to be discarded by PTUM remains roughly the same after the algorithm starts entering the transfer mode, and thus improvements in the model accuracy only lead to small improvements in the sample complexity.

\begin{figure*}[t]
\centering
\includegraphics[height=4cm]{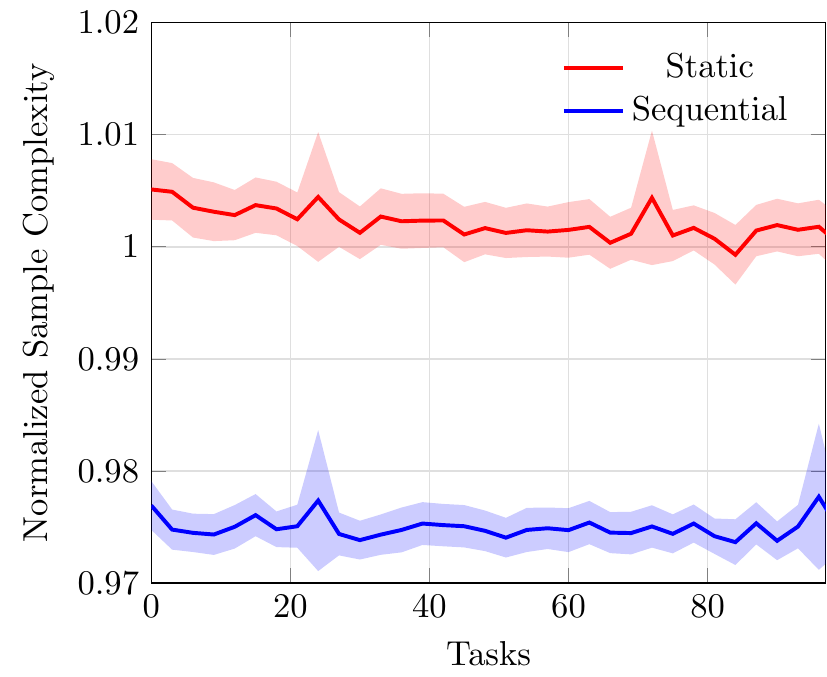}
\includegraphics[height=4cm]{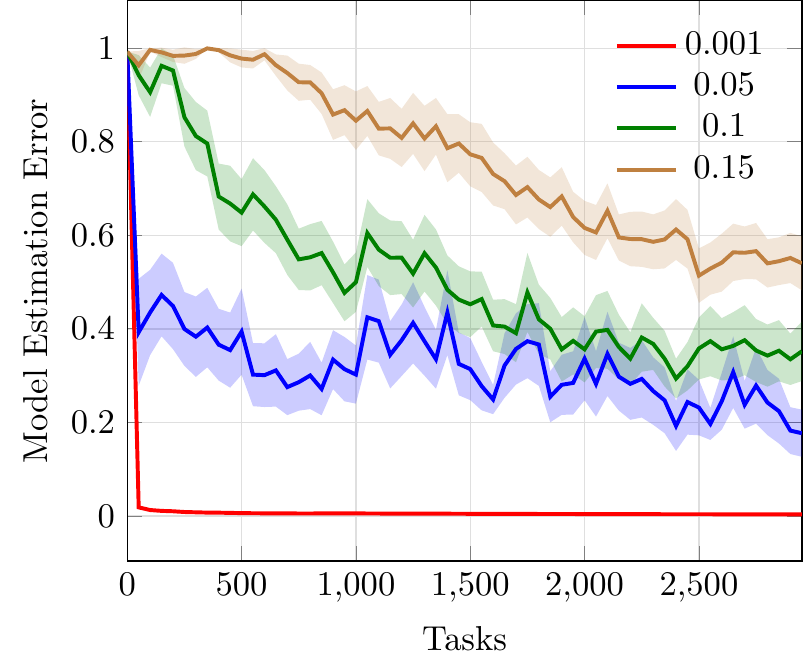}
\caption{(left) Sequential transfer experiment with higher reward and transition stochasticity. (right) Sequential transfer experiment with higher stochasticy in the transition between tasks. Each curve corresponds to a different value of failure probability in the task-transition matrix.}
\label{fig:sequential-stoch}
\end{figure*}

\paragraph*{Increasing the task-transition stochasticity}

In the previous experiments the transitions between tasks are almost deterministic. The stochasticity in these components seems to be
the most critical parameter for what concerns the number of tasks needed by RTP to guarantee accurate estimations. For this purpose, we now show
the $l_\infty$ error of the estimated MDP models (with respect to the true ones) for different levels of stochasticity in the task-transition probabilities. Here we fix $10$ different tasks, generated by randomizing the items as explained before, and run the RTP algorithm sequentially to obtain the error estimates.

Figure \ref{fig:sequential-stoch}(\textit{right}) shows the results. Here we clipped the maximum error to 1 for better visualization. This is anyway reasonable since the algorithm estimates probabilities and we could alternatively normalize the estimates. We note that the estimation error decays as expected for all values of stochasticity. The algorithm requires, however, many more tasks to get accurate estimates when the stochasticity increases.

\end{document}